\theoremstyle{plain}
\newtheorem{theorem}{Theorem}[section]
\newtheorem{lemma}[theorem]{Lemma}
\newtheorem{corollary}[theorem]{Corollary}
\theoremstyle{definition}
\newtheorem{assumption}[theorem]{Assumption}
\theoremstyle{remark}
\newtheorem{remark}[theorem]{Remark}
\icmltitlerunning{Learning the Target Network in Function Space}
\begin{document}

\twocolumn[

\icmltitle{Learning the Target Network in Function Space}



\icmlsetsymbol{equal}{*}

\begin{icmlauthorlist}
\icmlauthor{Kavosh Asadi}{equal,amzn}
\icmlauthor{Yao Liu}{equal,amzn}
\icmlauthor{Shoham Sabach}{equal,amzn,tech}
\icmlauthor{Ming Yin}{equal,prince}
\icmlauthor{Rasool Fakoor}{amzn}
\end{icmlauthorlist}

\icmlaffiliation{amzn}{Amazon}
\icmlaffiliation{tech}{Technion}
\icmlaffiliation{prince}{Princeton University}

\icmlcorrespondingauthor{Kavosh Asadi}{kavosh@alumni.brown.edu}

\icmlcorrespondingauthor{Ming Yin}{my0049@princeton.edu}

\icmlkeywords{Machine Learning, ICML}

\vskip 0.3in
]



\printAffiliationsAndNotice{\icmlEqualContribution} 

\begin{abstract}
We focus on the task of learning the value function in the reinforcement learning (RL) setting. This task is often solved by updating a pair of online and target networks while ensuring that the parameters of these two networks are equivalent. We propose Lookahead-Replicate (LR), a new value-function approximation algorithm that is agnostic to this parameter-space equivalence. Instead, the LR algorithm is designed to maintain an equivalence between the two networks in the function space. This value-based equivalence is obtained by employing a new target-network update. We show that LR leads to a convergent behavior in learning the value function. We also present empirical results demonstrating that LR-based target-network updates significantly improve deep RL on the Atari benchmark.
\end{abstract}

\section{Introduction}
Learning an accurate value function is a core competency of reinforcement learning (RL) agents. The value function is the main ingredient in numerous RL approaches such as TD~\cite{TD}, Q-learning~\cite{Qlearning}, and DQN~\cite{DQN}. It is also a key ingredient in some of the policy-based approaches to RL, such as actor-critic~\cite{barto1983neuronlike}, and even in some model-based RL approaches, for example the Dyna Architecture~\cite{sutton1991dyna}. Therefore, obtaining a deeper understanding of the value-function approximation setting allows us to design better learning algorithms, and ultimately improve the effectiveness of RL agents in practice.

A key property of the value function $v$ is that it corresponds to the fixed point of the Bellman operator $\mathcal{T}$ (defined in Section~\ref{sec:pre}), meaning that $v = \mathcal{T}v$. In RL problems with large state spaces we leverage function approximation to learn the value function. More specifically, in this approximate setting we select a certain hypothesis class (such as neural networks) to represent an approximate value function $v_\theta$ parameterized by $\theta$~\cite{RLbook}. The value-function approximation problem is then formulated as the problem of finding a parameter $\theta$ whose corresponding value function $v_{\theta}$ solves the Bellman equation $v_{\theta} = \mathcal{T} v_{\theta}$.


When solving this equation it is typical to maintain two different parameters, denoted by $(\theta,w)$, akin to the use of target and online networks in DQN and its successors~\cite{DQN}. A deeper examination reveals that these algorithms are designed to find a pair $(\theta,w)$ that jointly satisfy two constraints. First, there is a function-space constraint, namely that the value function solves the Bellman equation $v_{w} = \mathcal{T} v_{\theta}$. Additionally there is a second constraint, this time in the parameter space, that the value functions on two sides of the Bellman equation are parameterized using exactly the same parameters $\theta = w$. By satisfying this constraint, we guarantee that we ultimately learn a single value function. But, is there a more direct formulation to ensure that we solve the Bellman equation and we also learn a single value function?

We answer this question affirmatively by reformulating the value function approximation problem. Specifically, we substitute the parameter-space constraint in the original formulation with a function-space constraint. In addition to solving the Bellman equation, $v_{w} = \mathcal{T} v_{\theta}$, we impose a second function-space constraint $v_{\theta} = v_{w}$. As a consequence of this reformulation, the value function could be parameterized using a pair of parameters $(\theta, w)$ where $\theta$ is not necessarily equal to $w$. Suppose that we have found two parameters $\theta \neq w$ whose corresponding value functions are equivalent $v_{\theta} = v_{w}$. Finding such a pair of parameters is doable especially in the overparameterized setting. Moreover, suppose that this unique value function is the fixed-point of the Bellman Operator $v_{w} = \mathcal{T} v_{\theta}$. We argue that $(\theta , w)$ together constitute a perfectly valid solution to the original problem, but this pair is excluded by existing algorithms only because $\theta \neq w$.

Moving into our algorithmic contribution, we develop a novel and practical RL algorithm that solves the reformulated problem. The new algorithm, referred to as Lookahead-Replicate (LR), is comprised of two alternating operations. The Lookahead step is designed to handle the first constraint, $v_{w} = \mathcal{T} v_{\theta}$. It employs the bootstrapping technique which is best embodied by Temporal-Difference (TD) learning~\cite{TD}. Meanwhile, the Replicate step handles the second constraint $v_\theta = v_w$ by minimizing the mean-squared error between the two value functions $v_\theta$ and $v_w$. Our Replicate step stands in contrast to the prevalent target-synchronization step in deep RL, which proceeds by copying (duplicating rather than replicating) the latest online network by using either hard frequency-based~\cite{DQN} or soft Polyak-based~\cite{LillicrapDDPG2016} updates. We present theoretical results demonstrating that the algorithm converges to a pair of parameters $(\theta,w)$ that satisfies the two constraint jointly.

Inspired by the LR algorithm, we finally augment the target-update in Rainbow~\cite{rainbow}. Equipped with this LR-based update, the resultant deep RL agent significantly outperforms the original Rainbow agent on the standard Atari benchmark~\cite{bellemare_atari}. This result provides empirical evidence that solving the reformulated problem is doable, and that doing so can ultimately lead into better performance compared to the existing formulation.

\section{Background}\label{sec:pre}
Reinforcement learning (RL) is the study of agents that use their environmental interaction to find a desirable behavior~\citep{RLbook}. The Markov Decision Process~\citep{PutermanMDP1994}, or MDP, is used to mathematically define the RL problem. An MDP is specified by a tuple $\langle \mathcal{S,A, R,P,\gamma}\rangle$, where $\mathcal{S}$ is the set of states and $\mathcal{A}$ is the set of actions. At a particular timestep $t$, the agent is in a state $S_t$ and takes an action $\mathcal{A}_t$. It then receives a reward signal $R_t$ according to $\mathcal{R:S\times A}\rightarrow\! \mathbb R$ and then transitions into a next state $S_{t+1}$ based on the transition probabilities $\mathcal{P:S \times A \times S\rightarrow}\ [0,1]$. Finally the scalar $\gamma$ discounts rewards that are received in the future.

The agent interacts with the environment by sampling an action from a policy $\pi: \mathcal{S}\rightarrow \mathcal{P(A)}$, a mapping from states to a probability distribution over actions. The agent aims to learn a policy that achieves high rewards in the long term. More formally, the aim is to maximize the expected discounted sum of future rewards, which is referred to as the value function, and is defined as follows: ${v^{\pi}(s)= \EE\big[\sum_{t=0}^{\infty}\gamma^{t} R_t \mid S_0 = s, \pi \big]}$. The value function can be written recursively by leveraging the Bellman operator $\mathcal{T}^{\pi}$, which is defined as follows:
$$\big[\mathcal{T}^{\pi}v\big](s)=\sum_{a\in\mathcal{A}}\pi(a\mid s)\big(\mathcal{R}(s,a)+\gamma\!\sum_{s'\in \mathcal{S}}\! \mathcal{P}(s,a, s') v(s')\big) \ .$$
In problems where $|\mathcal{S}|$ is small a table could be leveraged to represent the value function $v$. In large-scale problems, however, it is undesirable to learn a separate number per state, so we resort to function approximation. In this case, we approximate the value function $v_{\theta}\approx v^{\pi}$ with the parameter $\theta\in\Theta$. In learning $v_{\theta}$, we leverage the Bellman operator which we discuss next.

\section{Revisiting Value-Function Approximation}\label{sec:revistvalue}
In this section we provide an alternative formulation for the value-function approximation task in RL. This task is often formulated as finding a parameter $\theta\in\Theta$ whose corresponding value function $v_{\theta}$ is the fixed-point of the Bellman operator. We specify the set of such desirable parameters as follows:
$$\mathcal{F}_{single} =  \{\theta \in \Theta \mid  v_{\theta} = \mathcal{T}^{\pi} v_{\theta}\}\ .$$
We use the subscript \textit{single} in the set above to accentuate that this task is solved by algorithms that operate in the space $\Theta$. This means that only a single parameter is learned across learning. The two most notable algorithms operating in this space are the original TD algorithm without a delayed target network~\cite{TD}, as well as the residual-gradient algorithm~\cite{residual}.

In contrast, there exists a second class of algorithms in which learning is operated in a lifted space, namely $\Theta \times \Theta$. Chief among these algorithms is TD with a delayed target network which lies at the core of DQN \cite{DQN}, Rainbow \cite{rainbow}, and related deep-RL algorithms. The value-function approximation component of these algorithms could be viewed as finding a pair of parameters in the following set:
$$\mathcal{F}_{pair} =  \{\theta \in \Theta,\ w \in \Theta \mid  v_{w} = \mathcal{T}^{\pi} v_{\theta}\ \textrm{and}\ \theta = w \}\ .$$
The bootstrapping step in these algorithms, which is performed by using Bellman lookahead~\cite{RLbook}, is designed to handle the first constraint $v_{w} = \mathcal{T}^{\pi} v_{\theta}$. Meanwhile, these algorithms also perform a parameter duplication step, $\theta\leftarrow w$, which ensures the second constraint. We connect the two sets $\mathcal{F}_{single}$ and $\mathcal{F}_{pair}$ using our first claim:

\textbf{Claim 1:} $\theta \in \mathcal{F}_{single}$ if and only if $(\theta,\theta)\in \mathcal{F}_{pair}\ $.

Prior work established the benefits of operating in $\mathcal{F}_{pair}$ by maintaining a pair of parameters when learning the value function~\cite{DQN}. However, does the set $\mathcal{F}_{pair}$ capture all valid solutions to the original value-function approximation problem?

To answer this question, we need to understand the solutions that are excluded based on each individual constraint forming $\mathcal{F}_{pair}$. Clearly, $(\theta, w)$ must satisfy $v_{w} = \mathcal{T}^{\pi} v_{\theta}$. However, it is not clear why $(\theta, w)$ must also satisfy the second constraint $\theta = w$. 

Notice that eliminating the constraint $\theta = w$ can violate the notion of learning a single value function. However, forcing the two parameters to be equivalent is an overkill provided that our true goal is to just obtain a single value function. Instead, we propose to achieve this goal by substituting the parameter-based constraint with a direct function-space constraint, namely $v_{\theta} = v_{w}$. This leads us to a new characterization of the solution set:
$$\mathcal{F}_{value} =  \{\theta \in \Theta,\ w \in \Theta \mid  v_{w} =  \mathcal{T}^{\pi} v_{\theta}\ \textrm{and}\ v_\theta = v_w \}\ .$$
Comparing the two sets $\mathcal{F}_{pair}$ and $\mathcal{F}_{value}$, notice that $\mathcal{F}_{value}$ does not necessarily require the two parameters to be equivalent ($\theta = w$), merely that the two parameters should provide a single value function ($v_\theta = v_w$). To further highlight the difference between the two sets, suppose that we have found a pair of parameters $\theta \neq w$ that jointly satisfies $v_{w} =  \mathcal{T}^{\pi} v_{\theta}$ and $v_\theta = v_w$. Then, we arguably have found a perfectly valid solution to the original task of value-function approximation, but such a pair is excluded from $\mathcal{F}_{pair}$ just because $\theta \neq w$. Conversely, any point in $\mathcal{F}_{pair}$ is actually included in $\mathcal{F}_{value}$ as we highlight below. The set $\mathcal{F}_{value}$ is therefore constituting a more inclusive solution characterization for the original value-function approximation task.

\textbf{Claim 2:} $\mathcal{F}_{pair} \subset \mathcal{F}_{value}$.

Combining Claims 1 and 2, we can conclude that if $\theta\in F_{single}$, then $(\theta , \theta) \in F_{value}$. 
Moreover, the set $\mathcal{F}_{pair}$ can exclude a large number of perfectly valid solutions to the original value-function approximation task. We quantify this gap below.

\textbf{Claim 3:} $|\mathcal{F}_{pair}| \leq |\mathcal{F}_{value}| \leq |\mathcal{F}_{pair}|^2$.

The lower bound follows immediately from Claim 2. To prove the upper bound, suppose that only a single value function can satisfy the Bellman Equation. Then, choose (with repetition) any two parameters $\theta$ and $w$ from $\mathcal{F}_{single}$. We know that $v_{\theta} = v_{w}$ because there is only one fixed-point. We also know that $v_{w} = \mathcal{T}^{\pi}v_{\theta}$ since otherwise $\theta$ and $w$ cannot be in $\mathcal{F}_{single}$. Therefore, $(\theta,w)\in\mathcal{F}_{value}$. Now, since we chose with repetition, we have $|\mathcal{F}_{single}|^2$ such pairs of parameters. We conclude the upper bound in light of the fact that $|\mathcal{F}_{pair}|=|\mathcal{F}_{single}|$ due to Claim 1.

\section{Lookahead-Replicate}\label{sec:fvalue}
Having explained the advantage of $\mathcal{F}_{value}$, we now desire to develop a practical algorithm that finds a solution in $\mathcal{F}_{value}$. Our first step is to convert each individual constraint in $\mathcal{F}_{value}$ into a loss function.
A natural choice is to employ the least-square difference between the two sides of the equations, leading us to the pair of loss functions 
$$H(\theta, w) = \norm{v_{w} - \mathcal{T}^{\pi} v_{\theta}}_{D}^{2} \,\, \textrm{and} \,\, G(\theta, w) = \norm{v_{\theta} - v_{w}}_{D}^{2}\ .$$ 
Here, $\norm{x}_{D} = \sqrt{x^{\top}Dx}$ and $D$ is a diagonal matrix with entries $d^{\pi}(s_1),...,d^{\pi}(s_n)$, and $n=|\mathcal{S}|$. An ideal pair $(\theta,w)$ is one that fully optimizes the two losses $H$ and $G$. We now present our algorithm, which we refer to as Lookahead-Replicate, that is designed to exactly achieve this goal.
\begin{algorithm}
    \caption{Lookahead-Replicate (LR)}
    \label{alg:main}
    \begin{algorithmic}[1]
	\STATE {\bfseries Input:}  $\theta^{0} , w^{0} , T, K_{L},K_{R},\alpha,\beta $
	\FOR{$t=0,1,\ldots,T-1$}
	    \STATE $w^{t+1} \leftarrow \textrm{Lookahead}(\theta^{t} , w^{t} , \alpha, K_{L})$
            \STATE $\theta^{t+1}\ \leftarrow \textrm{Replicate}(w^{t+1} , \theta^{t} , \beta,K_{R})$
        \ENDFOR
	\STATE {\bfseries Return} $(\theta^{T}, w^{T})$
    \end{algorithmic}
\end{algorithm}

As the name indicates, Lookahead-Replicate is comprised of two individual operations per each iteration. Starting from the Lookeahead operation, we use the Bellman operator and the target network $\theta$ to lookahead. We then employ multiple ($K_L$) steps of gradient descent to minimize the discrepancy between the resultant target $\mathcal{T}^{\pi}v_{\theta}$ and $v_w$. The Lookahead operation is at the core of numerous existing RL algorithms, such as TD~\cite{TD} and Fitted Value Iteration~\cite{gordon1995stable, ernst2005tree}.
\begin{algorithm}
    \caption*{$\ \textrm{Lookahead}(\theta, w,\alpha, K)$}
    \begin{algorithmic}[1]
        \STATE $w^0 \leftarrow w$
  	\FOR{$k=0,1,\ldots,K-1$}
            \STATE \textrm{compute} $\nabla_{w} H(\theta, w^{k})\! =\! \nabla_{w} \norm{v_{w^{k}}-\mathcal{T}^{\pi}v_{\theta}}_{D}^2$ 
            \STATE $w^{k+1}\leftarrow w^{k} - \alpha\cdot\nabla_w H$
        \ENDFOR
        \STATE {\bfseries return} $w^{K}$
    \end{algorithmic}
\end{algorithm}

\begin{algorithm}
    \caption*{$\ \textrm{Replicate}(w, \theta,\beta, K)$}
    \begin{algorithmic}[1]
        \STATE $\theta^{0} \leftarrow \theta$
        \FOR{$k=0,1,\ldots,K-1$}
        \STATE \textrm{compute} $\nabla_{\theta} G(\theta^k, w)\! =\! \nabla_\theta \norm{ v_{\theta^{k}}-v_{w}}_{D}^2$ 
            \STATE $\theta^{k+1}\leftarrow \theta^{k} - \beta\cdot\nabla_{\theta} F$
        \ENDFOR
        \STATE {\bfseries return} $\theta^{K}$
    \end{algorithmic}
    \label{alg:replicate}
\end{algorithm}

However, the Replicate operation is where we deviate from common RL algorithms. Specifically, the traditional way to update the target network is to simply copy the parameters of the online network into the target network, using either a frequency-based ($\theta\leftarrow w$ every couple of steps) or Polyak-based ($\theta\leftarrow (1-\tau)\theta + \tau w$) updates. Recall that these parameter-based updates are performed to achieve the second constraint in the set $\mathcal{F}_{pair}$, namely $\theta = w$. However, our new solution characterization $\mathcal{F}_{value}$ is agnostic to this parameter equivalence. Our Replicate step is free to find any pair of parameters $(\theta, w)$ so long as $v_{\theta} = v_{w}$. To this end, we use gradient descent to minimize the discrepancy between the value functions provided by the target and the online networks directly in the function space. This could be viewed as replicating, rather than duplicating or copying, the online network.

Note that it is straightforward to extend the LR algorithm to the setting where the agent learns the value function from environmental interactions. In this case, we just estimate the gradients of the two loss functions, $\nabla_{w} H$ and $\nabla_{\theta} G$, from environmental interactions (line 3 in both Lookahead and Replicate).  Similarly, LR can easily be extended to the full control setting by using the Bellman optimality operator. Moreover, we may not necessarily use least-squares loss when performing either the Lookeahed or the Replicate steps. Rather we can employ, for instance, a distributional loss. Indeed in our experiments we present an extension of LR to the online RL setting where we use a distributional loss akin to the C51 algorithm~\cite{bellemare17ac51}. In this sense, similar to TD, LR could be viewed as a fundamental algorithm that can naturally facilitate the integration of many of the existing extensions and techniques that are popular in the RL literature~\cite{rainbow}.

\subsection{An Illustrative Example}\label{sec:toy_exp}
In this section, we provide an illustrative example to visualize the sequence of parameters and value functions found by the LR algorithm during learning. The example serves as a demonstration that the LR algorithm is indeed capable of achieving the value equivalence $v_\theta=v_w$, as desired, but it is agnostic to parameter-equivalence $\theta=w$. More concretely, in this example LR converges to a pair $(\theta, w)\in\mathcal{F}_{value}$ that does not belong to $\mathcal{F}_{pair}$ since $\theta\neq w$.

\begin{figure*}[t!]
  \centering
  \begin{subfigure}[b]{0.45\linewidth}
    \includegraphics[width=1\linewidth]{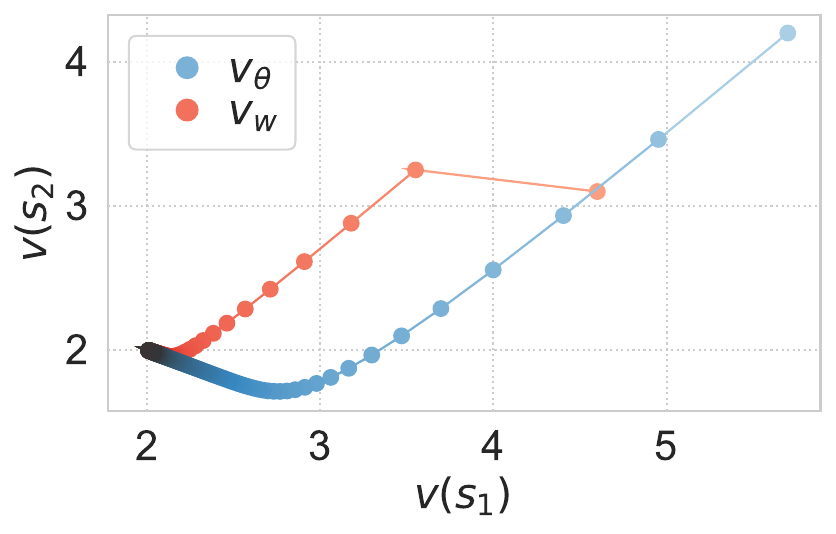}
  \end{subfigure}
  \hspace{1cm}
  \begin{subfigure}[b]{0.33\linewidth}
    \includegraphics[width=1\linewidth]{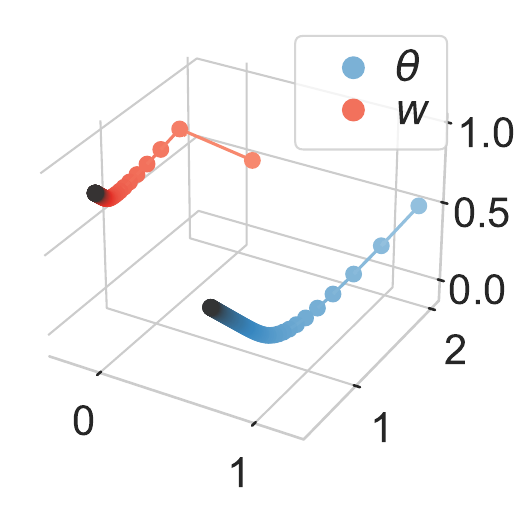}
  \end{subfigure}
  \caption{A sample trial of the LR algorithm on the Markov chain.
  The iterations of $v_\theta$ and $v_w$ in the value space (left), and the iterations of the two parameters $\theta$ and $w$ in the parameter space (right). Notice that LR converges to a pair of points where the value functions are equivalent $v_{\theta} = v_{w}$ despite the fact that $\theta \neq w$.
  }
\label{fig:mini_example}
\end{figure*}

In this simple prediction example (no actions), we just have two states $\mathcal{S}=\{s_1,s_2\}$ and the transition matrix $P=\begin{bmatrix} 0.6&0.4\\0.2& 0.8 \end{bmatrix}$, $\gamma=1/2$, and reward is always 1. The true value is 2 in both states. We construct state feature vectors $\phi(s_1) = [1, 2, 1]^\top, \phi(s_2) = [1, 1, 2]^\top$ and use linear value function approximation, i.e. $v_\theta(s)=\phi(s)^{\top}  \theta$ and $v_w(s)=\phi(s)^{\top}  w$.

In Figure~\ref{fig:mini_example}, we show the actual iterations of $\theta$, $w$ in the parameter space and their corresponding value functions $v_\theta$ and $v_w$ under the LR algorithm. While the two parameters $\theta$ and $w$ converge to different points, their resultant value functions nevertheless converge to the true values: $v(s_1) = v(s_2) = 2$. See Appendix~\ref{sec:appendix-example-ones} for more detail. Also notice that the value functions can even be parameterized differently, that is using two completely separate hypothesis spaces. This stands in contrast to TD-like algorithms where we must use the same hypothesis space for the two networks in light of the parameter duplication step. See Appendix~\ref{sec:appendix-example-twos} for such an example.

\subsection{Convergence Analysis}

In this subsection, we formally prove that the Lookahead-Replicate algorithm converges to a pair of parameters $(\theta,w)\in\mathcal{F}_{value}$. We first state our assumptions, which we later make use of when proving the result.
\begin{assumption}
For all $\theta \in \Theta$, $v_\theta$ is differentiable and Lipschitz-continuous. Formally, given $\theta_1,\theta_2\in\Theta$, we have $\norm{v_{\theta_1}-v_{\theta_2}}\leq \kappa_1 \norm{\theta_1-\theta_2}$ for some $\kappa_1>0$.
\end{assumption}
Following recent work~\citep{asadi2023td}, we also assume that the loss function in the Lookahead step, $H(\theta, w) = \norm{v_{w} - \mathcal{T}^{\pi} v_{\theta}}_{D}^{2}$, satisfies the following three assumptions:
\begin{assumption}\quad \label{assum1}
\begin{enumerate}
    \item For all $\theta_1 , \theta_2$, there exists $F_{\theta} > 0$ such that:     
        \begin{equation*}
            \|\nabla_{w} H(\theta_{1} , w) - \nabla_{w} H(\theta_{2} , w)\| \leq F_{\theta}\|\theta_{1} - \theta_{2}\|\ .   
        \end{equation*}
    \item There exists an $L > 0$ such that:
        \begin{equation*}
            \|\nabla_{w} H(\theta , w_{1}) - \nabla_{w} H(\theta , w_{2})\| \leq L\|w_{1} - w_{2}\|\ .   
        \end{equation*}
    \item The function $H(\theta , w)$ is $F_{w}$-strongly convex in $w$, i.e., for all $w_1 , w_2$ we have
        \begin{align*}
             \big(\nabla_{w} H(\theta, w_{1}) - \nabla_{w} H(\theta, w_{2})\big)^{\top}(w_{1} - w_{2}) \\
             & \hspace{-0.6in} \geq F_{w}\|w_{1} - w_{2}\|^{2}\ .   
        \end{align*}
\end{enumerate}
\end{assumption}
Note that these assumptions are satisfied in the linear case~\cite{lee2019target} and even in some cases beyond the linear setting~\cite{asadi2023td}.

We are now ready to state the main theoretical result of our paper:
\begin{theorem}
\label{thm:interm3_strong-M-temp}
Let $\left\{ (\theta^{t} , w^{t}) \right\}_{t \in \mathbb{N}}$ be a sequence of parameters generated by the Lookahead-Replicate algorithm. Assume $F_w>\max\{F_\theta, 7\kappa_1^2,\frac{4\kappa_1^2}{1-\zeta}\}$. Given appropriate settings of step-sizes $(\alpha,\beta)$, where $\alpha,\beta,\zeta$ explained in the Appendix:
\begin{align*}
    \norm{(\theta^{t+1} , w^{t+1}) - (\theta^\star , w^{\star})} \leq \sigma\norm{(\theta^{t} , w^{t}) - (\theta^\star , w^{\star})},
\end{align*}
for some $\sigma < 1$. In particular, the pair $(\theta^{\star} , w^{\star})\in \mathcal{F}_{value}$.
\end{theorem}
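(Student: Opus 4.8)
I would run a coupled-contraction (Lyapunov) argument on the two scalar error sequences $e^t_\theta := \norm{\theta^t-\theta^\star}$ and $e^t_w := \norm{w^t-w^\star}$, deriving inequalities of the shape $e^{t+1}_w \le \rho_L\, e^t_w + a\, e^t_\theta$ from the Lookahead step and $e^{t+1}_\theta \le \rho_R\, e^t_\theta + b\, e^{t+1}_w$ from the Replicate step, and then showing that the resulting $2\times 2$ nonnegative matrix has operator norm $\sigma<1$ under the stated hypotheses. Before turning to the rate I would identify the limit: if $(\theta^t,w^t)$ converges then at the limit the Lookahead sub-routine leaves $w$ fixed and Replicate leaves $\theta$ fixed, so $\nabla_w H(\theta^\star,w^\star)=0$ and $\nabla_\theta G(\theta^\star,w^\star)=0$. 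Strong convexity of $H(\theta^\star,\cdot)$ makes $w^\star$ its unique minimizer, and in the realizable/overparameterized regime underlying the paper the minimum value is $0$, so $v_{w^\star}=\mathcal{T}^\pi v_{\theta^\star}$; similarly $\nabla_\theta G(\theta^\star,w^\star)=0$ gives $v_{\theta^\star}=v_{w^\star}$. This already yields the theorem's last sentence, $(\theta^\star,w^\star)\in\mathcal{F}_{value}$; what remains is the linear rate.

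\textbf{Lookahead step.} First I would set $\bar w(\theta):=\arg\min_w H(\theta,w)$, which is well defined by the $F_w$-strong convexity of $H$ in $w$, and note $\bar w(\theta^\star)=w^\star$. Since $H(\theta^t,\cdot)$ is $F_w$-strongly convex and $L$-smooth, the $K_L$ inner gradient steps with $\alpha$ in the standard range contract geometrically toward $\bar w(\theta^t)$, i.e. $\norm{w^{t+1}-\bar w(\theta^t)} \le \rho_L\norm{w^t-\bar w(\theta^t)}$ with a factor $\rho_L=\rho_L(\alpha,K_L)<1$ that shrinks as $K_L$ grows. The usual sensitivity bound for strongly convex minimizers, combined with the $F_\theta$-Lipschitzness of $\nabla_w H$ in $\theta$, gives $\norm{\bar w(\theta_1)-\bar w(\theta_2)}\le (F_\theta/F_w)\norm{\theta_1-\theta_2}$, whose coefficient is $<1$ precisely because $F_w>F_\theta$. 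Triangle-inequality bookkeeping with $\norm{w^t-\bar w(\theta^t)}\le e^t_w+(F_\theta/F_w)\,e^t_\theta$ then produces $e^{t+1}_w \le \rho_L\, e^t_w + (1+\rho_L)(F_\theta/F_w)\,e^t_\theta$.

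\textbf{Replicate step (the crux).} Next I would analyze the $K_R$ gradient-descent steps on $G(\theta,w^{t+1})=\norm{v_{\theta}-v_{w^{t+1}}}_D^2$ started at $\theta^t$. Expanding $\norm{\theta^{k+1}-\theta^\star}^2$ for one step $\theta^{k+1}=\theta^k-\beta\nabla_\theta G(\theta^k,w^{t+1})$, the perturbation $\nabla_\theta G(\theta^\star,w^{t+1})-\nabla_\theta G(\theta^\star,w^\star)$ is controlled by $\norm{v_{w^{t+1}}-v_{w^\star}}_D\le \kappa_1\, e^{t+1}_w$ via the Lipschitz assumption on $v_\theta$ --- this is where $\kappa_1$ enters --- whereas the inner-product term $\langle \nabla_\theta G(\theta^k,w^{t+1})-\nabla_\theta G(\theta^\star,w^{t+1}),\,\theta^k-\theta^\star\rangle$ must be lower-bounded by a positive multiple of $\norm{\theta^k-\theta^\star}^2$. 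This last requirement is the single place a bare Lipschitz hypothesis on $v_\theta$ does not suffice: $G$ is genuinely not strongly convex in $\theta$, since by design many parameters share the same value function, so I would invoke the local regularity available in the regimes the paper cares about --- a full-rank feature map in the linear case, or more generally a full-row-rank Jacobian of $\theta\mapsto v_\theta$ near $\theta^\star$, equivalently a local PL-type inequality for $G$. The outcome is $e^{t+1}_\theta \le \rho_R\, e^t_\theta + c\,\kappa_1\, e^{t+1}_w$ with $\rho_R=\rho_R(\beta,K_R)<1$ small for large $K_R$ and $c$ an absolute constant.

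\textbf{Closing the loop.} Finally I would substitute the Lookahead bound into the Replicate bound to obtain an entrywise linear recursion $(e^{t+1}_\theta,e^{t+1}_w)^\top \le M\,(e^t_\theta,e^t_w)^\top$ with $M\ge 0$ assembled from $\rho_L,\rho_R,F_\theta/F_w$ and $\kappa_1$, and then pass to the joint norm $\norm{(\theta^t,w^t)-(\theta^\star,w^\star)}=\sqrt{(e^t_\theta)^2+(e^t_w)^2}$; squaring the two scalar bounds before combining them is exactly what manufactures the $\kappa_1^2/F_w$-type quantities. With $\alpha,\beta$ chosen so that the per-iteration inner contractions $\rho_L,\rho_R$ are small enough (this is what the Appendix packages into $\alpha,\beta$ and the auxiliary constant $\zeta$), and under the hypothesis $F_w>\max\{F_\theta,\,7\kappa_1^2,\,4\kappa_1^2/(1-\zeta)\}$, one gets $\norm{M}\le\sigma<1$, hence $\norm{(\theta^{t+1},w^{t+1})-(\theta^\star,w^\star)}\le\sigma\norm{(\theta^t,w^t)-(\theta^\star,w^\star)}$; the sequence is then Cauchy, converges to $(\theta^\star,w^\star)$, and by continuity of the gradients the stationarity conditions --- hence membership in $\mathcal{F}_{value}$ --- hold at the limit. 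The main obstacle, I expect, is the Replicate-step conditioning just discussed: it is why the hypothesis must pit $F_w$ (Lookahead curvature) against $\kappa_1$ (how fast $v_\theta$ can move), and why one can hope for a genuine parameter-space contraction rather than merely a value-space one.
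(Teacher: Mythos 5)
Your architecture matches the paper's: a squared-norm contraction for the Lookahead block of the form $\|w^{t+1}-w^\star\|^2\le a\|w^t-w^\star\|^2+\eta^2(1-a)\|\theta^t-\theta^\star\|^2$ with $\eta=F_\theta/F_w$ (the paper derives this via strong convexity plus the descent lemma rather than your argmin-sensitivity route, but the two are interchangeable), followed by a perturbed gradient-descent expansion for the Replicate block and a coupling of the two recursions into a single $\sigma<1$. You also correctly isolate the crux: the inner product $(\theta^t-\theta^\star)^\top\nabla v_{\theta^t}D(v_{\theta^t}-v_{\theta^\star})$ must be bounded below by a positive multiple of $\|\theta^t-\theta^\star\|^2$, and $G$ is not strongly convex in $\theta$.

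Where you deviate is in how that coercivity is obtained, and this is a genuine gap relative to the theorem as stated: you import an extra hypothesis (full-row-rank Jacobian of $\theta\mapsto v_\theta$ near $\theta^\star$, i.e.\ a local PL condition for $G$) that appears nowhere in the theorem or in Assumptions 4.1--4.2. The paper needs no such addition because of a recycling trick: since $(\theta^\star,w^\star)\in\mathcal{F}_{value}$ gives $v_{\theta^\star}=v_{w^\star}=\mathcal{T}v_{\theta^\star}$, the gradient $\nabla_\theta G(\theta,w^\star)=2\nabla v_{\theta}D(v_{\theta}-v_{\theta^\star})$ is literally $\nabla_w H(\theta^\star,\cdot)$ evaluated at the point $\theta$ (and $\nabla_w H(\theta^\star,\theta^\star)=0$). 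The $F_w$-strong convexity of $H$ in its \emph{second} argument therefore yields exactly $2(\theta^t-\theta^\star)^\top\nabla v_{\theta^t}D(v_{\theta^t}-v_{\theta^\star})\ge F_w\|\theta^t-\theta^\star\|^2$, which is why the final condition pits $F_w$ against $\kappa_1^2$ rather than introducing a separate conditioning constant for the Replicate loss. (This identification silently uses that $\theta$ and $w$ live in the same parameter space, so $\theta^t$ can be plugged into the $w$-slot of $H$.) Without either this observation or your added rank assumption, your Replicate bound $e_\theta^{t+1}\le\rho_R e_\theta^t+c\kappa_1 e_w^{t+1}$ with $\rho_R<1$ cannot be established, so as written your proof proves a weaker theorem with stronger hypotheses. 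A second, smaller discrepancy: you derive $(\theta^\star,w^\star)\in\mathcal{F}_{value}$ from stationarity of the limit plus realizability, whereas the paper simply posits a point of $\mathcal{F}_{value}$ and proves contraction toward it; your route needs the additional realizability claim ($\min_w H(\theta^\star,w)=0$) to conclude $v_{w^\star}=\mathcal{T}v_{\theta^\star}$ from $\nabla_wH(\theta^\star,w^\star)=0$.
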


Appendix \ref{sec:appx-th-proof} includes a more detailed statement of the theorem as well as the individual lemmas and steps used to prove the theorem.

\begin{corollary}\label{cor:main}
Under the condition of Theorem~\ref{thm:interm3_strong-M-temp}, as $t\rightarrow\infty$,
\[
\norm{v_{\theta^t}-v_{w^t}} \leq \sqrt{2}\kappa_1\sigma^{t-1}\norm{(\theta^{0} , w^{0}) - (\theta^\star , w^{\star})}\rightarrow 0.
\]
In addition, we also have
\[
\norm{v_{w^t}-\mathcal{T}v_{\theta^t}} \leq \sqrt{2}\kappa_1\sigma^{t-1}\norm{(\theta^{0} , w^{0}) - (\theta^\star , w^{\star})}\rightarrow 0.
\]
\end{corollary}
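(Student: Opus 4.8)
The plan is to derive both estimates as immediate consequences of Theorem~\ref{thm:interm3_strong-M-temp}, the $\kappa_1$-Lipschitz continuity of $\theta\mapsto v_\theta$, and the two defining equalities of a point $(\theta^\star,w^\star)\in\mathcal{F}_{value}$, namely $v_{\theta^\star}=v_{w^\star}$ and $v_{w^\star}=\mathcal{T}^{\pi}v_{\theta^\star}$. First I would iterate the one-step contraction supplied by the theorem to obtain $\norm{(\theta^t,w^t)-(\theta^\star,w^\star)}\le\sigma^t\norm{(\theta^0,w^0)-(\theta^\star,w^\star)}\le\sigma^{t-1}\norm{(\theta^0,w^0)-(\theta^\star,w^\star)}$, where the last (deliberately crude) step merely aligns the exponent with the statement and absorbs any off-by-one in when the contraction estimate begins to apply.

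For the first bound I would write $v_{\theta^t}-v_{w^t}=(v_{\theta^t}-v_{\theta^\star})+(v_{w^\star}-v_{w^t})$ using $v_{\theta^\star}=v_{w^\star}$, apply the triangle inequality, and then invoke Lipschitz continuity to bound the right-hand side by $\kappa_1\big(\norm{\theta^t-\theta^\star}+\norm{w^t-w^\star}\big)$. The elementary inequality $a+b\le\sqrt{2}\sqrt{a^2+b^2}$ collapses this to $\sqrt{2}\,\kappa_1\,\norm{(\theta^t,w^t)-(\theta^\star,w^\star)}$, and substituting the iterated contraction bound from the previous paragraph completes this part.

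For the second bound I would similarly write $v_{w^t}-\mathcal{T}^{\pi}v_{\theta^t}=(v_{w^t}-v_{w^\star})-(\mathcal{T}^{\pi}v_{\theta^t}-\mathcal{T}^{\pi}v_{\theta^\star})$ using $v_{w^\star}=\mathcal{T}^{\pi}v_{\theta^\star}$. The first summand is at most $\kappa_1\norm{w^t-w^\star}$ by Lipschitz continuity; for the second I would use that $\mathcal{T}^{\pi}$ is a $\gamma$-contraction in the $D$-weighted norm (with $D$ carrying the entries $d^{\pi}(s_i)$) to bound it by $\gamma\norm{v_{\theta^t}-v_{\theta^\star}}\le\gamma\,\kappa_1\norm{\theta^t-\theta^\star}$. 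Since $\gamma\le 1$, the same $a+b\le\sqrt{2}\sqrt{a^2+b^2}$ step yields $\sqrt{2}\,\kappa_1\,\norm{(\theta^t,w^t)-(\theta^\star,w^\star)}$, and once more we substitute the contraction bound.

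The argument is essentially mechanical, so I do not anticipate a genuine obstacle; the two points that warrant a moment of care are (i) confirming that the unsubscripted norm in the statement is the $D$-weighted norm for which $\mathcal{T}^{\pi}$ is a $\gamma$-contraction --- and noting that if one instead reads it as the sup-norm the contraction still holds, so either interpretation is fine --- and (ii) extracting the sharper constant $\sqrt{2}$ rather than $2$, which is precisely what the inequality $a+b\le\sqrt{2}\sqrt{a^2+b^2}$ provides.
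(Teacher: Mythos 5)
Your proposal is correct and follows essentially the same route as the paper's own proof: the same decompositions $v_{\theta^t}-v_{w^t}=(v_{\theta^t}-v_{\theta^\star})+(v_{w^\star}-v_{w^t})$ and $v_{w^t}-\mathcal{T}^{\pi}v_{\theta^t}=(v_{w^t}-v_{w^\star})-(\mathcal{T}^{\pi}v_{\theta^t}-\mathcal{T}^{\pi}v_{\theta^\star})$, the Lipschitz bound, the $\gamma$-contraction of $\mathcal{T}^{\pi}$ with $\gamma\le 1$, and the elementary inequality $(a+b)^2\le 2(a^2+b^2)$ (your $a+b\le\sqrt{2}\sqrt{a^2+b^2}$ is the same step), followed by iterating the theorem's contraction; your remark about the deliberately loose exponent $\sigma^{t-1}$ matches what the paper does as well.
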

Corollary~\ref{cor:main}, at the value level, shows that $(\theta^t,w^t)$ converges to a point in $\mathcal{F}_{value}$.

\section{Experiments}
\label{sec:exp}
To evaluate LR in a large-scale setting, we now test it on the Atari benchmark~\cite{bellemare_atari}. Our baseline is Rainbow~\cite{rainbow}, which is viewed as a combination of important techniques in value-function approximation. We used the implementation of Rainbow from the Dopamine~\citep{dopamine}, and followed Dopamine's experimental protocol to carry out our experiments.

\begin{figure*}[t!]
  \centering
  \begin{subfigure}[b]{0.33\linewidth}
    \includegraphics[width=1\linewidth]{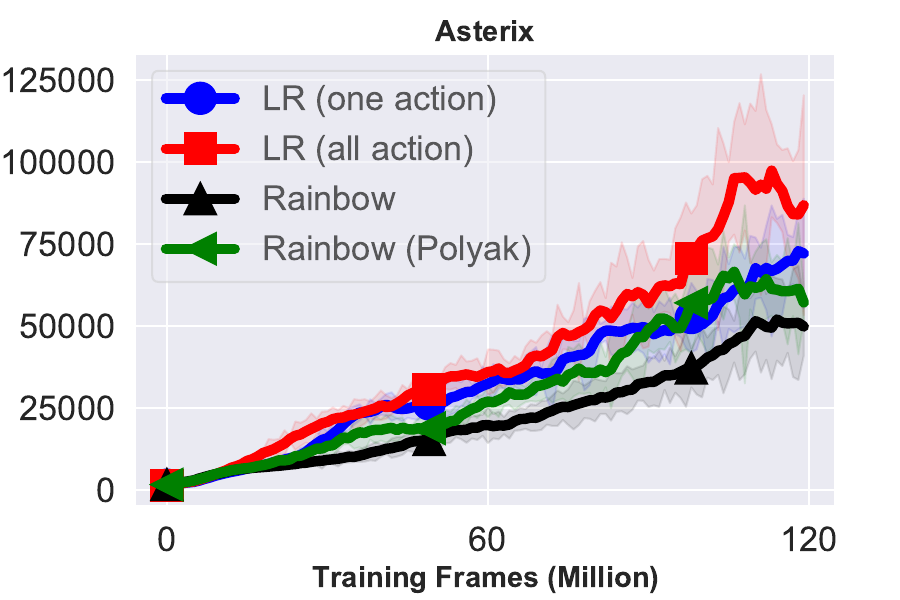}
  \end{subfigure}
  \hfill 
  \begin{subfigure}[b]{0.33\linewidth}
    \includegraphics[width=1\linewidth]{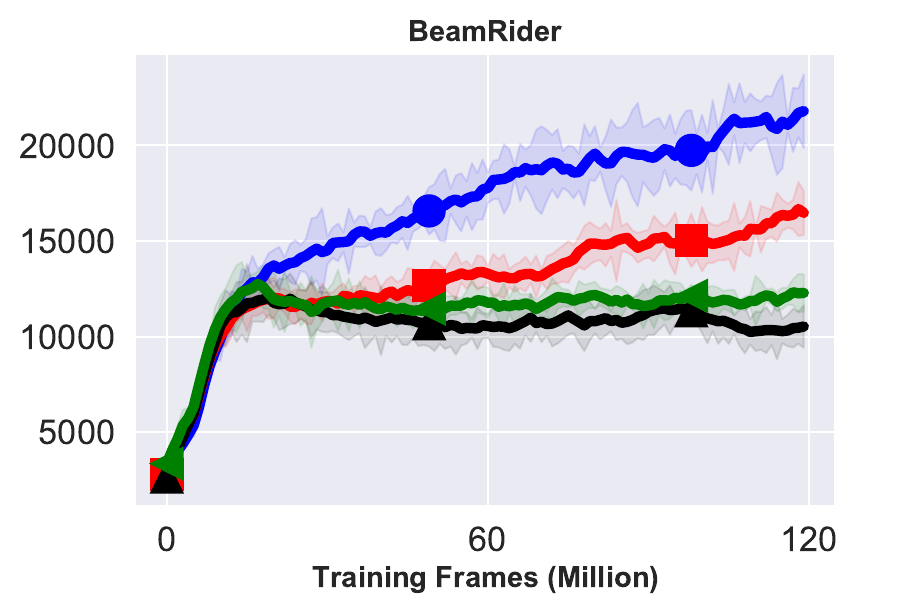}
  \end{subfigure}
  \hfill 
  \begin{subfigure}[b]{0.33\linewidth}
    \includegraphics[width=\linewidth]{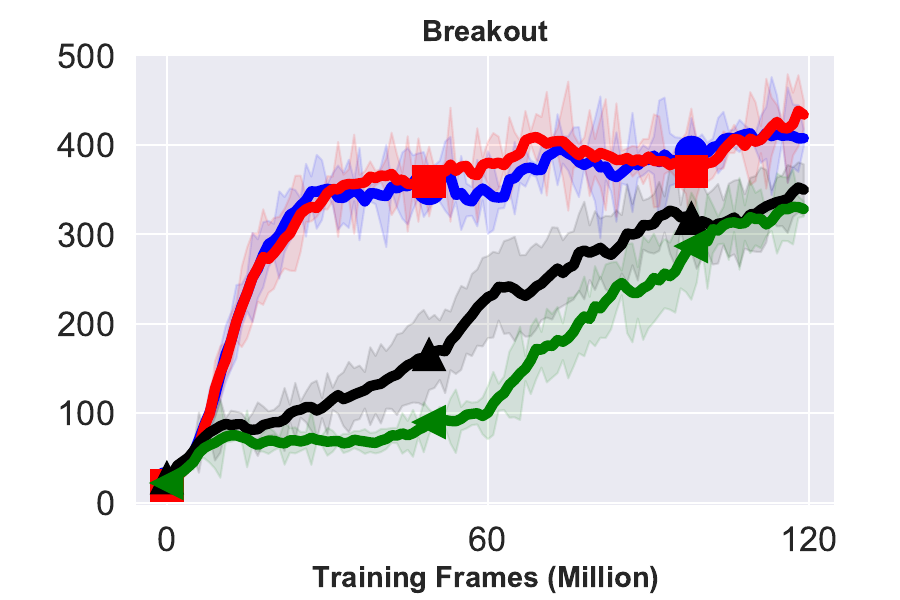}
  \end{subfigure}
  \begin{subfigure}[b]{0.33\linewidth}
    \includegraphics[width=1\linewidth]{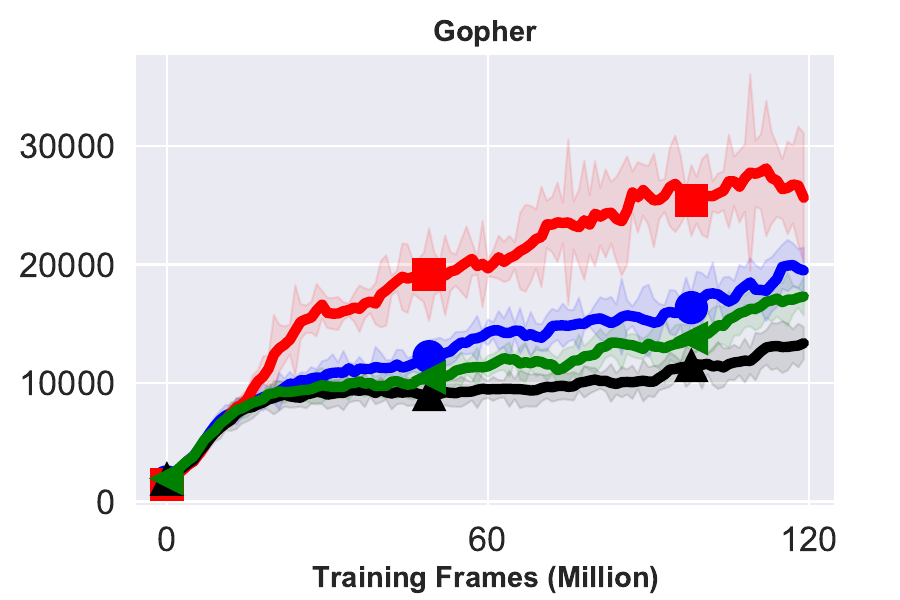}
  \end{subfigure}
  \hfill 
  \begin{subfigure}[b]{0.33\linewidth}
    \includegraphics[width=1\linewidth]{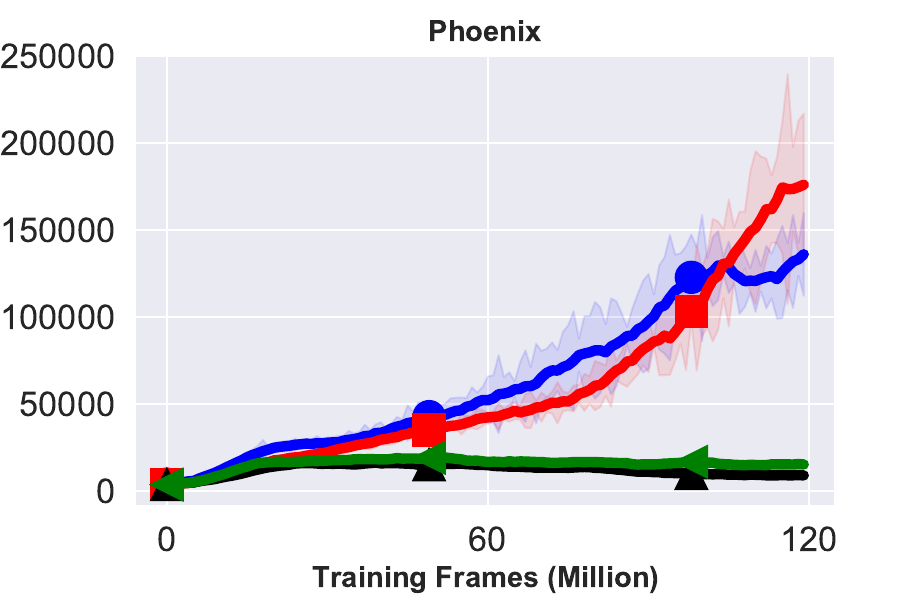}
  \end{subfigure}
  \hfill 
  \begin{subfigure}[b]{0.33\linewidth}
    \includegraphics[width=\linewidth]{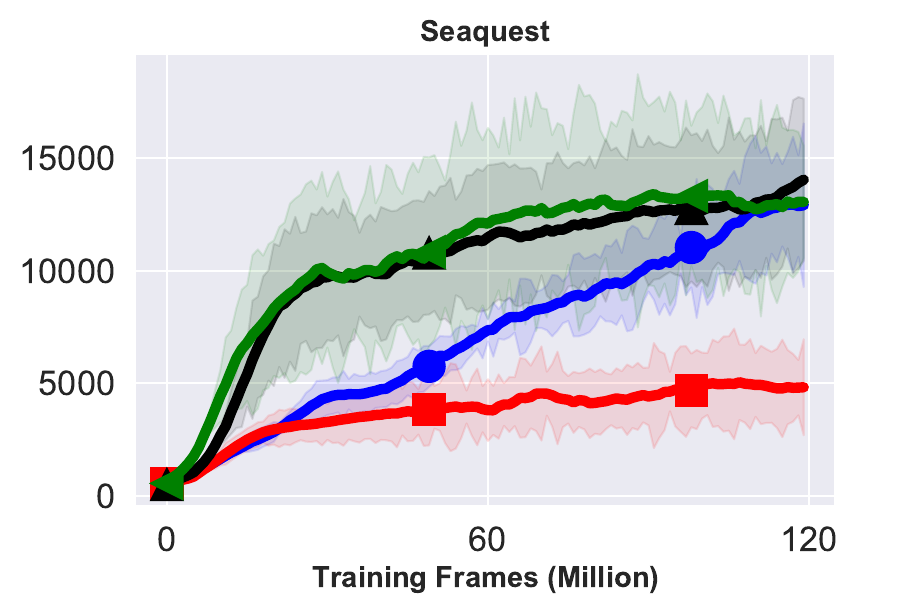}
  \end{subfigure}
  \caption{A comparison between two variations of LR (blue and red) with Rainbow under the default frequency-based (black) and Polyak-based (green) updates. The first variation of LR (blue) performs the Replicate step by sampling states and actions from the replay buffer and minimizing the value difference between the target and online network. The second variation of LR (red) only samples states from the replay buffer and minimizes the value difference for all actions in each sampled state. Results are averaged over 5 random seeds.}
  \label{fig:K800} 
\end{figure*}

We ran Rainbow with two common target-network updates, namely the hard frequency-based update, which is the default choice in Rainbow, as well as the soft Polyak update. We used the default hyperparameters from Dopamine. We reasonably tuned the hyper-parameters associated with each update. We noticed that tuning the frequency parameter has minimal effect on Rainbow's performance. This is consistent with the recent findings of~\citet{asadi2023resetting} who noticed that the performance of Rainbow is flat with respect to the frequency parameter. In the case of Rainbow with Polyak update ($\theta \leftarrow \tau w + (1-\tau)\theta$), we tuned the $\tau$ parameter and found that $\tau=0.005$ performs best.

\begin{figure*}[t!]
  \centering
  \begin{subfigure}[b]{0.33\linewidth}
    \includegraphics[width=1\linewidth]{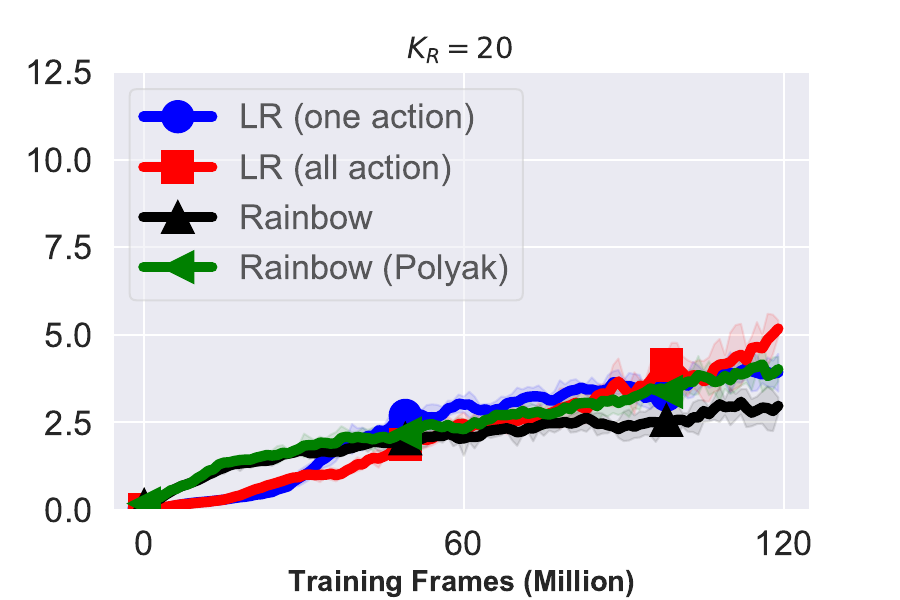}
  \end{subfigure}
  \hfill 
  \begin{subfigure}[b]{0.33\linewidth}
    \includegraphics[width=1\linewidth]{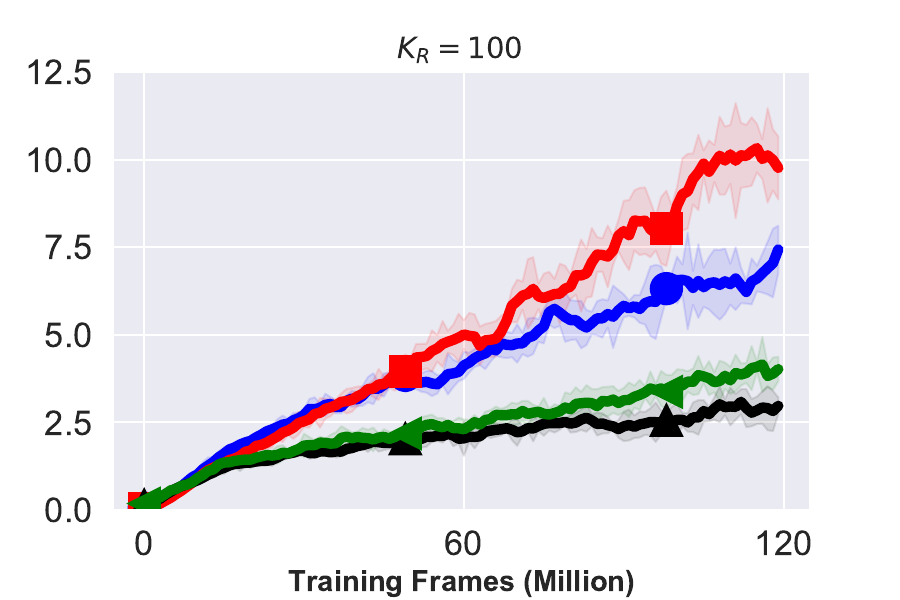}
  \end{subfigure}
  \hfill 
  \begin{subfigure}[b]{0.33\linewidth}
    \includegraphics[width=\linewidth]{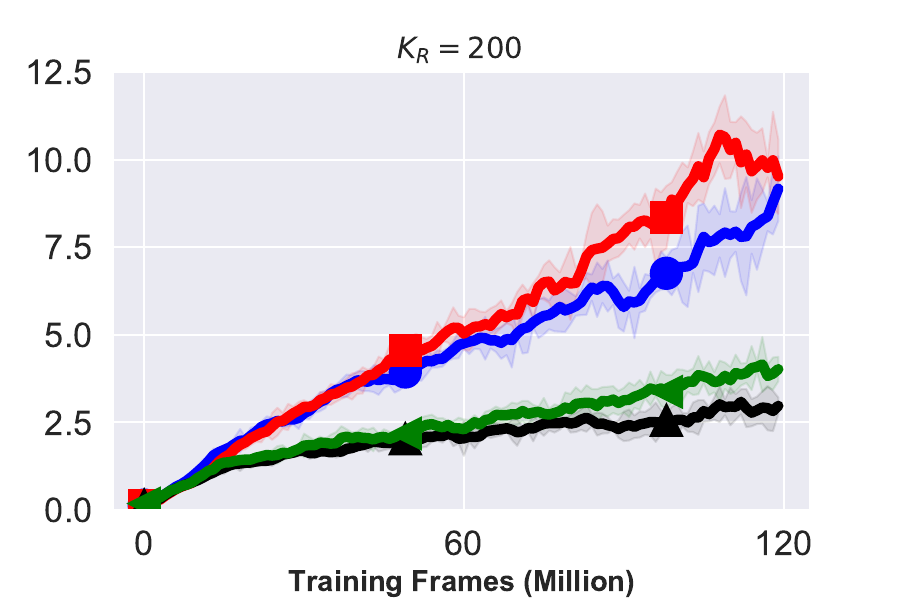}
  \end{subfigure}
  \begin{subfigure}[b]{0.33\linewidth}
    \includegraphics[width=1\linewidth]{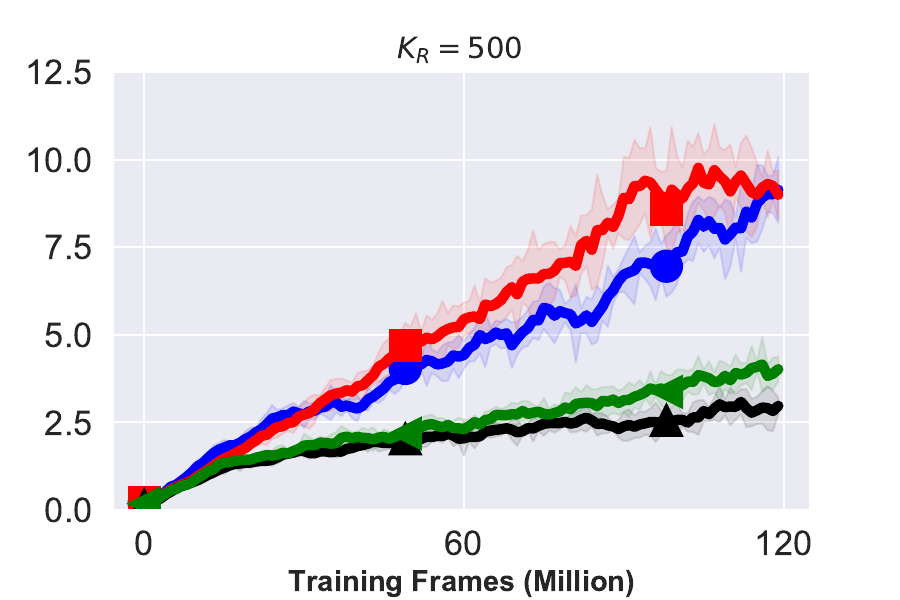}
  \end{subfigure}
  \begin{subfigure}[b]{0.33\linewidth}
    \includegraphics[width=1\linewidth]{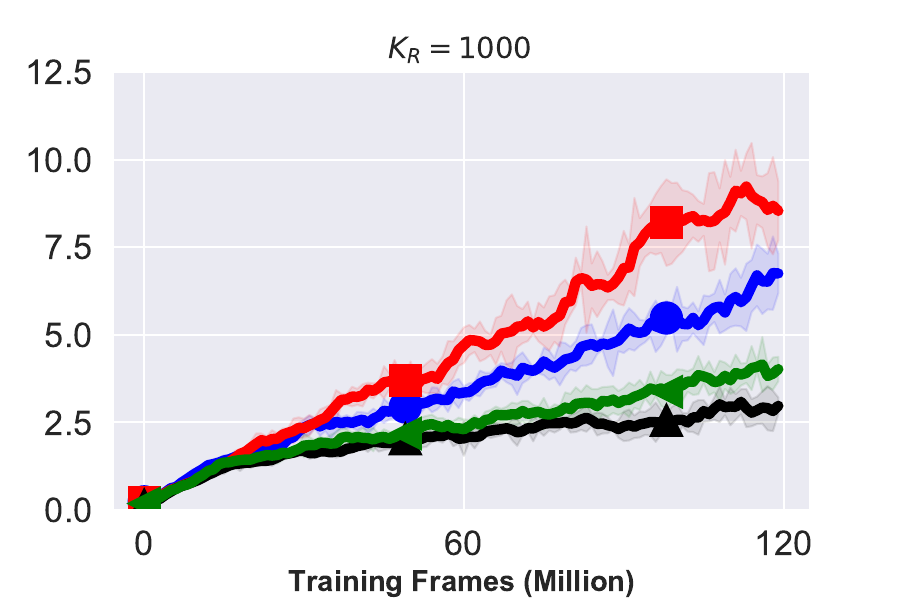}
  \end{subfigure}
  \hfill 
  \caption{A Comparison between Rainbow and LR  with different values of $K_R$ which is the number of gradient updates to the target network before updating the online network. The y-axis is the median of human-normalized performance across the 6 games. We are using 5 random seeds to aggregate the results. Higher is better. }
  \label{fig:many_K} 
\end{figure*}

Moving to LR, we used $K_L=2000$ in the Lookahead operation as the choice of frequency hyper-parameter in Rainbow. Notice that the Replicate step introduces two new hyper-parameters, specifically the number of optimizer updates to the target network $K_R$ and the learning rate of the optimizer itself. In this case, we used the same learning rate as we did for optimizing the online network in Rainbow namely $6.25\times 10^{-5}$ (again, the default choice in the Dopamine). We did not tune this parameter. We also chose the Adam optimizer to update the target network, which is again the default optimizer used to train the online network in the Rainbow implementation from Dopamine. This allowed us to only focus on tuning the $K_R$ parameter. We did not modify other hyper-parameters.

Following the distributional perspective presented by~\citet{bellemare17ac51}, Rainbow approximates the distribution of returns rather than the expected return~\cite{rainbow}. This means that the neural network outputs a distribution of the return instead of a single value. As described by~\citet{bellemare17ac51}, define a set of atoms $z_i$ for $1 \leq i \leq N$, then the state-action value function is represented as follows $q_{\theta}(s,a) = \sum_{i = 1}^{N} z_i \cdot p_i(s,a;\theta)$. Thus, to adapt our approach to this setting, the Replicate operation employs gradient-descent steps to learn a parameter $\theta$ that minimizes the cross-entropy loss between the distributions $p(s,a;\theta)$ and $p(s,a;w)$, i.e., $\min_{\theta} \text{CE}(p(s,a;\theta), p(s,a;w))$. In practice, this minimization is performed on a batch of states and actions sampled from the replay buffer. 

Moreover, two natural choices exist when selecting which state-action pairs to use in our updates. In the first case, we can update $\theta$ by minimizing the loss \text{CE} on state-actions $\langle s,a \rangle$ sampled from the replay buffer. In the second case, we only sample states from the replay buffer $\langle s\rangle$ and then perform the minimization on all actions across the sampled states. These two variations are referred to as \emph{\textbf{one action}} and \emph{\textbf{all action}} variants of LR in our experiments. These results are shown in Figure~\ref{fig:K800}. We present these results on 6 games to keep the number of experiments manageable and later present comprehensive results on all 55 games. We also used 5 random seeds and present confidence intervals.

We can clearly see in Figure~\ref{fig:K800} that the two LR variants are outperforming their Rainbow counterparts on all but one game. Interestingly, we also see that the Polyak update is roughly as effective as the frequency-based update. Moreover, the all-action variant of LR is able to outperform the one-action counterpart. Thus, we will be using this all-action version for the rest of our experiments.


We are also interested in understanding the effect of changing $K_{R}$ on the overall performance of the LR algorithm. To this end, we repeated the previous experiment, exploring a range of different values for this parameter. These results are shown in Figure~\ref{fig:many_K}. To aggregate all games, we first compute the human-normalized score, defined as $
\frac{\textrm{Score}_{\textrm{Agent}} - \textrm{Score}_{\textrm{Random}} }{ \textrm{Score}_{\textrm{Human}}- \textrm{Score}_{\textrm{Random}} }
$, and then compute the median across 6 games akin to previous work~\cite{wang2016dueling}.

To further distill these results, we present the area under the curve for the two variants of Rainbow, and for LR as a function of $K_{R}$. Notice from Figure \ref{fig:aug} that an inverted-U shape manifests itself, meaning that LR with an intermediate value of $K_{R}$ performs best. To explain this phenomenon, notice that performing a tiny number of updates (small $K_{R}$) would mean not changing the target network too much, and therefore not appropriately handling the constraint $v_{\theta} = v_{w}$. On the other extreme, we can fully handle the constraint $v_{\theta} = v_{w}$ by using very large $K_{R}$, but then doing so can have the external effect of significantly violating the other constraint namely $v_{w} = \mathcal{T} v_{\theta}$. Therefore, a trade-off exists, so an intermediate value of $K_{R}$ performs best.
\begin{figure}[h]
\centering\captionsetup[subfigure]{justification=centering,skip=0pt}
\begin{subfigure}[t]{0.4\textwidth} 
\centering 
\includegraphics[width=\textwidth]{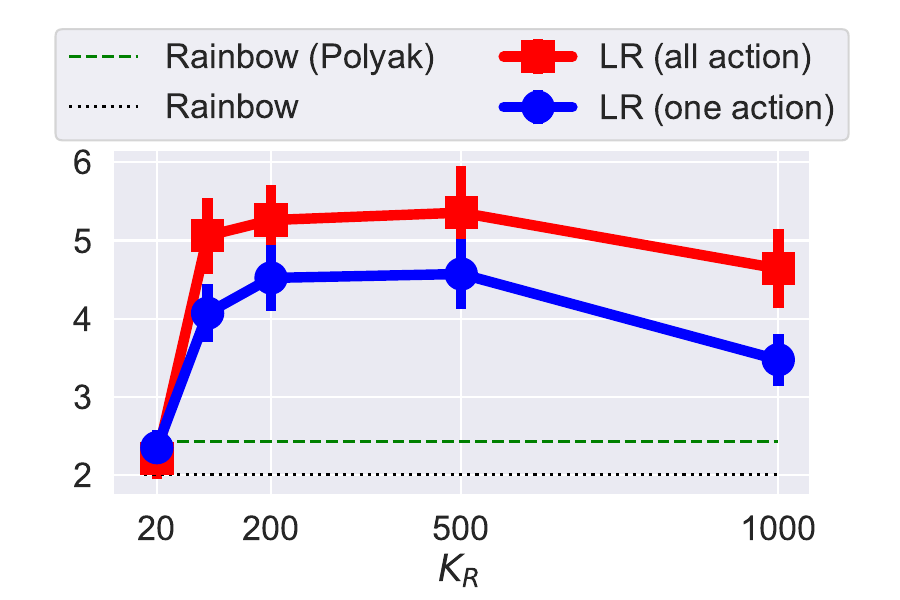} 
\end{subfigure}
\caption{Performance of the LR agent as a function of $K_R$ the number of updates to the target network in the Replicate step. Higher is better. Notice that an intermediate value of $K_R$ performs best.}
\label{fig:aug} 
\end{figure}

The above ablations clearly display the benefit of LR over standard target network updates. We hypothesize that by updating the target network via a cross-entropy loss and gradient-based optimizer, it also benefits from the implicit regularization effect of stochastic gradient descent. To verify this hypothesis, we examine the norm of target and online Q networks's parameters in LR and Rainbow. As Figure \ref{fig:wnorm_mean} shows, LR reduced both the online and target network's norm, which may serve as an implicit regularization on the Q networks. Notice the difference in the magnitude of the norm of the online and the target network in LR, which indicates that LR typically finds a solution where $w\neq\theta$.
\begin{figure}[h]
\centering\captionsetup[subfigure]{justification=centering,skip=0pt}
\begin{subfigure}[t]{0.4\textwidth} 
\centering 
\includegraphics[width=\textwidth]{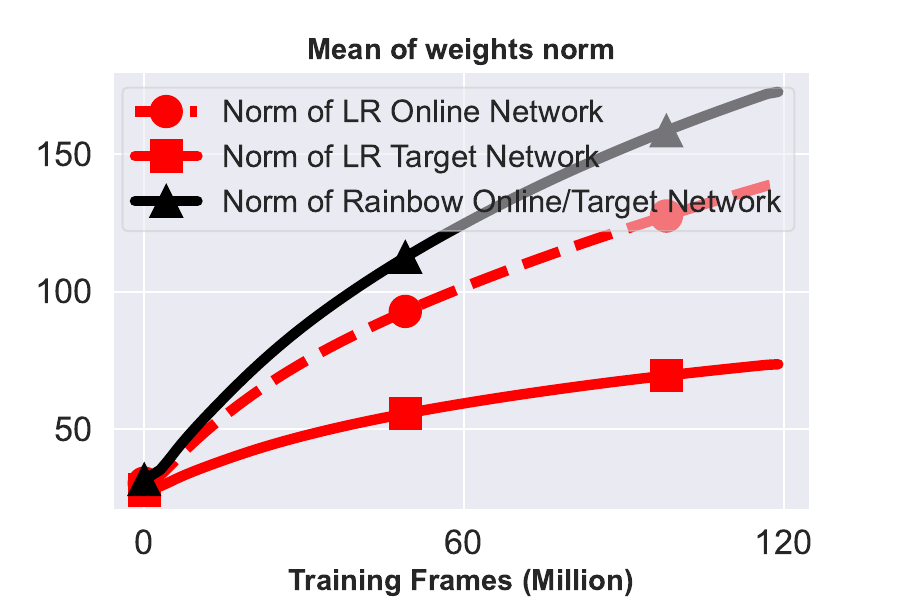} 
\end{subfigure}
\caption{Parameter norm of target and online Q network in different algorithms, averaged over 6 games in Figure \ref{fig:K800}.}
\label{fig:wnorm_mean} 
\end{figure}

We finally would like to evaluate LR beyond these 6 games. To this end, we fix $K_R=800 = 0.1\cdot K_L$ and also chose the all action implementation of LR. We then ran LR and Rainbow on all 55 Atari games. The aggregate learning curve and the final asymptotic comparison between the two agents are presented in Figures~\ref{fig:55_median} and~\ref{fig:55_assymptote}, respectively. Overall, LR can outperform Rainbow by both measures.

\begin{figure}[h]
\centering\captionsetup[subfigure]{justification=centering,skip=0pt}
\begin{subfigure}[b]{0.4\textwidth} 
\centering 
\includegraphics[width=\textwidth]{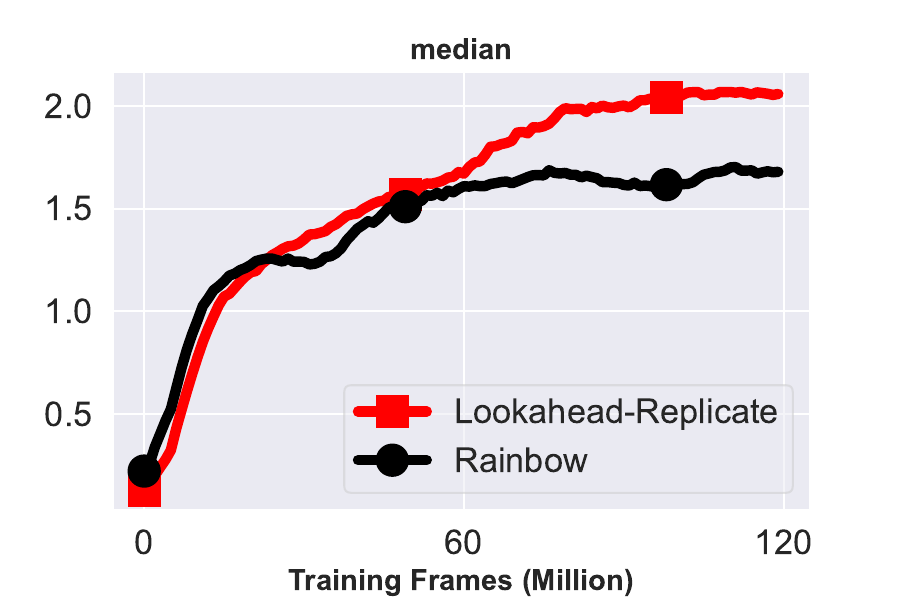}
\end{subfigure}
\begin{subfigure}[b]{0.4\textwidth} 
\centering 
\includegraphics[width=\textwidth]{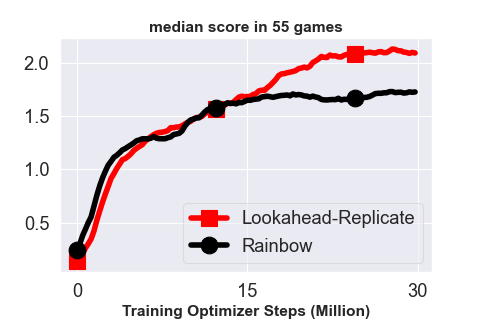}
\end{subfigure}
\caption{A comparison in terms of human-normalized median across 55 games and 5 seeds between LR and Rainbow based on the number of frames on the x-axis (Top). Here the two agents use the same amount of data per each point on the x-axis but note that Rainbow is taking slightly lower number of gradient steps (9\% less) per a fixed number of training frames due to the additional steps in updating the target network by LR. To ensure a fair comparison in terms of computation, we also report the same result but with the total number of optimizer steps on the x-axis (Bottom). Here Rainbow is using slightly more data than LR for any given point on the x-axis. LR is outperforming Rainbow in both cases.}
\label{fig:55_median}
\end{figure}

Note that in updating the target network, LR takes additional gradient steps, unlike Rainbow and given the same amount of data, the total number of gradient steps taken by LR is slightly higher than the number of gradient steps taken by Rainbow. To account for this discrepancy, we provide an alternative presentation of Figure \ref{fig:55_median} (Bottom) where on the x-axis we report the number of gradient steps taken by each agent, and on the y-axis we present the performance of the corresponding agent having performed the number of gradient steps on x-axis. This would ensure that we are not giving any computational advantage to LR. 

Observe that under this comparison, LR is still outperforming Rainbow. We also would like to highlight that in this comparison, given a point on the x-axis, LR has experienced a 9\% lower amount of data relative to Rainbow. To understand where this number comes from, recall that we always perform one step of online network update per $4$ frames. We repeat this for some time, and then perform $200$ updates to the target network. Therefore, after $T$ number of overall updates, we have performed roughly $(0.91)*T$ updates to the online network (due to the ratio $2000/(2000+200)$), and therefore we have seen only 91\% of the amount of interaction experienced by Rainbow. It therefore means that under this comparison we are fair in terms of compute, but we have now given more data to Rainbow over LR. That said we still observe that LR is the more superior agent overall.

\begin{figure}[h]
\centering\captionsetup[subfigure]{justification=centering,skip=0pt}
\includegraphics[width=.48\textwidth]{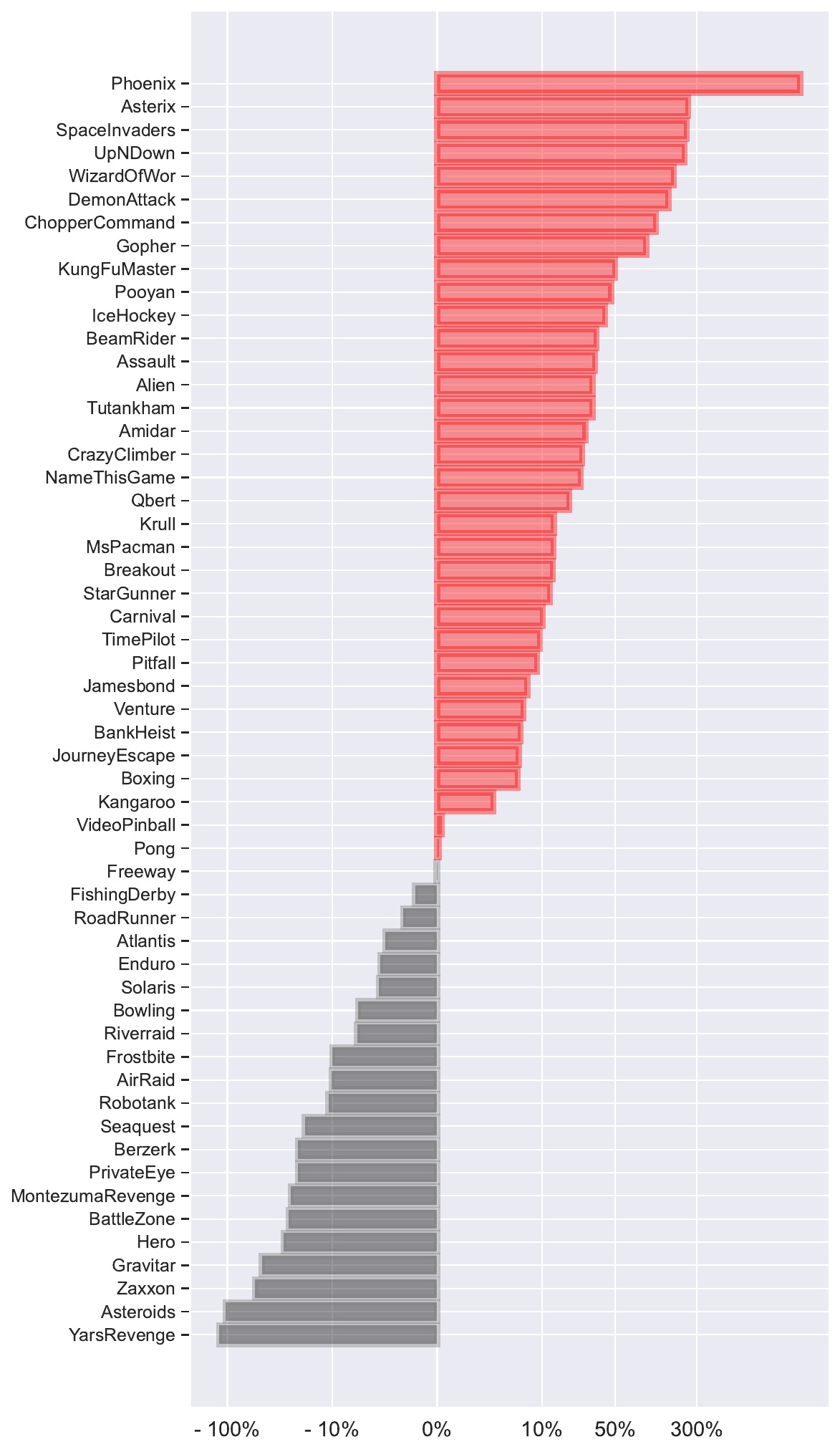} 
\caption{A comparison between the asymptotic performance of LR and Rainbow on 55 Atari games.} 
\label{fig:55_assymptote} 
\end{figure}

\section{Related Work}
In this work we emphasized that it is necessary for online and target networks to be equivalent only in the function space, not in the parameter space. To the best of our knowledge this is a significant deviation from all of the existing work. Nonetheless, there are a few examples in the literature where learning the value function is operated based on some information related to the function space. For instance~\citet{dabney2014natural} and \citet{knight2018DQNnatural} use natural gradient to perform value-function optimization by considering the geometry of the parameter space. Still, using a different parameterization for online and target networks is absent in these efforts. A similar trend exists when learning the policy in RL. In this case, the natural gradient can be employed to perform distribution-aware updates~\citep{kakade2001natural, PetersNaturalGradient08}. Using variations of natural-gradient policy optimization along with trust regions has recently become quite popular leading to a number of competitive baselines in the deep RL setting~\citep{trposchulman15,schulman2017proximal, abdolmaleki2018maximum,fakoor20a}.

Our primary algorithmic contribution was to present the Lookahead-Replicate (LR) algorithm, which updates the target network in a manner that is fundamentally different than standard updates. In deep RL, it is common to update the target network using one of two strategies: either hard frequency-based (copying or duplicating $\theta$ into $w$ every couple of steps) or Polyak ($\theta = \tau w + (1-\tau)\theta$ at every step, also known as soft updates). The frequency-based update is commonly applied in value-function-only approaches to RL such as DQN~\cite{DQN}, Double Q-learning~\cite{ van2016deep}, C51~\cite{bellemare17ac51}, Rainbow~\cite{rainbow}, DQN Pro~\cite{dqnproAsadi2022}, and is particularly effective in MDPs with discrete actions. On the other hand, Polyak is predominantly used in actor-critic algorithms such as DDPG~\cite{LillicrapDDPG2016}, SAC~\cite{haarnoja2018soft}, TD3~\cite{fujimoto2018addressing}, D4PG~\cite{maron2018distributional}, etc., and is mainly effective with continuous-action MDPs. Both updates could be viewed as achieving the goal of learning a single value function by forcing $\theta=w$. Enforcing this parameter-based equivalence is an overkill so long as we have the alternative value-space equivalence proposed in this paper.

Numerous works have studied ways to better utilize the target network during learning. This line of work includes using a proximal term between target and online network parameters to improve the performance~\cite{dqnproAsadi2022}, normalizing the target network to improve efficiency and stability~\cite{Hasselt2016LPropQ}, and applying regularization~\cite{gracshao22a}. Conversely, some works aimed to remove the target network from while not affecting~\cite{kim2019deepmellow} or even improving performance~\cite{bhatt2020crossnorm,gracshao22a}. 


On the more theoretical side, classical temporal difference (TD) learning analysis~\citep{tsitsiklis1996analysis, melo2008analysis, bhandari2018finite} usually focuses on TD in absence of a target parameter. Recently \citet{lee2019target} studied TD with delayed target updates under the linear setting. \citet{asadi2023td} provided novel convergence guarantees that went beyond the linear setting and worked with any setting of the frequency of updates for the target parameter. Convergence of TD was also studied in presence of regularization \cite{liu2012regularized, DeadlyTriadzhang21y}. There are also recent studies on TD in the overparameterized setting~\cite{cai2019neural, xiao2021understanding, lyle2021effect} where the focus of the studies is on the original TD algorithm, which unlike Lookahead-Replicate (LR), enforces the online-target parameter equivalence. Our Lookahead-Replicate algorithm is specifically designed to leverage overparameterization by finding two parameters whose corresponding value functions are equivalent. Last but not least, the pioneer works \cite{maei2010toward,maei2010gq} considers the (Projected) square Bellman error objective with the ``two time scale'' update for the parameters under the linear structure. Our framework essentially reformulates the fixed-point Bellman equation into a bilevel formulation (with the desired solution of the bilevel formulation being $F_{value}$) that allows flexibility for parameters $\theta$ and $w$.

\section{Conclusion}
We presented an alternative formulation for value-function approximation, a problem which lies at the core of numerous approaches to RL. Value-function approximation algorithms commonly maintain two parameters during training. In this context, we showed that it is not necessary, and arguably undesirable, to enforce an equivalence between the parameters. This equivalence is usually enforced to ensure that a single value function is learned. But, we demonstrated a more direct approach to achieving this goal, namely to update the two parameters while ensuring that their provided value-functions are equivalent. Algorithmically, we proposed the new Lookahead-Replicate (LR) algorithm that updates the target parameter in a fashion that is fundamentally different than existing work. In particular, rather than copying the the online parameter, we update the target parameter using gradient-descent steps to minimize the value-difference between the target and the online networks. This new style of update, while simple to understand and implement, led to improved performance in the context of deep RL and on the Atari benchmark. These results demonstrated the benefits of our reformulation as well as our proposed LR algorithm that is designed to solve the reformulated problem.

\section{Future Work}
An important area for future work is to understand the behavior of LR and TD when we scale up the capacity of the neural network. An interesting observation is that the gap between the sizes of the two sets $\mathcal{F}_{pair}$ and $\mathcal{F}_{value}$ grows as we increase the expressiveness of the function approximator. Therefore, it would be interesting to see if the LR algorithm is more conducive to scaling than existing algorithms such as TD. Notice that we designed LR to search for a solution in the larger set $\mathcal{F}_{value}$, whereas TD and similar algorithms were designed to search for a solution in the set $\mathcal{F}_{pair}$. Therefore, our current conjecture is that LR may better harness the power of scaling. It would be very interesting to test the validity of this conjecture in future work.

Notice that the constraint $v_{\theta}= v_{w}$ is agnostic to the specific function class chosen for the value-function approximation task. It would be interesting to define two separate function classes to represent the online and the target networks. In this paper, we mainly focused on the setting where we operate using the same parameter space for both networks ($\Theta \times \Theta$), but in general these two spaces could be different. Indeed, we show an example of running LR in this setting in Appendix B. Notice that TD-like algorithms are inapplicable in this setting because the parameter-space equivalence is meaningless to enforce when we have different hypothesis spaces. A question for future work is to identify scenarios in which using different parameter spaces is fruitful. One such case, inspired by supervised learning, is the area of network distillation~\citep{hinton2015distilling,rusu2015policy} where the goal is to imitate a first teacher network by using an often smaller student network. This direction is beyond the scope of this paper, but is really interesting to explore in future.

It is well-known that TD can exhibit misbehavior when used in conjunction with bootstrapping, arbitrary function approximators, and off-policy updates, the so-called deadly triad~\cite{RLbook}. An important question for future work is to compare the LR algorithm with TD as it pertains to convergence guarantees on simple counter examples as well as the more general cases. The LR algorithm updates the target network in a different fashion than TD, therefore, it would be interesting to understand the impact of this new style of target update on existing convergence guarantees of TD and related algorithms. More generally, numerous questions about the TD algorithm have been studied in the RL literature. Some of these questions are well-understood while others still remain open. In light of our new LR algorithm, these questions can be revisited and addressed in a broader scope. In this paper, we just laid the foundation and set the stage for such questions to be studied in future.

\section*{Impact Statement}
This paper presents a technical work whose goal is to advance the field of reinforcement learning.
While we understand this work to hold significant potential in terms of technical advancements, the scope of its societal impact remains limited at this point.






\bibliography{refs}
\bibliographystyle{icml2024}

\newpage
\appendix
\onecolumn

\section{Proof of Theorem \ref{thm:interm3_strong-M-temp}}\label{sec:appx-th-proof}
Here, we will prove the main theorem. The analysis to be developed now is inspired by the proof technique of the recent paper \cite{asadi2023td}. However they focus on standard TD learning settings while we study a new value function learning algorithm.

We first recall the pair of loss functions $H(\theta, w) = \norm{v_{w} - \mathcal{T} v_{\theta}}_{D}^{2}$ and $R(\theta, w) = \norm{v_{\theta} - v_{w}}_{D}^{2}$ that are optimized in the Lookahead-Replicate algorithm. We will provide analysis for a general value functions $v_{\theta}$ and $v_{w}$ as long as the pair of loss function satisfy Assumption \ref{assum1}. Therefore, when we differentiate the value function $v_{\theta}$, we get the Jacobian of $v_{\theta}$. Therefore, for the simplicity of the developments to come, we compute the relevant gradients of the loss functions:
\begin{equation} \label{GradLoss}
    \nabla_{w} H(\theta, w) = 2\nabla v_{w}D\left(v_{w} - \mathcal{T} v_{\theta}\right) \qquad \text {and} \qquad \nabla_{\theta} G(\theta, w) = 2\nabla v_{\theta}D\left(v_{\theta} - v_{w}\right).
\end{equation}
It should also be noted that whenever $(\theta^{\star} , w^{\star}) \in \mathcal{F}_{value}$, we have that (using \eqref{GradLoss})
\begin{equation} \label{GradCond}
    \nabla_{w} H\left(\theta^{\star} , w^{\star}\right) = 2\nabla v_{w^{\star}}D\left(v_{w^{\star}} - \mathcal{T} v_{\theta^{\star}}\right) = {\bf 0},
\end{equation}
where the last equality follows from the constraint that $v_{w^{\star}} = \mathcal{T} v_{\theta^{\star}}$.

Now we begin our proof with the following useful lemmas.
\begin{lemma} \label{lem:interm1_strong}
Let $\left\{ (\theta^{t} , w^{t}) \right\}_{t \in \mathbb{N}}$ be a sequence generated by the Lookahead Replicate algorithm. Then, for all $t \in \mathbb{N}$ and $0 \leq k \leq \KL-1$, we have 
\begin{equation*}
    H\left(\theta^{t} , w^{\star}\right) - H\left(\theta^{t} , w^{t , k}\right) \leq \frac{F_{\theta}^{2}}{2 F_{w}}\left\|\theta^{t} - \theta^{\star}\right\|^{2} - \left(\frac{1}{\alpha} - \frac{L}{2}\right)\left\|w^{t , k+1} - w^{t , k}\right\|^2.
\end{equation*}
\end{lemma}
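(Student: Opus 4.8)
The plan is to exploit the $F_w$-strong convexity of $H(\theta^t,\cdot)$ together with a one-step descent inequality coming from the gradient-descent update inside the Lookahead operation. First I would invoke Assumption \ref{assum1}(2), which says $\nabla_w H(\theta^t,\cdot)$ is $L$-Lipschitz, to get the standard descent lemma along the iterates $w^{t,k}\mapsto w^{t,k+1} = w^{t,k} - \alpha\nabla_w H(\theta^t,w^{t,k})$, namely
\begin{equation*}
H(\theta^t,w^{t,k+1}) \le H(\theta^t,w^{t,k}) - \alpha\Bigl(1-\frac{\alpha L}{2}\Bigr)\bigl\|\nabla_w H(\theta^t,w^{t,k})\bigr\|^2,
\end{equation*}
and rewrite the right-hand side in terms of $\|w^{t,k+1}-w^{t,k}\|^2 = \alpha^2\|\nabla_w H(\theta^t,w^{t,k})\|^2$, which produces the $-(1/\alpha - L/2)\|w^{t,k+1}-w^{t,k}\|^2$ term appearing in the claim. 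So it remains to bound $H(\theta^t,w^\star) - H(\theta^t,w^{t,k+1})$, and it suffices (by monotone decrease of the objective) to bound $H(\theta^t,w^\star) - H(\theta^t,w^{t,k})$ or more directly to control $H(\theta^t,w^\star)$ relative to $\min_w H(\theta^t,w)$.

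The second ingredient is that $H(\theta^t,\cdot)$ is $F_w$-strongly convex, so it has a unique minimizer; call it $w^\star(\theta^t)$. Strong convexity gives the quadratic-growth bound
\begin{equation*}
H(\theta^t,w^\star) - H\bigl(\theta^t,w^\star(\theta^t)\bigr) \le \frac{1}{2F_w}\bigl\|\nabla_w H(\theta^t,w^\star)\bigr\|^2.
\end{equation*}
Now the key observation is that $w^\star(\theta^\star) = w^\star$: indeed, by \eqref{GradCond} we have $\nabla_w H(\theta^\star,w^\star)={\bf 0}$, and since $H(\theta^\star,\cdot)$ is strongly convex the point $w^\star$ is its unique minimizer. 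Hence $\nabla_w H(\theta^\star, w^\star)={\bf 0}$, and I can write $\nabla_w H(\theta^t,w^\star) = \nabla_w H(\theta^t,w^\star) - \nabla_w H(\theta^\star,w^\star)$, which by Assumption \ref{assum1}(1) (Lipschitzness in the $\theta$-argument with constant $F_\theta$) is bounded in norm by $F_\theta\|\theta^t-\theta^\star\|$. Combining, $H(\theta^t,w^\star) - H(\theta^t,w^\star(\theta^t)) \le \frac{F_\theta^2}{2F_w}\|\theta^t-\theta^\star\|^2$, and since $H(\theta^t,w^\star(\theta^t)) \le H(\theta^t,w^{t,k})$ for every $k$ (the minimizer lower-bounds all iterates), we get $H(\theta^t,w^\star) - H(\theta^t,w^{t,k}) \le \frac{F_\theta^2}{2F_w}\|\theta^t-\theta^\star\|^2$.

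Finally I would assemble the two pieces: from the descent inequality, $H(\theta^t,w^{t,k}) - H(\theta^t,w^{t,k+1}) \ge (\tfrac1\alpha - \tfrac L2)\|w^{t,k+1}-w^{t,k}\|^2$ (after dividing by $\alpha$), hence
\begin{equation*}
H(\theta^t,w^\star) - H(\theta^t,w^{t,k}) \le \frac{F_\theta^2}{2F_w}\|\theta^t-\theta^\star\|^2
\end{equation*}
can be sharpened to the stated bound by noting the claim actually has $w^{t,k}$ on the left, not $w^{t,k+1}$; so I would instead start from the telescoped/one-step relation relating $H(\theta^t,w^{t,k})$ to $H(\theta^t,w^{t,k+1})$ and subtract it from the quadratic-growth bound evaluated at the iterate $w^{t,k+1}$, i.e. combine $H(\theta^t,w^\star)-H(\theta^t,w^{t,k+1}) \le \frac{F_\theta^2}{2F_w}\|\theta^t-\theta^\star\|^2$ with $H(\theta^t,w^{t,k+1}) \le H(\theta^t,w^{t,k}) - (\tfrac1\alpha-\tfrac L2)\|w^{t,k+1}-w^{t,k}\|^2$ to get exactly $H(\theta^t,w^\star)-H(\theta^t,w^{t,k}) \le \frac{F_\theta^2}{2F_w}\|\theta^t-\theta^\star\|^2 - (\tfrac1\alpha-\tfrac L2)\|w^{t,k+1}-w^{t,k}\|^2$. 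The main obstacle — really the only delicate point — is being careful about which argument ($w^{t,k}$ versus $w^{t,k+1}$) the quadratic-growth bound is applied to and making sure the step-size $\alpha$ is small enough that $1/\alpha - L/2 > 0$ so that the descent term has the right sign; everything else is bookkeeping with the three parts of Assumption \ref{assum1}.
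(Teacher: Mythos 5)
Your proposal is correct and reaches the stated inequality by the same overall decomposition as the paper: a bound of the form $H(\theta^t,w^\star)-H(\theta^t,w^{t,k+1})\le \frac{F_\theta^2}{2F_w}\|\theta^t-\theta^\star\|^2$ (obtained via $\nabla_w H(\theta^\star,w^\star)=\mathbf{0}$ and the $F_\theta$-Lipschitzness of $\nabla_w H$ in $\theta$), added to the descent-lemma inequality $H(\theta^t,w^{t,k+1})-H(\theta^t,w^{t,k})\le -(\tfrac1\alpha-\tfrac L2)\|w^{t,k+1}-w^{t,k}\|^2$. The one place where your route differs is how the first piece is derived: the paper applies the $F_w$-strong-convexity inequality directly between $w^{t,k+1}$ and $w^\star$ and then uses Young's inequality with parameter $F_w$ so that the two $\tfrac{F_w}{2}\|w^{t,k+1}-w^\star\|^2$ terms cancel, never introducing an auxiliary minimizer; you instead pass through the unique minimizer $w^\star(\theta^t)$ of $H(\theta^t,\cdot)$, use the quadratic-growth (Polyak--\L{}ojasiewicz) consequence of strong convexity at $w^\star$, and then discard $H(\theta^t,w^\star(\theta^t))\le H(\theta^t,w^{t,k+1})$. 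Both are valid consequences of Assumption~\ref{assum1}(3) and give the identical constant $\frac{F_\theta^2}{2F_w}$; your version makes it transparent that the bound is really a statement about the suboptimality of $w^\star$ for the perturbed objective $H(\theta^t,\cdot)$ and is independent of the iterate, while the paper's version avoids having to assert existence of the minimizer $w^\star(\theta^t)$. Your final assembly (applying the first bound at $w^{t,k+1}$, not $w^{t,k}$, before adding the descent inequality) matches the paper exactly.
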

\begin{proof}
Let $t \in \mathbb{N}$. From the $F_w$-strong convexity of $w \mapsto H(\theta^t, w)$, we obtain that
\begin{equation*}
H(\theta^t, w^{t,k+1}) \geq H(\theta^t, w^\star) + \langle \nabla_w H(\theta^t, w^\star), w^{t,k+1} - w^\star \rangle + \frac{F_w}{2} \|w_{t,k+1} - w^\star\|^2,
\end{equation*}
which implies
\begin{equation} \label{eqn:7}
H(\theta^t, w^\star) - H(\theta^t, w^{t,k+1}) \leq \langle \nabla_w H(\theta^t, w^\star), w^\star - w^{t,k+1} \rangle - \frac{F_w}{2} \|w^{t,k+1} - w^\star\|^2. 
\end{equation}
Moreover, following \eqref{GradCond} we can use the condition that $\nabla_w H(\theta^\star, w^\star) = 0$, to obtain
\begin{align} \label{eqn:8}
\langle \nabla_w H(\theta^t, w^\star), w^\star - w^{t,k+1} \rangle & = \langle \nabla_w H(\theta^t, w^\star) - \nabla_w H(\theta^\star, w^\star), w^\star - w^{t,k+1} \rangle \nonumber \\
& \leq \frac{1}{2F_w} \|\nabla_w H(\theta^t, w^\star) - \nabla_w H(\theta^\star, w^\star)\|^2 + \frac{F_w}{2} \|w^{t,k+1} - w^\star\|^2 \nonumber \\
& \leq \frac{F^2_{\theta}}{2F_w}  \|\theta^t - \theta^\star\|^2 + \frac{F_w}{2} \|w^{t,k+1} - w^\star\|^2.
\end{align}
Here, the first inequality follows from the fact that for any two vectors $a$ and $b$ we have $\langle a , b \rangle \leq (1/2d)\|a\|^{2} + (d/2)\|b\|^{2}$ for any $d > 0$. In this case, we chose $d = F_{w}$. Also the last inequality follows from the $F_{\theta}$-Lipschitz property of $\nabla_{w}H$, which is our Assumption \ref{assum1}. Now, by combining \eqref{eqn:7} with \eqref{eqn:8}, we obtain that
\begin{equation} \label{eqn:4}
H(\theta^t, w^\star) - H(\theta^t, w^{t,k+1}) \leq \frac{F^2_{\theta}}{2F_w}  \|\theta^t - \theta^\star\|^2 
\end{equation}
From the Lipschitz assumption we can write, due to the Descent Lemma~\citep{beck2017first} applied to the function $w \rightarrow H(\theta^{t} , w)$, that
\begin{equation*}
H(\theta^t, w^{t,k+1}) - H(\theta^t, w^{t,k}) \leq \langle \nabla_w H(\theta^t, w^{t,k}), w^{t,k+1} - w^{t,k} \rangle + \frac{L}{2} \|w^{t,k+1} - w^{t,k}\|^2.
\end{equation*}
Now, notice that using the main step of the Lookahead operation (see step 3) we have $w^{t,k+1} = w^{t,k} - \alpha \nabla_w H(\theta^t, w^{t,k})$, and so we can write:
\begin{align}
H(\theta^t, w^{t,k+1}) - H(\theta^t, w^{t,k}) & \leq \frac{1}{\alpha} \langle w^{t,k} - w^{t,k+1}, w^{t,k+1} - w^{t,k} \rangle + \frac{L}{2} \|w^{t,k+1} - w^{t,k}\|^2 \nonumber \\
 & = -\left( \frac{1}{\alpha} - \frac{L}{2} \right) \|w^{t,k+1} - w^{t,k}\|^2. \label{eqn:5}    
\end{align}
Adding both sides of \eqref{eqn:4} with \eqref{eqn:5} yields
\begin{equation*}
H(\theta^t, w^\star) - H(\theta^t, w^{t,k}) \leq \frac{F^2_{\theta}}{2F_w} \|\theta^t - \theta^\star\|^2 - \left( \frac{1}{\alpha} - \frac{L}{2} \right) \|w^{t,k+1} - w^{t,k}\|^2,
\end{equation*}
which proves the desired result. 
\end{proof}
Based on this result, we can now state and prove the following result.
\begin{lemma}\label{lem:interm2_strong}
Let $\left\{ (\theta^{t} , w^{t}) \right\}_{t \in \mathbb{N}}$ be a sequence generated by the Lookahead Replicate algorithm. Choose $\alpha=1/L$. Then, for all $t \in \mathbb{N}$ and $0 \leq k \leq \KL-1$, we have 
\begin{equation*}
\left\|w^{t, k+1}-w^{\star}\right\|^2 \leq\left(1-\frac{F_w}{L}\right)\left\|w^{t, k}-w^{\star}\right\|^2+\frac{F_\theta^2}{LF_w}\left\|\theta^t-\theta^{\star}\right\|^2.
\end{equation*}
\end{lemma}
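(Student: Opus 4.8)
The plan is to run the standard one-step contraction argument for gradient descent on the strongly convex map $w \mapsto H(\theta^{t},w)$, but substituting Lemma~\ref{lem:interm1_strong} in place of the usual descent estimate so that the $\theta^t$-dependence is surfaced explicitly. Concretely, I would first use the Lookahead update $w^{t,k+1} = w^{t,k} - \alpha\nabla_{w}H(\theta^{t},w^{t,k})$ to expand the target quantity as $\|w^{t,k+1}-w^{\star}\|^{2} = \|w^{t,k}-w^{\star}\|^{2} - 2\alpha\langle \nabla_{w}H(\theta^{t},w^{t,k}),\, w^{t,k}-w^{\star}\rangle + \|w^{t,k+1}-w^{t,k}\|^{2}$, where I keep the last term as a squared displacement rather than $\alpha^{2}\|\nabla_w H\|^2$ so it can be matched against the term appearing in Lemma~\ref{lem:interm1_strong}.

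Next I would lower-bound the cross term using the $F_{w}$-strong convexity of $w\mapsto H(\theta^{t},w)$ applied at the pair $(w^{t,k}, w^{\star})$: this gives $\langle \nabla_{w}H(\theta^{t},w^{t,k}),\, w^{t,k}-w^{\star}\rangle \geq H(\theta^{t},w^{t,k}) - H(\theta^{t},w^{\star}) + \tfrac{F_{w}}{2}\|w^{t,k}-w^{\star}\|^{2}$. Plugging this in yields $\|w^{t,k+1}-w^{\star}\|^{2} \leq (1-\alpha F_{w})\|w^{t,k}-w^{\star}\|^{2} + 2\alpha\big(H(\theta^{t},w^{\star}) - H(\theta^{t},w^{t,k})\big) + \|w^{t,k+1}-w^{t,k}\|^{2}$. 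Now I would invoke Lemma~\ref{lem:interm1_strong} to bound $H(\theta^{t},w^{\star}) - H(\theta^{t},w^{t,k}) \leq \tfrac{F_{\theta}^{2}}{2F_{w}}\|\theta^{t}-\theta^{\star}\|^{2} - \big(\tfrac{1}{\alpha}-\tfrac{L}{2}\big)\|w^{t,k+1}-w^{t,k}\|^{2}$; after multiplying by $2\alpha$ and collecting, the coefficient of $\|w^{t,k+1}-w^{t,k}\|^{2}$ becomes $1 - 2\alpha\big(\tfrac{1}{\alpha}-\tfrac{L}{2}\big) = -(1-\alpha L)$, and the $\theta$-term contributes $\tfrac{\alpha F_{\theta}^{2}}{F_{w}}\|\theta^{t}-\theta^{\star}\|^{2}$. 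Setting $\alpha = 1/L$ makes the displacement term vanish identically and produces exactly $\|w^{t,k+1}-w^{\star}\|^{2} \leq \big(1-\tfrac{F_{w}}{L}\big)\|w^{t,k}-w^{\star}\|^{2} + \tfrac{F_{\theta}^{2}}{LF_{w}}\|\theta^{t}-\theta^{\star}\|^{2}$.

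I do not expect a genuine obstacle here: the argument is a textbook strong-convexity/smoothness recursion, and the only care needed is bookkeeping the two appearances of $\|w^{t,k+1}-w^{t,k}\|^{2}$ (one from the norm expansion, one from Lemma~\ref{lem:interm1_strong}) and checking that the choice $\alpha = 1/L$ is precisely what cancels the residual $-(1-\alpha L)\|w^{t,k+1}-w^{t,k}\|^{2}$. It is worth remarking that any $\alpha \le 1/L$ would also close the argument, since then that residual term is nonpositive and can simply be dropped; $\alpha = 1/L$ is chosen only because it gives the cleanest constants that feed into the outer recursion over $k$ and $t$.
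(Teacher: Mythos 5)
Your proposal is correct and follows essentially the same route as the paper's proof: the same norm expansion keeping $\|w^{t,k+1}-w^{t,k}\|^{2}$ as a displacement, the same strong-convexity bound on the cross term, the same invocation of Lemma~\ref{lem:interm1_strong}, and the same cancellation at $\alpha = 1/L$. Your closing remark that any $\alpha \le 1/L$ would also suffice is accurate but not needed for the stated result.
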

\begin{proof}
Let $t \in \mathbb{N}$. From the main step of the Lookahead operation (see step 3), that is, $w^{t,k+1} = w^{t,k} - \alpha \nabla_w H(\theta^t, w^{t,k})$, we obtain for any $0 \leq k \leq \KL-1$ that
\begin{align} \label{eqn:inter-5.2}
\left\|w^{t, k+1}-w^{\star}\right\|^2 & =\left\|\left(w^{t, k}-w^\star\right)-\alpha \nabla_w H\left(\theta^t, w^{t, k}\right)\right\|^2 \nonumber \\
& =\left\|w^{t, k}-w^{\star}\right\|^2+2 \alpha\left\langle\nabla_w H\left(\theta^t, w^{t, k}\right), w^{\star}-w^{t, k}\right\rangle+\left\|\alpha\nabla_w H\left(\theta^t, w^{t, k}\right)\right\|^2 \nonumber \\
& =\left\|w^{t, k}-w^{\star}\right\|^2+2 \alpha\left\langle\nabla_w H\left(\theta^t, w^{t, k}\right), w^{\star}-w^{t, k}\right\rangle+\left\|w^{t, k+1}-w^{t, k}\right\|^2. 
\end{align}
Using the $F_{w}$-strong convexity of $w \rightarrow H(\theta^{t} , w)$, we have
\begin{equation}\label{eqn:inter-5.3}
H(\theta^t, w^\star) \geq H(\theta^t, w^{t,k}) + \langle \nabla_w H(\theta^t, w^{t,k}), w^\star - w^{t,k} \rangle + \frac{F_w}{2} \|w^{t,k} - w^\star\|^2.   
\end{equation}
Combining \eqref{eqn:inter-5.2} and \eqref{eqn:inter-5.3}, we get
\begin{align*}
\left\|w^{t, k+1}-w^{\star}\right\|^2 & \leq\left\|w^{t, k}-w^{\star}\right\|^2+2 \alpha\left(H\left(\theta^t, w^{\star}\right)-H\left(\theta^t, w^{t, k}\right)-\frac{F_w}{2}\left\|w^{t, k}-w^{\star}\right\|^2\right)  +\left\|w^{t, k+1}-w^{t, k}\right\|^2 \\
& =\left(1-\alpha F_w\right)\left\|w^{t, k}-w^{\star}\right\|^2+2 \alpha\left(H\left(\theta^t, w^{\star}\right)-H\left(\theta^t, w^{t, k}\right)\right)+\left\|w^{t, k+1}-w^{t, k}\right\|^2 .
\end{align*}
Hence, from Lemma \ref{lem:interm1_strong}, we obtain
\begin{align*}
\left\|w^{t, k+1}-w^{\star}\right\|^2 & \leq\left(1-\alpha F_w\right)\left\|w^{t, k}-w^{\star}\right\|^2+2 \alpha\left(\frac{F_\theta^2}{2 F_w}\left\|\theta^t-\theta^{\star}\right\|^2-\left(\frac{1}{\alpha}-\frac{L}{2}\right)\left\|w^{t, k+1}-w^{t, k}\right\|^2\right) \\
& +\left\|w^{t, k+1}-w^{t, k}\right\|^2 \\
& =\left(1-\alpha F_w\right)\left\|w^{t, k}-w^{\star}\right\|^2+\frac{\alpha F_\theta^2}{F_w}\left\|\theta^t-\theta^{\star}\right\|^2-(1-\alpha L)\left\|w^{t, k+1}-w^{t, k}\right\|^2.
\end{align*}
Moreover, by choosing the step-size as $\alpha = 1/L$ we obtain that
\begin{align*}
\|w^{t,k+1} - w^\star\|^2 \leq &\left( 1 - \frac{F_w}{L} \right) \|w^{t,k} - w^\star\|^2 + \frac{F^2_{\theta}}{LF_w} \|\theta^t - \theta^\star\|^2.
\end{align*}
This concludes the proof of this result.
\end{proof}

\begin{lemma}\label{lem:interm3_strong}
Let $\left\{ (\theta^{t} , w^{t}) \right\}_{t \in \mathbb{N}}$ be a sequence generated by the Lookahead-Replicate algorithm. Choose $\alpha=1/L$. Then, for all $t \in \mathbb{N}$, we have 
\begin{equation} \label{Le1}
\left\|w^{t+1}-w^{\star}\right\|^2\leq a\left\|w^{t}-w^{\star}\right\|^2+\eta^2 \left(1-a\right)\left\|\theta^t-\theta^{\star}\right\|^2,
\end{equation}
where $a :=(\max(1-\kappa, 0))^{K_L}$ with $\kappa := {L}^{-1}F_w$ and $\eta := F_w^{-1} F_\theta$.
\end{lemma}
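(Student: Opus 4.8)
The plan is to unroll the one-step contraction of Lemma~\ref{lem:interm2_strong} over the $K_L$ inner iterations of the Lookahead operation and then sum the resulting geometric series. Fix $t \in \mathbb{N}$ and set $c := 1 - \kappa = 1 - F_w/L$ and $b := F_\theta^2/(L F_w)$, so that, with the choice $\alpha = 1/L$, Lemma~\ref{lem:interm2_strong} reads
\begin{equation*}
\|w^{t,k+1} - w^\star\|^2 \leq c\,\|w^{t,k} - w^\star\|^2 + b\,\|\theta^t - \theta^\star\|^2, \qquad 0 \leq k \leq K_L - 1 .
\end{equation*}
Since $w \mapsto H(\theta^t, w)$ is $F_w$-strongly convex with $L$-Lipschitz gradient we have $F_w \leq L$, hence $c = 1-\kappa \in [0,1)$ and in particular $c = \max(1-\kappa, 0)$; the $\max$ in the statement is merely a safeguard for degenerate parameter choices, and in any case $c$ can be replaced by $\max(1-\kappa,0)$ above because $\|w^{t,k}-w^\star\|^2 \geq 0$.

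Next I would iterate this inequality over $k = 0, 1, \dots, K_L - 1$. Recalling $w^{t,0} = w^t$ and $w^{t,K_L} = w^{t+1}$, a routine induction on $k$ (base case $k=0$ and the inductive step both immediate from the displayed bound) gives
\begin{equation*}
\|w^{t+1} - w^\star\|^2 \leq c^{K_L}\,\|w^t - w^\star\|^2 + b\,\Big(\sum_{j=0}^{K_L-1} c^{\,j}\Big)\,\|\theta^t - \theta^\star\|^2 .
\end{equation*}
By definition $a = c^{K_L}$, so the coefficient of $\|w^t - w^\star\|^2$ is exactly $a$. For the remaining coefficient, the finite geometric sum equals $\sum_{j=0}^{K_L-1} c^j = (1 - c^{K_L})/(1 - c) = (1-a)/\kappa$, which also covers the boundary case $\kappa = 1$ where both sides equal $1$. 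Hence
\begin{equation*}
b\sum_{j=0}^{K_L-1} c^j = \frac{F_\theta^2}{L F_w}\cdot\frac{1-a}{\kappa} = \frac{F_\theta^2}{L F_w}\cdot\frac{L}{F_w}\,(1-a) = \frac{F_\theta^2}{F_w^2}\,(1-a) = \eta^2(1-a),
\end{equation*}
which is precisely the bound~\eqref{Le1}.

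I do not anticipate a genuine obstacle: the argument is a telescoping of Lemma~\ref{lem:interm2_strong} followed by evaluating a geometric series. The only two points needing a moment's care are (i) ensuring the per-step factor $1-\kappa$ is nonnegative so that iterating the bound is legitimate — this is exactly where $F_w \leq L$ (equivalently, the $\max(\cdot,0)$ in the statement) enters — and (ii) observing that the factor $1/\kappa = L/F_w$ produced by the geometric sum cancels the $L$ in the denominator of $b = F_\theta^2/(L F_w)$, leaving $\eta^2 = (F_\theta/F_w)^2$ exactly as claimed.
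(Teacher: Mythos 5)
Your proposal is correct and follows essentially the same route as the paper's proof: unroll the one-step bound of Lemma~\ref{lem:interm2_strong} over the $K_L$ inner Lookahead iterations (clipping the contraction factor at zero) and evaluate the resulting geometric series, using $\frac{F_\theta^2}{LF_w}\cdot\frac{1-a}{\kappa}=\eta^2(1-a)$. Your explicit remarks on why the clipping is legitimate and on the boundary case $\kappa=1$ are slightly more careful than the paper's treatment, but the argument is the same.
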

\begin{proof}
Recall that from Lemma \ref{lem:interm2_strong} we have that (recall that $\alpha = 1/L$), 
\begin{equation}
    \begin{aligned}
        \left\|w^{t, k+1}-w^{\star}\right\|^2 & \le \left(1-\frac{F_w}{L}\right)\left\| w^{t, k}-w^{\star} \right\|^2 + \frac{F_\theta^2}{L F_w} \left\| \theta^{t}-\theta^{\star} \right\|^2 \\
        & = \left(1-\kappa\right)\left\| w^{t, k}-w^{\star} \right\|^2 + \kappa\eta^2 \left\| \theta^{t}-\theta^{\star} \right\|^2 \\
        & \le \left(1-\kappa\right)_{+}\left\| w^{t, k}-w^{\star} \right\|^2 + \kappa\eta^2 \left\| \theta^{t}-\theta^{\star} \right\|^2,
    \end{aligned}
\end{equation}
where $(\cdot)_{+}$ means $\max(0, \cdot)$, i.e. lower clipping the value by zero.

Then for any $t \in \mathbb{N}$, considering the fact that $w^{t + 1} = w^{t , K_L}$, we have 
\begin{equation*}
\begin{aligned}
 \left\|w^{t+1}-w^{\star}\right\|^2 & =\left\|w^{t, K_L}-w^{\star}\right\|^2 \\
 & \leq(1-\kappa)_{+}\left\|w^{t, K_L-1}-w^{\star}\right\|^2+\eta^2 \kappa\left\|\theta^t-\theta^{\star}\right\|^2 \\
& \leq(1-\kappa)_{+}\left[(1-\kappa)\left\|w^{t, K_L-2}-w^{\star}\right\|^2+\eta^2 \kappa\left\|\theta^t-\theta^{\star}\right\|^2\right]+\eta^2 \kappa\left\|\theta^t-\theta^{\star}\right\|^2 \\
& =(1-\kappa)^2_{+}\left\|w^{t, K_L-2}-w^{\star}\right\|^2+\eta^2 \kappa(1+(1-\kappa)_{+})\left\|\theta^t-\theta^{\star}\right\|^2 \\
& \leq \cdots \\
& \leq(1-\kappa)^{K_L}_{+}\left\|w^{t, 0}-w^{\star}\right\|^2+\eta^2 \kappa \sum_{k=0}^{K_L-1}(1-\kappa)^k_{+}\left\|\theta^t-\theta^{\star}\right\|^2 \\
& =(1-\kappa)^{K_L}_{+}\left\|w^{t}-w^{\star}\right\|^2+\eta^2 \kappa \sum_{k=0}^{K_L-1}(1-\kappa)^k_{+}\left\|\theta^t-\theta^{\star}\right\|^2\\
&=(1-\kappa)^{K_L}_{+}\left\|w^{t}-w^{\star}\right\|^2+\eta^2 \left(1-(1-\kappa)^{K_L}_{+}\right)\left\|\theta^t-\theta^{\star}\right\|^2.
\end{aligned}
\end{equation*}
The last step above comes from the classical geometric series formula, as we can write:
\begin{equation*}
    \eta^2 \kappa \sum_{k = 0}^{\KL - 1} \left(1 - \kappa\right)_{+}^{k} = \eta^2 \kappa \frac{1 - \left(1 - \kappa\right)_{+}^{\KL}}{1 - \left(1 - \kappa\right)_{+}} = \eta^{2}\left(1 - \left(1 - \kappa\right)_{+}^{\KL}\right),
\end{equation*}
which completes the desired result.\end{proof}

\begin{lemma}\label{lem:interm3_strong-M}
Let $\left\{ (\theta^{t} , w^{t}) \right\}_{t \in \mathbb{N}}$ be a sequence of parameters generated by the Lookahead-Replicate algorithm with $K_R=1$. Set learning rates $\alpha = \frac{1}{L}$ and $\beta_{0:K_R-1} = \frac{1}{\kappa_1^2}\cdot\frac{A-1}{B + \sqrt{B^2 - 8A^2 }}$, where $A = \eta^2 (1-a)$, $B = (\kappa_1^2)^{-1}F_{w} - A $, $\eta = F_w^{-1}F_\theta$ and $a = (\max(1-L^{-1}F_w, 0))^{K_L}$. Assume $F_w>\max\{F_\theta, 7\kappa_1^2\}$. Then,
\begin{align*}
    \norm{(\theta^{t+1} , w^{t+1}) - (\theta^\star , w^{\star})} \leq \sigma\norm{(\theta^{t} , w^{t}) - (\theta^\star , w^{\star})},
\end{align*}
for some constant $\sigma < 1$. In particular, the pair $(\theta^{\star} , w^{\star})\in \mathcal{F}_{value}$.
\end{lemma}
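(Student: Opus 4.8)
The plan is to reduce the whole iteration to a two-dimensional linear recursion in the squared errors $x_t := \norm{\theta^t-\theta^\star}^2$ and $y_t := \norm{w^t-w^\star}^2$, and then to show that the resulting $2\times 2$ nonnegative ``gain matrix'' is a contraction for the prescribed step sizes $(\alpha,\beta)$. For the Lookahead half of the iteration I would simply quote Lemma~\ref{lem:interm3_strong}: with $\alpha=1/L$ it already gives $y_{t+1}\le a\,y_t + A\,x_t$ with $A=\eta^2(1-a)$. The genuinely new ingredient is the Replicate half. Since $K_R=1$ this is one gradient step $\theta^{t+1}=\theta^t-\beta\,\nabla_\theta G(\theta^t,w^{t+1})$, so I would expand $\norm{\theta^{t+1}-\theta^\star}^2$ using that $\nabla_\theta G(\theta^\star,w^\star)={\bf 0}$, which follows from $(\theta^\star,w^\star)\in\mathcal{F}_{value}$ exactly as in the derivation of \eqref{GradCond}.

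Two one-step estimates feed the recursion. The squared-gradient term is controlled via $\nabla_\theta G(\theta,w)=2\nabla v_\theta D(v_\theta-v_w)$, the boundedness of the Jacobian and of $D$, the $\kappa_1$-Lipschitzness of $v_\theta$, and $v_{\theta^\star}=v_{w^\star}$, which yield $\norm{\nabla_\theta G(\theta^t,w^{t+1})}^2\le 8\kappa_1^4(x_t+y_{t+1})$. For the inner-product term $\langle\nabla_\theta G(\theta^t,w^{t+1}),\theta^t-\theta^\star\rangle$ I would split it at $\theta^\star$ and at $w^\star$: the first piece produces a decrease of order $\beta F_w\,x_t$ using a curvature lower bound of $G(\cdot,w)$ in $\theta$ (available in the linear/structured setting, where $H$ and $G$ share the Hessian $2\nabla v^\top D\nabla v$, and which mirrors Assumption~\ref{assum1}), and the second piece is a $w$-coupling term of order $\kappa_1^2\norm{w^{t+1}-w^\star}$ (Lipschitzness of $\nabla_\theta G$ in its second argument) that I would peel off with Young's inequality. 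Altogether this gives $x_{t+1}\le(1-\beta F_w+8\beta^2\kappa_1^4)\,x_t+\beta\,c_w\,y_{t+1}$ with $c_w = O(\kappa_1^4/F_w)+O(\beta\kappa_1^4)$.

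Substituting the Lookahead bound for $y_{t+1}$ into this closes the system as $(x_{t+1},y_{t+1})\le M(x_t,y_t)$ for an explicit nonnegative $M$, and the heart of the proof is to show $M$ is a contraction, i.e.\ $x_t+y_t$ (or an equivalent weighted sum) decays geometrically. This is where the hypotheses bite: the curvature decrease $-\beta F_w x_t$ must dominate simultaneously the quadratic blow-up $8\beta^2\kappa_1^4 x_t$ and the feedback of the $\theta$-error through $y_{t+1}$ back into $x_{t+1}$ (the $c_w A$ contribution). Imposing $F_w>\max\{F_\theta,7\kappa_1^2\}$ is exactly what makes the resulting quadratic condition on $\beta$ solvable with a nonnegative discriminant, which is the source of the $\sqrt{B^2-8A^2}$, and the displayed value $\beta=\kappa_1^{-2}(A-1)/(B+\sqrt{B^2-8A^2})$ with $B=\kappa_1^{-2}F_w-A$ is precisely the choice minimizing the dominant eigenvalue of $M$ over the admissible range, pushing it below $1$ and delivering $\sigma<1$. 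Finally, the contraction forces $(\theta^t,w^t)\to(\theta^\star,w^\star)$; passing to the limit in the two update steps gives $\nabla_w H(\theta^\star,w^\star)={\bf 0}$ and $\nabla_\theta G(\theta^\star,w^\star)={\bf 0}$, and by the strong convexity of $H(\theta^\star,\cdot)$ in $w$ and of $G(\cdot,w^\star)$ in $\theta$ (together with realizability) these are equivalent to $v_{w^\star}=\mathcal{T}^\pi v_{\theta^\star}$ and $v_{\theta^\star}=v_{w^\star}$, i.e.\ $(\theta^\star,w^\star)\in\mathcal{F}_{value}$.

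The step I expect to be the main obstacle is closing this coupled recursion: the Replicate step can amplify the $\theta$-error both through $\beta^2\norm{\nabla_\theta G}^2$ and through its dependence on $w^{t+1}$, while the Lookahead step feeds $x_t$ into $y_{t+1}$, so one needs a tight and simultaneous choice of $\alpha$ and $\beta$ for the $2\times 2$ gain matrix to be genuinely contractive. Pinning down the admissible window for $\beta$ (hence the square-root formula) and the threshold on $F_w$ is the delicate bookkeeping, whereas the individual one-step inequalities are routine applications of strong convexity, the descent lemma, Lipschitzness, and Young's inequality.
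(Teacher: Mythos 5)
Your proposal matches the paper's proof essentially step for step: the same three-term expansion of $\|\theta^{t+1}-\theta^\star\|^2$ around $\nabla_\theta G(\theta^\star,w^\star)=\mathbf{0}$, the same bounds $8\beta^2\kappa_1^4(x_t+y_{t+1})$ and $2\beta\kappa_1^2(x_t+y_{t+1})$ on the squared-gradient and $w$-coupling terms, substitution of Lemma~\ref{lem:interm3_strong} for $y_{t+1}$, and the same quadratic-in-$\beta\kappa_1^2$ contraction analysis whose admissible window (the source of $\sqrt{B^2-8A^2}$) is nonempty precisely because $F_w>7\kappa_1^2$. The only point where your justification diverges is the curvature term: you posit strong convexity of $G(\cdot,w)$ in $\theta$ ``in the linear/structured setting,'' whereas the paper obtains the identical inequality $2(\theta^t-\theta^\star)^\top\nabla v_{\theta^t}D(v_{\theta^t}-v_{\theta^\star})\ge F_w\|\theta^t-\theta^\star\|^2$ with no extra hypothesis, by applying Assumption~\ref{assum1}(3) to $\nabla_w H(\theta^\star,\cdot)$ evaluated at the points $\theta^t,\theta^\star$ and using $\mathcal{T}v_{\theta^\star}=v_{w^\star}=v_{\theta^\star}$ (also, the prescribed $\beta$ is just a convenient interior point of the admissible window, not the minimizer of the dominant eigenvalue).
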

\begin{proof}
According to Algorithm Replicate($w,\theta,\beta_{0:K_R-1},K_R$) and the assumption that $K_R=1$, we have that $\theta^{t+1} = \theta^{t} - 2\beta\cdot \nabla v_{\theta^{t}} D (v_{\theta^{t}}-v_{w^{t+1}})$. Using this relation yields the following
\begin{align} \label{eqn:strong_main}
\norm{\theta^{t+1}-\theta^\star}^2 & = \norm{\theta^{t}-\theta^\star - 2\beta\cdot \nabla v_{\theta^{t}} D (v_{\theta^{t}}-v_{w^{t+1}})}^2 \nonumber\\
& =\norm{\theta^{t}-\theta^\star}^2-4\beta (\theta^{t}-\theta^\star)^\top \nabla v_{\theta^{t}} D (v_{\theta^{t}}-v_{w^{t+1}})+4\beta^2\norm{\nabla v_{\theta^{t}} D (v_{\theta^{t}}-v_{w^{t+1}})}^2 \nonumber \\
& =\norm{\theta^{t}-\theta^\star}^2-4\beta (\theta^{t}-\theta^\star)^\top\nabla v_{\theta^{t}} D (v_{\theta^{t}}-v_{\theta^{\star}})-4\beta (\theta^{t}-\theta^\star)^\top\nabla v_{\theta^{t}} D (v_{w^\star}-v_{w^{t+1}}) \nonumber \\
& +4\beta^2\norm{\nabla v_{\theta^{t}} D (v_{\theta^{t}}-v_{w^{t+1}})}^2,
\end{align}
where the last equality follows from the fact that $v_{\theta^{\star}} = v_{w^{\star}}$ since $(\theta^{\star} , w^{\star}) \in \mathcal{F}_{value}$. Next, we first bound the three terms on the right hand side separately, and plug them back later.

Now, using the $F_{w}$-strong convexity of the function $w \rightarrow H(\theta^{\star} , w)$, yields
\begin{align} \label{eqn:s_proof1}
 F_w\left\|\theta^{t}-\theta^\star\right\|^2 & \leq \left(\nabla_w H\left(\theta^\star, \theta^{t}\right)-\nabla_w H\left(\theta^\star, \theta^\star\right)\right)^{\top}\left(\theta^{t}-\theta^\star\right) \nonumber\\
 & = \nabla_w H\left(\theta^\star, \theta^{t}\right)^{\top}\left(\theta^{t}-\theta^\star\right) \nonumber \\
 & = 2(\theta^{t}-\theta^\star)^\top\nabla v_{\theta^{t}} D (v_{\theta^{t}}-v_{\theta^\star}),
\end{align}
where the first equality follows from the fact that $\nabla_w H\left(\theta^\star, \theta^\star\right)=2\nabla v_{\theta^{\star}} D (v_{\theta^{\star}}-\mathcal{T}v_{\theta^\star})=2\nabla v_{\theta^{\star}} D (v_{w^{\star}}-\mathcal{T}v_{\theta^\star})={\bf 0}$, since $(\theta^{\star} , w^{\star}) \in \mathcal{F}_{value}$ and $v_{w^{\star}} = \mathcal{T}v_{\theta^{\star}}$. The second equal sign comes from
$
\nabla_w H\left(\theta^\star, \theta^{t}\right)=2\nabla v_{\theta^{t}}\cdot D(v_{\theta^{t}}-\mathcal{T}v_{\theta^\star}).
$
Next, recall that from Assumption 4.1 we have $\norm{v_{\theta_1}-v_{\theta_2}}\leq \kappa_1 \norm{\theta_1-\theta_2}$ for all $\theta_1,\theta_2\in\Theta$, and the diagonal of $D$ is a probability distribution, hence
now \begin{equation}\label{eqn:s_proof2}
 4\beta(\theta^{t}-\theta^\star)^\top\nabla v_{\theta^{t}}\cdot D(v_{w^\star}-v_{w^{t+1}}) \leq 4\beta\norm{\theta^{t}-\theta^\star}\cdot \kappa_1^2 \norm{w^{t+1}-w^\star} \leq  2\beta\kappa_1^2(\norm{\theta^{t}-\theta^\star}^2+  \norm{w^{t+1}-w^\star}^2),
\end{equation}
where the second inequality uses the simple fact that $2ab\leq a^2+b^2$ for any $a,b \in \mathbb{R}$. Furthermore, 
\begin{align} \label{eqn:s_proof3}
4\beta^2\norm{\nabla v_{\theta^{t}}D(v_{\theta^{t}}-v_{w^{t+1}})}^2 & \leq 8\beta^2(\norm{\nabla v_{\theta^{t}}D(v_{\theta^{t}}-v_{\theta^\star})}^2+\norm{\nabla v_{\theta^{t}}D(v_{w^\star}-v_{w^{t+1}})}^2) \nonumber \\
& \leq 8\beta^2\kappa_1^4(\norm{\theta^{t}-\theta^{\star}}^2+\norm{w^{t+1}-w^\star}^2),
\end{align}
where the first inequality uses $(a+b)^2\leq 2a^2+2b^2$. Therefore, by integrating the bounds found in \eqref{eqn:s_proof1}, \eqref{eqn:s_proof2}, and \eqref{eqn:s_proof3} back into \eqref{eqn:strong_main}, we obtain 
\begin{align}
\norm{\theta^{t+1}-\theta^\star}^2 & = \norm{\theta^{t}-\theta^\star}^2-4\beta (\theta^{t}-\theta^\star)^\top\nabla v_{\theta^{t}} D (v_{\theta^{t}}-v_{\theta^{\star}})-4\beta (\theta^{t}-\theta^\star)^\top\nabla v_{\theta^{t}}D(v_{w^\star}-v_{w^{t+1}}) \nonumber \\
& + 4\beta^2\norm{\nabla v_{\theta^{t}}D(v_{\theta^{t}}-v_{w^{t+1}})}^2 \nonumber \\
& \leq \norm{\theta^{t}-\theta^\star}^2-2\beta F_w \left\|\theta^{t}-\theta^\star\right\|^2+ (2\beta\kappa_1^2 + 8\beta^2\kappa_1^4)\left(\norm{\theta^{t}-\theta^\star}^2+ \norm{w^\star-w^{t+1}}^2\right) \nonumber \\
& =\left(1-2\beta F_w+2\beta\kappa_1^2+8\beta^2\kappa_1^4\right)\left\|\theta^t-\theta^{\star}\right\|^2+\left(8\kappa_1^4\beta^2+2\kappa_1^2\beta\right)\left\|w^{t+1}-w^{\star}\right\|^2 \nonumber \\
& \leq \left(1-2\beta F_w+2\beta\kappa_1^2+8\beta^2\kappa_1^4\right)\left\|\theta^t-\theta^{\star}\right\|^2  +\left(8\kappa_1^4\beta^2+2\kappa_1^2\beta\right)\left(a\left\|w^{t}-w^{\star}\right\|^2+\eta^2 \left(1-a\right)\left\|\theta^t-\theta^{\star}\right\|^2\right) \nonumber \\
& =\left(1-2\beta F_w+2\beta\kappa_1^2(1 +4\beta\kappa_1^2)[1+\eta^2 (1-a)]\right)\left\|\theta^t-\theta^{\star}\right\|^2  +2a\kappa_1^2\beta\left(1 + 4\beta\kappa_1^2\right)\left\|w^{t}-w^{\star}\right\|^2,  \label{eqn:key1}
\end{align}
where the second inequality follows from Lemma \ref{lem:interm3_strong}. Combining \eqref{Le1} with \eqref{eqn:key1}, we obtain that
\begin{equation}
    \norm{\theta^{t+1}-\theta^\star}^2+\norm{w^{t+1}-w^\star}^2\leq E \norm{\theta^{t}-\theta^\star}^2+ G \norm{w^{t}-w^\star}^2
\end{equation}
where
\begin{equation}\label{eq:defineE}
E:=1-2\beta F_w+2\beta\kappa_1^2[1+\eta^2 (1-a)]+8\beta^2\kappa_1^4[1+\eta^2 (1-a)]+\eta^2(1-a)
\end{equation}
and
\begin{equation}
G:=8a\kappa_1^4\beta^2+2a\kappa_1^2\beta + a
\end{equation}
We will show that, if $F_w \ge \max\{ 6\kappa_1^2, F_\theta, L\} $ and $\beta = \frac{\eta^2 (1-a) }{ F_w-\kappa_1^2-\kappa_1^2\eta^2(1-a) + \sqrt{\left(F_w-\kappa_1^2-\kappa_1^2\eta^2(1-a)\right)^2 - 8\kappa_1^4(1+\eta^2(1-a))^2 }} $, we have that $0 < E, G < 1$ holds. Then we can complete the proof of contraction by letting $\sigma = \max(E, G) $.

Now we first prove $0 < E < 1$. For simplicity, we first define some constant:
\begin{align}
    & x := \beta \kappa_1^2 \label{eq:defx} \\
    & A := 1 + \eta^2(1-a)  \label{eq:defA} \\
    & B := \frac{F_w}{\kappa_1^2} - A = \frac{F_w}{\kappa_1^2} - 1 - \eta^2(1-a) \label{eq:defB}
\end{align}
With the definition of $A$ \eqref{eq:defA} and $B$  \eqref{eq:defB}, we can simplify our chosen value of $\beta$ as:
\begin{align*}
    \beta &= \frac{\eta^2 (1-a) }{ F_w-\kappa_1^2-\kappa_1^2\eta^2(1-a) + \sqrt{\left(F_w-\kappa_1^2-\kappa_1^2\eta^2(1-a)\right)^2 - 8\kappa_1^4(1+\eta^2(1-a))^2 }} \\
    &= \frac{1}{\kappa_1^2}\cdot\frac{A-1}{B + \sqrt{B^2 - 8A^2 }} 
\end{align*}
It is straightforward to check the following properties about $\eta, a, A, B$, which will be useful in the later parts of the proof.
\begin{align}
    & 0 <  \frac{F_\theta}{F_w} = \eta = \frac{F_\theta}{F_w} <  1 \\
    & 0 \le \max(1-\kappa, 0)^{K_L} = a = \max(1-\kappa, 0)^{K_L} < 1 \\
    & 1 < A < 2 \\
    & B \ge 7 - A > 6 > 3A \\
    & x = \beta \kappa_1^2 = \frac{A-1}{B + \sqrt{B^2 - 8A^2 }} < \frac{2 - 1}{7 + A} < \frac{1}{8} \label{eq:defx2}
\end{align}
As we can see $B^2 - 8A^2 > A^2 > 0$, our set value to $\beta$ is real and positive.

By the definition of $A$~\eqref{eq:defA}, $B$ \eqref{eq:defB}, and $x$ \eqref{eq:defx2}, we can rewrite $E$ \eqref{eq:defineE} as:
\begin{align*}
    E & = 8Ax^2 - 2Bx + A = 8A \left( x - \frac{B}{8A} \right)^2 + A - \frac{B^2}{8A} \\
    & = \frac{1}{8A} \left[ \left( B - 8A x \right)^2 - \left( B^2 - 8A^2 \right) \right]
\end{align*}
Notice that $B - 8A x > 0$. In order to upper and lower bound the whole term $E$, we only need to lower and upper bound $8Ax$. Now we first upper bound $8Ax$. Notice that $A < 2$,
\begin{align*}
    8Ax &= \frac{8A(A-1)}{B + \sqrt{B^2 - 8A^2}} \\
    &< \frac{8A^2}{B + \sqrt{B^2 - 8A^2}} \\
    &= \frac{(B + \sqrt{B^2 - 8A^2})(B - \sqrt{B^2 - 8A^2})}{B + \sqrt{B^2 - 8A^2}} \\
    &= B - \sqrt{B^2 - 8A^2}
\end{align*}
The second to last equation comes from $(B - \sqrt{B^2 - 8A^2})(B + \sqrt{B^2 - 8A^2}) = 8A^2$ at the same time. 

Now we plug this back to the expression of $E$. Notice that $B - 8A x > B - (B - \sqrt{B^2 - 8A^2}) > 0$, thus 
\begin{align*}
    \left( B - 8A x \right)^2 - \left( B^2 - 8A^2 \right) &> \left( B - \left(B - \sqrt{B^2 - 8A^2}\right) \right)^2 - \left( B^2 - 8A^2 \right) \\
    & = \left( \sqrt{B^2 - 8A^2} \right)^2 - \left( B^2 - 8A^2 \right) = 0
\end{align*}
Thus $E > 0$.

Now we try to lower bound $8Ax$,
\begin{align*}
    8Ax &= \frac{8A(A-1)}{B + \sqrt{B^2 - 8A^2}} \\ 
    &> \frac{8A(A-1)}{B + \sqrt{B^2 - 8A^2 + 8A}} \\
    &=\frac{(B + \sqrt{B^2 - 8A^2 + 8A})(B - \sqrt{B^2 - 8A^2 + 8A})}{B + \sqrt{B^2 - 8A^2 + 8A}} \\
    &= B - \sqrt{B^2 - 8A^2 + 8A}
\end{align*}
The second to last equation comes from $(B + \sqrt{B^2 - 8A^2 + 8A})(B - \sqrt{B^2 - 8A^2 + 8A}) = 8A^2 - 8A = 8A(A-1)$. Now we plug this back to the expression of $E$. Because $B - (B - \sqrt{B^2 - 8A^2 + 8A}) > B - 8A x > 0$, we have that 
\begin{align*}
    E - 1 &= \frac{1}{8A} \left[ \left( B - 8A x \right)^2 - \left( B^2 - 8A^2 + 8A \right) \right] \\
    &< \frac{1}{8A} \left[ \left( B - (B - \sqrt{B^2 - 8A^2 + 8A}) \right)^2 - \left( B^2 - 8A^2 + 8A \right) \right] \\
    &= \frac{1}{8A} \left[ \left( \sqrt{B^2 - 8A^2 + 8A} \right)^2 - \left( B^2 - 8A^2 + 8A \right) \right] \\
    &= 0
\end{align*}
This finishes the proof of $0 < E < 1$.

Now we are going to prove that $0 < G <1$. By definition, it is straightforward to see $G > 0$. In order to show $G < 1$, we need to prove
\begin{align*}
    8ax^2+2ax < 1-a
\end{align*}
\begin{align*}
    8ax^2+2ax & = \frac{8a (A-1)^2 }{\left( B + \sqrt{B^2-8A^2} \right)^2} + \frac{2a (A-1) }{\left( B + \sqrt{B^2-8A^2} \right)} \\
    & < \frac{8a (A-1)^2 }{\left( 6 + \sqrt{9A^2-8A^2} \right)^2} + \frac{2a (A-1) }{\left( 6 + \sqrt{9A^2-8A^2} \right)} \\
    & = \frac{8a (A-1)^2 }{49} + \frac{2a (A-1) }{7} \\
    &= \frac{8a \eta^4 (1-a)^2 }{49} + \frac{2a \eta^2(1-a) }{7} \\
    &< \frac{8 (1-a) }{49} + \frac{2(1-a)}{7} \\
    &= \frac{22(1-a)}{49} <  1-a
\end{align*}
Thus we finish the proof of $0 < G < 1$. Taking that $\sigma = \max(E, G) < 1$ finishes the proof of theorem statement.

\end{proof}

\begin{theorem}[Restatement of Theorem \ref{thm:interm3_strong-M-temp} in the main paper]
\label{thm:interm3_strong-M}
Let $\left\{ (\theta^{t} , w^{t}) \right\}_{t \in \mathbb{N}}$ be a sequence of parameters generated by the Lookahead-Replicate algorithm. Set learning rates $\alpha = \frac{1}{L}$ and $\beta_0 = \frac{1}{\kappa_1^2}\cdot\frac{A-1}{B + \sqrt{B^2 - 8A^2 }}$, where $A = \eta^2 (1-a)$, $B = (\kappa_1^2)^{-1}F_{w} - A $, $\eta = F_w^{-1}F_\theta$ and $a = (\max(1-L^{-1}F_w, 0))^{K_L}$. For $1\leq k\leq K_R-1$, set $\beta_k=\frac{1}{\kappa_1^2}\frac{3J+\sqrt{J^2-32}}{32}$, where $J=2F_w(1-\zeta)/\kappa_1^2-2$ and $\zeta = \max\{a,\eta^2(1-a)\}<1$. Assume $F_w>\max\{F_\theta, 7\kappa_1^2, \frac{4\kappa_1^2}{1-\zeta}\}$. Then,
\begin{align*}
    \norm{(\theta^{t+1} , w^{t+1}) - (\theta^\star , w^{\star})} \leq \sigma\norm{(\theta^{t} , w^{t}) - (\theta^\star , w^{\star})},
\end{align*}
for some constant $\sigma < 1$. In particular, the pair $(\theta^{\star} , w^{\star})\in \mathcal{F}_{value}$.
\end{theorem}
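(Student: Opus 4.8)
The plan is to reduce the $K_R\ge 1$ case to an unrolled version of Lemma~\ref{lem:interm3_strong-M}, composed with the Lookahead contraction of Lemma~\ref{lem:interm3_strong}. Write the Replicate iterates as $\theta^{t,0}=\theta^t$ and $\theta^{t,k+1}=\theta^{t,k}-2\beta_k\nabla v_{\theta^{t,k}}D\bigl(v_{\theta^{t,k}}-v_{w^{t+1}}\bigr)$ for $0\le k\le K_R-1$, so that $\theta^{t+1}=\theta^{t,K_R}$. My first step is to check that \emph{every} Replicate step obeys the same one-step recursion derived in Lemma~\ref{lem:interm3_strong-M}: expand $\|\theta^{t,k+1}-\theta^\star\|^2$, split $v_{\theta^{t,k}}-v_{w^{t+1}}=(v_{\theta^{t,k}}-v_{\theta^\star})+(v_{w^\star}-v_{w^{t+1}})$ using $v_{\theta^\star}=v_{w^\star}$, apply the strong-convexity identity $2(\theta^{t,k}-\theta^\star)^\top\nabla v_{\theta^{t,k}}D(v_{\theta^{t,k}}-v_{\theta^\star})=\nabla_w H(\theta^\star,\theta^{t,k})^\top(\theta^{t,k}-\theta^\star)\ge F_w\|\theta^{t,k}-\theta^\star\|^2$ (which uses only $\nabla_w H(\theta^\star,\theta^\star)=\mathbf{0}$, valid since $(\theta^\star,w^\star)\in\mathcal F_{value}$ forces $v_{\theta^\star}=v_{w^\star}=\mathcal T v_{\theta^\star}$), and control the remaining two terms by Cauchy--Schwarz, the $\kappa_1$-Lipschitzness of $v$, and $\|D\|\le 1$, exactly as in \eqref{eqn:s_proof1}--\eqref{eqn:s_proof3}. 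This gives, for every $k$,
\[
\|\theta^{t,k+1}-\theta^\star\|^2\le p_k\,\|\theta^{t,k}-\theta^\star\|^2+q_k\,\|w^{t+1}-w^\star\|^2,
\]
with $p_k:=1-2\beta_k F_w+2\beta_k\kappa_1^2+8\beta_k^2\kappa_1^4$ and $q_k:=2\beta_k\kappa_1^2+8\beta_k^2\kappa_1^4$; nothing in this derivation used $k=0$ specifically.

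Next I would unroll this inequality over $k=0,\dots,K_R-1$ to obtain
\[
\|\theta^{t+1}-\theta^\star\|^2\le P\,\|\theta^{t}-\theta^\star\|^2+Q\,\|w^{t+1}-w^\star\|^2,\qquad P:=\prod_{k=0}^{K_R-1}p_k,\quad Q:=\sum_{j=0}^{K_R-1}q_j\prod_{k=j+1}^{K_R-1}p_k,
\]
then substitute the Lookahead bound $\|w^{t+1}-w^\star\|^2\le a\|w^t-w^\star\|^2+\eta^2(1-a)\|\theta^t-\theta^\star\|^2$ of Lemma~\ref{lem:interm3_strong} and add $\|w^{t+1}-w^\star\|^2$ to both sides, which yields
\[
\|\theta^{t+1}-\theta^\star\|^2+\|w^{t+1}-w^\star\|^2\le E\,\|\theta^{t}-\theta^\star\|^2+G\,\|w^t-w^\star\|^2,
\]
with $E:=P+(Q+1)\eta^2(1-a)$ and $G:=(Q+1)a$. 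Once $0<E<1$ and $0<G<1$ are established, taking $\sigma:=\sqrt{\max(E,G)}$ completes the contraction since $\|(\theta,w)\|^2=\|\theta\|^2+\|w\|^2$. For $K_R=1$ this collapses to the very quantities $E$ and $G$ analyzed in Lemma~\ref{lem:interm3_strong-M}; for $K_R>1$ the extra factors $\prod_{k\ge 1}p_k\le 1$ only shrink $P$ and the leading part of $Q$, at the price of the new additive tail $\sum_{j\ge 1}q_j\prod_{k>j}p_k$.

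The final and hardest step is the step-size calibration. Choosing $\beta_0$ as in Lemma~\ref{lem:interm3_strong-M} makes $p_0$ and $q_0$ coincide with the optimized values used there, so that the first Replicate step already absorbs the cross term $\eta^2(1-a)\|\theta^t-\theta^\star\|^2$ injected by Lookahead, and the $K_R=1$ verification of $E<1$ (via the conjugate-radical bounds on $8A\beta_0\kappa_1^2$) and $G<1$ carries over verbatim. For $k\ge1$ the remaining Replicate steps only need to keep contracting the $\theta$-error without re-amplifying the $w$-error, which is why one takes all $\beta_k$ ($1\le k\le K_R-1$) equal to $\frac{1}{\kappa_1^2}\cdot\frac{3J+\sqrt{J^2-32}}{32}$ with $J=2F_w(1-\zeta)/\kappa_1^2-2$: this is the root of the quadratic in $\beta_k\kappa_1^2$ that simultaneously keeps every $p_k$ strictly inside $(0,1)$ (so the products $\prod_{k>j}p_k$ decay geometrically) and forces the tail $\sum_{j\ge1}q_j\prod_{k>j}p_k$ below the slack left in $E$ and $G$ by the $K_R=1$ estimates. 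The hypotheses are exactly what make this consistent: $F_w>4\kappa_1^2/(1-\zeta)$ gives $J>6$, hence $J^2-32>0$ and $\beta_k$ real and positive, while $F_w>7\kappa_1^2$ and $F_w>F_\theta$ are the conditions under which $\beta_0$ (equivalently $B^2-8A^2>0$) is well defined as in Lemma~\ref{lem:interm3_strong-M}. Plugging the two step sizes into $E$ and $G$ and rerunning the scalar bookkeeping at the end of that lemma's proof — now with the extra $k\ge1$ terms bounded crudely using $\zeta<1$ — gives $0<E,G<1$. I expect this two-parameter tuning to be the main obstacle: one must pick $\beta_0$ to neutralize the Lookahead cross term and $\beta_k$ ($k\ge1$) to keep \emph{all} contraction factors $p_k$ inside $(0,1)$ while the accumulated error coefficient $Q$ stays small, and then certify the resulting rational inequalities in $F_w,\kappa_1,\eta,a,\zeta$. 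Finally, $(\theta^\star,w^\star)\in\mathcal F_{value}$ is immediate: the whole argument is written relative to a point characterized by $v_{w^\star}=\mathcal T v_{\theta^\star}$ and $v_{\theta^\star}=v_{w^\star}$ (equivalently $\nabla_w H(\theta^\star,w^\star)=\mathbf{0}$ together with $v_{\theta^\star}=v_{w^\star}$), so the unique fixed point of the contraction satisfies both defining constraints of $\mathcal F_{value}$.
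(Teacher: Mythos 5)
Your derivation of the per-step Replicate recursion (the $p_k$, $q_k$ bound via \eqref{eqn:s_proof1}--\eqref{eqn:s_proof3}) matches the paper's, and your $K_R=1$ case does collapse to Lemma~\ref{lem:interm3_strong-M}. But the unrolling strategy for $K_R>1$ has a genuine gap that is not repaired by "rerunning the scalar bookkeeping." After unrolling you get $\|\theta^{t+1}-\theta^\star\|^2\le P\|\theta^t-\theta^\star\|^2+Q\|w^{t+1}-w^\star\|^2$ and hence $G=(Q+1)a$. The last term of $Q$ is $q_{K_R-1}$ times an empty product, so $Q\ge q_{K_R-1}=2x+8x^2$ with $x=\beta_{K_R-1}\kappa_1^2=\frac{3J+\sqrt{J^2-32}}{32}\ge\frac{20}{32}$ (since $J>6$ under the stated assumptions), giving $Q\ge 4.3$ and $G\ge 5.3\,a$. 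Nothing in the hypotheses forces $a<0.19$, so $G<1$ fails in general: the prescribed $\beta_k$ for $k\ge1$ are deliberately large and are not calibrated for your $\theta$-only recursion. Relatedly, your claim that these $\beta_k$ keep every $p_k$ strictly inside $(0,1)$ is false: $p_k=1-J'x+8x^2$ with $J'=2F_w/\kappa_1^2-2\ge J/(1-\zeta)$, and once $\zeta$ is bounded away from zero the chosen $x$ lies strictly between the two roots of $8x^2-J'x+1$, so $p_k$ is negative (e.g.\ $a=0.9$, $F_w=41\kappa_1^2$ gives $p_k\approx-48$); the products $\prod_{k>j}p_k$ are then not contraction factors without clipping, and even after clipping the $Q$ problem above remains.

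The paper's proof avoids this by never separating the $\theta$- and $w$-errors inside the inner loop. It runs an induction on the combined quantity $\|\theta^{t,k}-\theta^\star\|^2+\|w^{t+1}-w^\star\|^2$ measured against the initial $\|\theta^{t,0}-\theta^\star\|^2+\|w^{t}-w^\star\|^2$: at each inner step it adds $(1-2\beta_kF_w)\|w^{t+1}-w^\star\|^2$ to both sides and re-substitutes Lemma~\ref{lem:interm3_strong}, so that the full strong-convexity decrease $-2\beta_kF_w$ is charged against the \emph{combined} error and only the fraction $\zeta=\max\{a,\eta^2(1-a)\}$ of it is returned. This yields the per-inner-step factor $1-2\beta_kF_w(1-\zeta)+2\beta_k\kappa_1^2+8\beta_k^2\kappa_1^4=1-Jx+8x^2$, which is precisely what the large $\beta_k$ are tuned to place in $(0,1)$. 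In your decomposition the $-2\beta_kF_w$ gain acts only on the $\theta$-block (driving $p_k$ negative) while the $w$-coefficient $q_k$ accumulates unchecked; the two bookkeepings are therefore not interchangeable. To fix your argument you would either need to adopt the paper's combined-error induction, or use small step sizes at every inner iteration rather than the $\beta_k$ in the statement.
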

\begin{proof}
According to Algorithm Replicate($w,\theta,\beta_{0:K-1},K$), we have that $\theta^{t,k+1} = \theta^{t,k} - 2\beta_k\cdot \nabla v_{\theta^{t,k}} D (v_{\theta^{t,k}}-v_{w^{t+1}})$. Using this relation yields the following
\begin{align} 
\norm{\theta^{t,k+1}-\theta^\star}^2 & = \norm{\theta^{t,k}-\theta^\star - 2\beta_k\cdot \nabla v_{\theta^{t,k}} D (v_{\theta^{t,k}}-v_{w^{t+1}})}^2 \nonumber\\
& =\norm{\theta^{t,k}-\theta^\star}^2-4\beta_k (\theta^{t,k}-\theta^\star)^\top \nabla v_{\theta^{t,k}} D (v_{\theta^{t,k}}-v_{w^{t+1}})+4\beta^2_k\norm{\nabla v_{\theta^{t,k}} D (v_{\theta^{t,k}}-v_{w^{t+1}})}^2 \nonumber \\
& =\norm{\theta^{t,k}-\theta^\star}^2-4\beta_k (\theta^{t,k}-\theta^\star)^\top\nabla v_{\theta^{t,k}} D (v_{\theta^{t,k}}-v_{\theta^{\star}})-4\beta_k (\theta^{t,k}-\theta^\star)^\top\nabla v_{\theta^{t,k}} D (v_{w^\star}-v_{w^{t+1}}) \nonumber \\
& +4\beta_k^2\norm{\nabla v_{\theta^{t,k}} D (v_{\theta^{t,k}}-v_{w^{t+1}})}^2,
\end{align}
where the last equality follows from the fact that $v_{\theta^{\star}} = v_{w^{\star}}$ since $(\theta^{\star} , w^{\star}) \in \mathcal{F}_{value}$. Next, we first bound the three terms on the right hand side separately, and plug them back later.

Now, using the $F_{w}$-strong convexity of the function $w \rightarrow H(\theta^{\star} , w)$, yields
\begin{align} 
 F_w\left\|\theta^{t,k}-\theta^\star\right\|^2 & \leq \left(\nabla_w H\left(\theta^\star, \theta^{t,k}\right)-\nabla_w H\left(\theta^\star, \theta^\star\right)\right)^{\top}\left(\theta^{t,k}-\theta^\star\right) \nonumber\\
 & = \nabla_w H\left(\theta^\star, \theta^{t,k}\right)^{\top}\left(\theta^{t,k}-\theta^\star\right) \nonumber \\
 & = 2(\theta^{t,k}-\theta^\star)^\top\nabla v_{\theta^{t,k}} D (v_{\theta^{t,k}}-v_{\theta^\star}),
\end{align}
where the first equality follows from the fact that $\nabla_w H\left(\theta^\star, \theta^\star\right)=2\nabla v_{\theta^{\star}} D (v_{\theta^{\star}}-\mathcal{T}v_{\theta^\star})=2\nabla v_{\theta^{\star}} D (v_{w^{\star}}-\mathcal{T}v_{\theta^\star})={\bf 0}$, since $(\theta^{\star} , w^{\star}) \in \mathcal{F}_{value}$ and $v_{w^{\star}} = \mathcal{T}v_{\theta^{\star}}$. The second equal sign comes from
$
\nabla_w H\left(\theta^\star, \theta^{t,k}\right)=2\nabla v_{\theta^{t,k}}\cdot D(v_{\theta^{t,k}}-\mathcal{T}v_{\theta^\star}).
$
Next, recall that from Assumption 4.1 we have $\norm{v_{\theta_1}-v_{\theta_2}}\leq \kappa_1 \norm{\theta_1-\theta_2}$ for all $\theta_1,\theta_2\in\Theta$, and the diagonal of $D$ is a probability distribution, hence
now \begin{equation}
\begin{aligned}
 & 4\beta_k(\theta^{t,k}-\theta^\star)^\top\nabla v_{\theta^{t,k}}\cdot D(v_{w^\star}-v_{w^{t+1}}) \leq  4\beta_k\norm{\theta^{t,k}-\theta^\star}\cdot \kappa_1^2 \norm{w^{t+1}-w^\star}\\ 
 \leq & 2\beta_k\kappa_1^2(\norm{\theta^{t,k}-\theta^\star}^2+  \norm{w^{t+1}-w^\star}^2)
 \end{aligned}
\end{equation}
where the second inequality uses the simple fact that $2ab\leq a^2+b^2$ for any $a,b \in \mathbb{R}$. Furthermore, 
\begin{align}
4\beta_k^2\norm{\nabla v_{\theta^{t,k}}D(v_{\theta^{t,k}}-v_{w^{t+1}})}^2 & \leq 8\beta_k^2(\norm{\nabla v_{\theta^{t,k}}D(v_{\theta^{t,k}}-v_{\theta^\star})}^2+\norm{\nabla v_{\theta^{t,k}}D(v_{w^\star}-v_{w^{t+1}})}^2) \nonumber \\
& \leq 8\beta_k^2\kappa_1^4(\norm{\theta^{t,k}-\theta^{\star}}^2+\norm{w^{t+1}-w^\star}^2),
\end{align}
where the first inequality uses $(a+b)^2\leq 2a^2+2b^2$. Next, we prove for any $k\in[K_R-1]$, it holds that 
\begin{equation}
    \norm{\theta^{t,k}-\theta^\star}^2+\norm{w^{t+1}-w^\star}^2\leq \sigma_k (\norm{\theta^{t,0}-\theta^\star}^2+ \norm{w^{t}-w^\star}^2)
\end{equation}
for some $\sigma_k<1$.

We prove this by induction. For $k=1$, by Lemma~\ref{lem:interm3_strong-M} and the choice of $\beta_0$ we have the conclusion holds true. Now assume that the above holds true for a iteration number $k$, then for $k+1$, by integrating the bounds found in \eqref{eqn:s_proof1}, \eqref{eqn:s_proof2}, and \eqref{eqn:s_proof3} back into \eqref{eqn:strong_main}, we obtain 
\begin{align}
\norm{\theta^{t,k+1}-\theta^\star}^2 & = \norm{\theta^{t,k}-\theta^\star}^2-4\beta_k (\theta^{t,k}-\theta^\star)^\top\nabla v_{\theta^{t,k}} D (v_{\theta^{t,k}}-v_{\theta^{\star}})-4\beta_k (\theta^{t}-\theta^\star)^\top\nabla v_{\theta^{t}}D(v_{w^\star}-v_{w^{t+1}}) \nonumber \\
& + 4\beta_k^2\norm{\nabla v_{\theta^{t,k}}D(v_{\theta^{t,k}}-v_{w^{t+1}})}^2 \nonumber \\
& \leq \norm{\theta^{t,k}-\theta^\star}^2-2\beta_k F_w \left\|\theta^{t,k}-\theta^\star\right\|^2+ (2\beta_k\kappa_1^2 + 8\beta_k^2\kappa_1^4)\left(\norm{\theta^{t,k}-\theta^\star}^2+ \norm{w^\star-w^{t+1}}^2\right) \nonumber \\
& \leq \norm{\theta^{t,k}-\theta^\star}^2-2\beta_k F_w \left\|\theta^{t,k}-\theta^\star\right\|^2+ (2\beta_k\kappa_1^2 + 8\beta_k^2\kappa_1^4)\left(\norm{\theta^{t,0}-\theta^\star}^2+ \norm{w^\star-w^{t}}^2\right) \nonumber 
\label{eqn:key1}
\end{align}
where the second inequality uses induction hypothesis. The above is equivalent to 
\begin{equation}
\begin{aligned}
 &\norm{\theta^{t,k+1}-\theta^\star}^2 
 \leq \norm{\theta^{t,k}-\theta^\star}^2-2\beta_k F_w \left\|\theta^{t,k}-\theta^\star\right\|^2+ (2\beta_k\kappa_1^2 + 8\beta_k^2\kappa_1^4)\left(\norm{\theta^{t,0}-\theta^\star}^2+ \norm{w^\star-w^{t}}^2\right) \\
 \Leftrightarrow & \norm{\theta^{t,k+1}-\theta^\star}^2+(1-2\beta_kF_w)\norm{w^\star-w^{t+1}}^2 
 \leq (1-2\beta_k F_w) [\left\|\theta^{t,k}-\theta^\star\right\|^2+\norm{w^\star-w^{t+1}}^2 ]\\
 +& (2\beta_k\kappa_1^2 + 8\beta_k^2\kappa_1^4)\left(\norm{\theta^{t,0}-\theta^\star}^2+ \norm{w^\star-w^{t}}^2\right)\\
  \Leftrightarrow & \norm{\theta^{t,k+1}-\theta^\star}^2+(1-2\beta_kF_w)\norm{w^\star-w^{t+1}}^2 
 \leq (1-2\beta_k F_w+2\beta_k\kappa_1^2 + 8\beta_k^2\kappa_1^4) \left(\norm{\theta^{t,0}-\theta^\star}^2+ \norm{w^\star-w^{t}}^2\right)\\
   \Leftrightarrow & \norm{\theta^{t,k+1}-\theta^\star}^2+\norm{w^\star-w^{t+1}}^2 
 \leq (1-2\beta_k F_w+2\beta_k\kappa_1^2 + 8\beta_k^2\kappa_1^4) \left(\norm{\theta^{t,0}-\theta^\star}^2+ \norm{w^\star-w^{t}}^2\right)\\
 +& 2\beta_kF_w \left(a\left\|w^{t}-w^{\star}\right\|^2+\eta^2 \left(1-a\right)\left\|\theta^t-\theta^{\star}\right\|^2\right),
\end{aligned}
\end{equation}
where the last inequality uses Lemma~\ref{lem:interm3_strong}. Now, let $\zeta = \max\{a,\eta^2(1-a)\}<1$, then above implies 
\[
\norm{\theta^{t,k+1}-\theta^\star}^2+\norm{w^\star-w^{t+1}}^2 
 \leq (1-2\beta_k F_w(1-\zeta)+2\beta_k\kappa_1^2 + 8\beta_k^2\kappa_1^4) \left(\norm{\theta^{t,0}-\theta^\star}^2+ \norm{w^\star-w^{t}}^2\right)
\]
Denote $x:=\beta_k\kappa_1^2$ and $J=2F_w(1-\zeta)/\kappa_1^2-2$, then above is equivalent to 
\begin{equation}\label{eqn:main_ineq}
\norm{\theta^{t,k+1}-\theta^\star}^2+\norm{w^\star-w^{t+1}}^2 
 \leq (1-Jx+8x^2) \left(\norm{\theta^{t,0}-\theta^\star}^2+ \norm{w^\star-w^{t}}^2\right)
\end{equation}
It remains to show 
\begin{equation}\label{eqn:main_eq}
0< 1-Jx+8x^2 <1.
\end{equation}
Since $F_w\geq\frac{4\kappa_1^2}{1-\zeta}$, this implies $J\geq 6$ which further implies $J^2-32\geq 0$. Next, since $x>0$, the above is equivalent to 
\begin{equation}\label{eqn:sss}
\frac{J+\sqrt{J^2-32}}{16}<x<\frac{J}{8}.
\end{equation}
By our choice 
\[
\beta_k=\frac{1}{\kappa_1^2}\frac{3J+\sqrt{J^2-32}}{32}\Leftrightarrow x = \frac{1}{2}\left(\frac{J+\sqrt{J^2-32}}{16}+\frac{J}{8}\right),
\]
then \eqref{eqn:sss} is satisfied. Finally, combine \eqref{eqn:main_ineq} and \eqref{eqn:main_eq}, by induction we receive the conclusion that 
\[
\norm{\theta^{t,k}-\theta^\star}^2+\norm{w^\star-w^{t+1}}^2 
 \leq \sigma_k \left(\norm{\theta^{t,0}-\theta^\star}^2+ \norm{w^\star-w^{t}}^2\right)
\]
where $\sigma_1$ is defined in Lemma~\ref{lem:interm3_strong-M} and $\sigma_k:= 1-Jx+8x^2<1$ for all $2\leq k\leq K_R$ with $x=\frac{3J+\sqrt{J^2-32}}{32}$. In particular, choose $k=K_R-1$ and $\sigma^2 = \sigma_{K_R}$, we have
\[
\norm{\theta^{t+1}-\theta^\star}^2+\norm{w^\star-w^{t+1}}^2 
 \leq \sigma^2 \left(\norm{\theta^{t}-\theta^\star}^2+ \norm{w^\star-w^{t}}^2\right)
\]
where we used $\theta^{t,K_R-1}=\theta^{t+1}$ and $\theta^{t,0}=\theta^t$. This is exactly our claim.

\end{proof}

\begin{remark}
We do emphasize the theoretical convergence analysis is for the population update (where we assume the access to $H$ that contains the population operator $\mathcal{T}$), where the in practice the algorithm work in the data-driven fashion. To incorporate this perspective, we could conduct the finite sample analysis that is similar to \cite{yang2018finite, wu2020finite} for actor-critic algorithms and \cite{yin2021towards,yin2022near} for offline RL. We leave these data-driven analysis for future work.  
\end{remark}

\begin{corollary}
As $t\rightarrow\infty$,
\[
\norm{V_{\theta^t}-V_{w^t}} \leq \sqrt{2}\kappa_1\sigma^{t-1}\sqrt{\norm{\theta^{0}-\theta^\star}^2+\norm{w^\star-w^{0}}^2}\rightarrow 0.
\]
In addition, we also have
\[
\norm{V_{w^t}-\mathcal{T}V_{\theta^t}} \leq \sqrt{2}\kappa_1\sigma^{t-1}\sqrt{\norm{\theta^{0}-\theta^\star}^2+\norm{w^\star-w^{0}}^2}\rightarrow 0.
\]
\end{corollary}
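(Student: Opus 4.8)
The plan is to deduce the Corollary from the geometric contraction already established in Theorem~\ref{thm:interm3_strong-M}, turning the joint-iterate convergence into convergence of the two function-space residuals by nothing more than the Lipschitz bound of Assumption~4.1 together with the two defining relations of a point $(\theta^\star,w^\star)\in\mathcal{F}_{value}$.

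\textbf{Step 1 (iterate the contraction).} Theorem~\ref{thm:interm3_strong-M} gives $\norm{(\theta^{t+1},w^{t+1}) - (\theta^\star,w^\star)} \le \sigma\,\norm{(\theta^{t},w^{t}) - (\theta^\star,w^\star)}$ with $\sigma<1$, so a one-line induction yields $\norm{(\theta^{t},w^{t}) - (\theta^\star,w^\star)} \le \sigma^{t}\,\norm{(\theta^{0},w^{0}) - (\theta^\star,w^\star)} \le \sigma^{t-1}\,\norm{(\theta^{0},w^{0}) - (\theta^\star,w^\star)}$, the last step just because $\sigma<1$. I will also record the elementary bound $\norm{\theta^{t}-\theta^\star} + \norm{w^{t}-w^\star} \le \sqrt{2}\,\sqrt{\norm{\theta^{t}-\theta^\star}^2 + \norm{w^{t}-w^\star}^2} = \sqrt{2}\,\norm{(\theta^{t},w^{t}) - (\theta^\star,w^\star)}$ (Cauchy--Schwarz on $\mathbb{R}^2$), which is what produces the $\sqrt{2}$ factor.

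\textbf{Step 2 (the value gap).} Write $v_{\theta^t}-v_{w^t} = (v_{\theta^t}-v_{\theta^\star}) + (v_{\theta^\star}-v_{w^\star}) + (v_{w^\star}-v_{w^t})$, and note the middle term vanishes since $v_{\theta^\star}=v_{w^\star}$ by $(\theta^\star,w^\star)\in\mathcal{F}_{value}$. Applying the Lipschitz property $\norm{v_{\theta_1}-v_{\theta_2}}\le\kappa_1\norm{\theta_1-\theta_2}$ from Assumption~4.1 to the two surviving terms, then the triangle inequality and Step~1, gives $\norm{v_{\theta^t}-v_{w^t}} \le \kappa_1\big(\norm{\theta^t-\theta^\star}+\norm{w^t-w^\star}\big) \le \sqrt{2}\kappa_1\sigma^{t-1}\,\norm{(\theta^{0},w^{0}) - (\theta^\star,w^\star)} \to 0$.

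\textbf{Step 3 (the Bellman residual).} Use instead the relation $v_{w^\star}=\mathcal{T}v_{\theta^\star}$ (again from $(\theta^\star,w^\star)\in\mathcal{F}_{value}$) to write $v_{w^t}-\mathcal{T}v_{\theta^t} = (v_{w^t}-v_{w^\star}) + (\mathcal{T}v_{\theta^\star}-\mathcal{T}v_{\theta^t})$. The first term is bounded by $\kappa_1\norm{w^t-w^\star}$ as in Step~2. For the second, $\mathcal{T}=\mathcal{T}^{\pi}$ is affine with linear part $\gamma P^{\pi}$, and $P^{\pi}$ is non-expansive in $\norm{\cdot}_D$ because $D=\mathrm{diag}(d^\pi)$ is $P^{\pi}$-invariant (a Jensen-inequality argument, standard in on-policy TD analysis \cite{tsitsiklis1996analysis}); hence $\norm{\mathcal{T}v_{\theta^\star}-\mathcal{T}v_{\theta^t}}_D = \gamma\norm{P^{\pi}(v_{\theta^\star}-v_{\theta^t})}_D \le \norm{v_{\theta^\star}-v_{\theta^t}}_D \le \kappa_1\norm{\theta^\star-\theta^t}$, using $\gamma\le 1$. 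Summing and invoking Step~1 exactly as in Step~2 yields the second displayed bound.

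\textbf{Main obstacle.} There is essentially no difficulty here; the only places that need a word of care are the bookkeeping between the $D$-weighted norm $\norm{\cdot}_D$ used to define $H$ and $G$ and the generic norm appearing in Assumption~4.1 (I will either read $\kappa_1$ as the Lipschitz constant in $\norm{\cdot}_D$ or absorb a norm-equivalence constant), and the appeal to non-expansiveness of $P^{\pi}$ under $D$ in Step~3, which is classical but worth stating explicitly. The mismatch between $\sigma^{t}$ and the $\sigma^{t-1}$ in the statement is immaterial since $\sigma<1$.
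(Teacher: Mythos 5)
Your proposal is correct and follows essentially the same route as the paper's proof: the same decompositions through $(\theta^\star,w^\star)$ using $v_{\theta^\star}=v_{w^\star}$ for the first bound and $v_{w^\star}=\mathcal{T}v_{\theta^\star}$ together with the $\gamma$-contraction of $\mathcal{T}$ for the second, followed by the Lipschitz bound of Assumption~4.1 and iteration of the contraction from Theorem~\ref{thm:interm3_strong-M-temp}. The only cosmetic difference is that you use the triangle inequality plus Cauchy--Schwarz where the paper squares and applies $\norm{a+b}^2\le 2\norm{a}^2+2\norm{b}^2$, which is the same estimate.
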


\begin{proof}
For the first part of the proof, notice $V_{\theta^\star}=V_{w^\star}$, we have 
\begin{align*}
\norm{V_{\theta^t}-V_{w^t}}^2=&\norm{V_{\theta^t}-V_{\theta^\star}+V_{w^\star}-V_{w^t}}^2\leq 2(\norm{V_{\theta^t}-V_{\theta^\star}}^2+\norm{V_{w^\star}-V_{w^t}}^2)\\
\leq&2\kappa_1^2(\norm{\theta^t-\theta^\star}^2+\norm{w^\star-w^t}^2)\leq 2\kappa_1^2\sigma^2(\norm{\theta^{t-1}-\theta^\star}^2+\norm{w^\star-w^{t-1}}^2)\\
\leq& 2\kappa_1^2(\sigma^2)^{t-1}(\norm{\theta^{0}-\theta^\star}^2+\norm{w^\star-w^{0}}^2)\rightarrow 0.
\end{align*}
where the last two inequalities use Theoerem~\ref{thm:interm3_strong-M-temp}. 
For the second part of the proof,
\begin{align*}
\norm{V_{w^t}-\mathcal{T}V_{\theta^t}}=&\norm{V_{w^t}-V_{w^\star}+\mathcal{T}V_{\theta^\star}-\mathcal{T}V_{\theta^t}}\leq 2(\norm{\mathcal{T}V_{\theta^t}-\mathcal{T}V_{\theta^\star}}^2+\norm{V_{w^\star}-V_{w^t}}^2)\\
\leq&2\kappa_1^2(\gamma^2\norm{\theta^t-\theta^\star}^2+\norm{w^\star-w^t}^2)\leq 2\kappa_1^2(\norm{\theta^t-\theta^\star}^2+\norm{w^\star-w^t}^2)\\
\leq& 2\kappa_1^2\sigma^2(\norm{\theta^{t-1}-\theta^\star}^2+\norm{w^\star-w^{t-1}}^2)\\
\leq& 2\kappa_1^2(\sigma^2)^{t-1}(\norm{\theta^{0}-\theta^\star}^2+\norm{w^\star-w^{0}}^2)\rightarrow 0.
\end{align*}
\end{proof}

\newpage

\section{Numerical simulation details in illustrative examples}

\subsection{Numerical simulation with a single parameter space}
\label{sec:appendix-example-ones}

In this section, we provide the details of the numerical simulation study, as an illustrative example of the algorithm convergence in Section~\ref{sec:fvalue}. We consider a Markov Chain $\mathcal{M}=(\mathcal{S},P,\gamma,r)$ with two states $\mathcal{S}=\{s_1,s_2\}$ and the transition matrix $P=\begin{bmatrix} 0.6&0.4\\0.2& 0.8 \end{bmatrix}$, e.g. $P(s_1|s_1)=0.6$. $\gamma=\frac{1}{2}$. Let $r=\begin{bmatrix} 1\\1 \end{bmatrix}$. The true value 
$
v^*=(I-\gamma P)^{-1}r=\begin{bmatrix} 2\\2 \end{bmatrix}.
$ 

Next, we parametrize the function class $v_\theta$ and $v_w$ differently. We set the linear function class $v_\theta(s)=\phi_\theta(s) ^\top \theta$ where $\phi_\theta(\cdot),\theta\in\RR^3$ with $\phi_\theta(s_1)=\begin{bmatrix} 1&2&1 \end{bmatrix}^\top$ and $\phi_\theta(s_2)=\begin{bmatrix} 1&1&2 \end{bmatrix}^\top$. The stationary distribution for $P$ is $\rho=(1/3,2/3)$. 

In Figure~\ref{fig:mini_example}, we set the initial point $\theta^0=[1.2,2,0.5],w=[0.1,2,0.5]$, $T=800,K_L=400,K_R=1$. The parameter $\theta$ converges to $\theta^T=[0.663, 0.445, 0.445]$, the parameter $w$ converges to $w^T=[-0.236, 0.745, 0.745]$ and both the value functions converge to $v_{w^T}=v_{\theta^T}=v^*=[2,2]$.

\subsection{Numerical simulation with different parameter spaces}
\label{sec:appendix-example-twos}

\begin{figure*}[ht]
  \centering
  \begin{subfigure}[b]{0.28\linewidth}
    \includegraphics[width=1\linewidth]{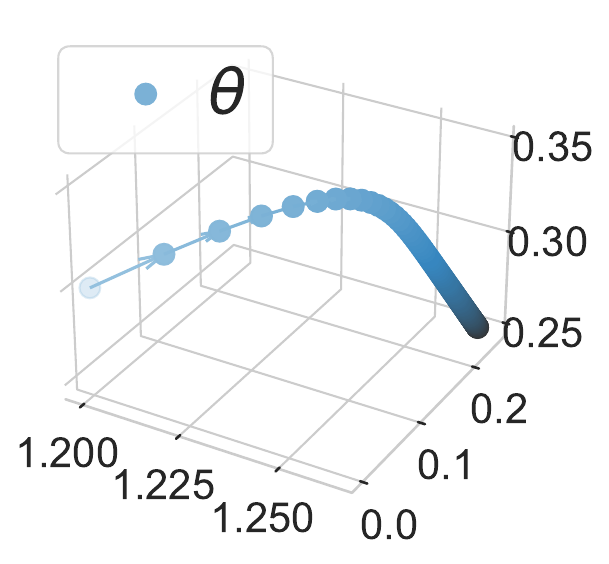}
    \caption{Path of $\theta$ in $\theta$-parameter space}
    \label{fig:example_theta_twos}
  \end{subfigure}
  \hfill 
  \begin{subfigure}[b]{0.35\linewidth}
    \includegraphics[width=1\linewidth]{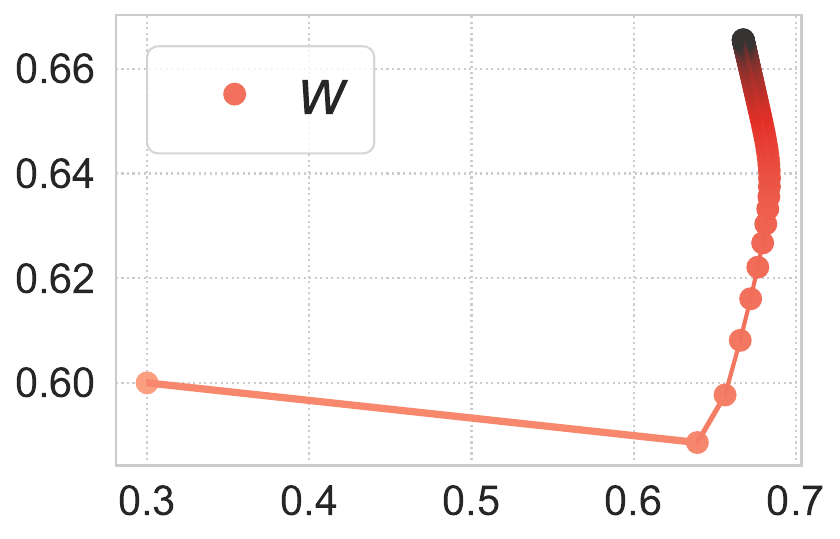}
    \caption{Path of $w$ in $w$-parameter space}
    \label{fig:example_w_twos}
  \end{subfigure}
  \hfill 
  \begin{subfigure}[b]{0.35\linewidth}
    \includegraphics[width=\linewidth]{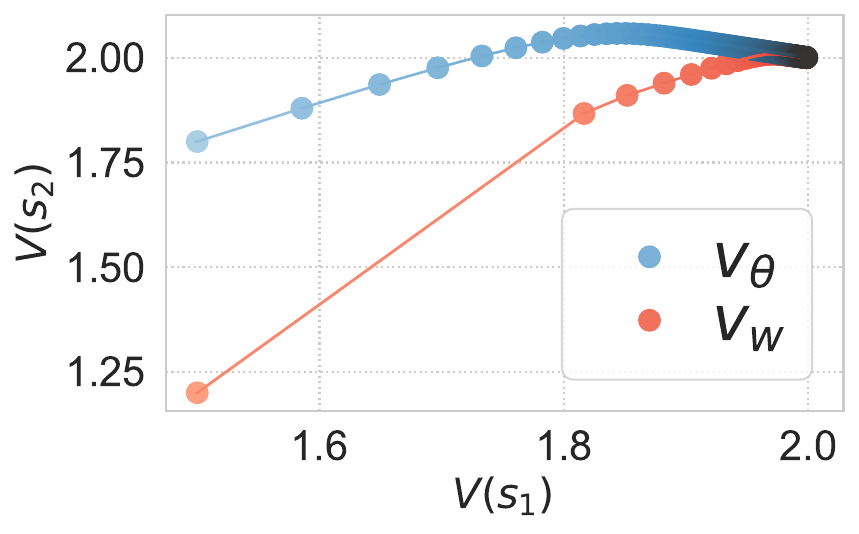}
    \caption{Path of $v_\theta$ and $v_w$ in value function space}
    \label{fig:example_value_twos}
  \end{subfigure}
  \caption{A simulation of the convergence for Algorithm~\ref{alg:main} with different parameterization. Parameter $\theta\in\mathbb{Rd}^3$, and parameter $w\in \mathbb{R}^2$.}
  \label{fig:example_twos} 
\end{figure*}
Although in main paper we only discuss the case of both $\theta$ and $w$ are in the same parameter space $\Theta$, our algorithm and analysis can naturally apply to the case when there are two different ways of parameterization for $v_\theta$ and $v_w$. Here we provide an numerical simulation of such setting.

The MRP as well as $v_\theta$ is the same as described in the single parameter space example, except that we use a different parameterization for $v_w$. We set the linear function class $v_w(s)=\phi_w(s)^\top w$ where $\phi_w(\cdot),w\in\RR^2$ with $\phi_w(s_1)=\begin{bmatrix} 1&2 \end{bmatrix}^\top$ and $\phi_w(s_2)=\begin{bmatrix} 2&1 \end{bmatrix}^\top$. In Figure~\ref{fig:example_twos}, we set the initial point $\theta^0=[1.2,0,0.3],w=[0.3,0.6]$, $T=800,K_L=400,K_R=1$. The parameter $\theta$ converges to $\theta^T=[1.264,0.245,0.245]$, the parameter $w$ converges to $w^T=[2/3,2/3]$ and both the value functions converge to $v_{w^T}=v_{\theta^T}=v^*=[2,2]$.



\newpage

\section{Full Results on Atari}

\begin{figure}[H]
\centering\captionsetup[subfigure]{justification=centering,skip=0pt}
\begin{subfigure}[t]{0.24\textwidth} 
\centering 
\includegraphics[width=\textwidth]{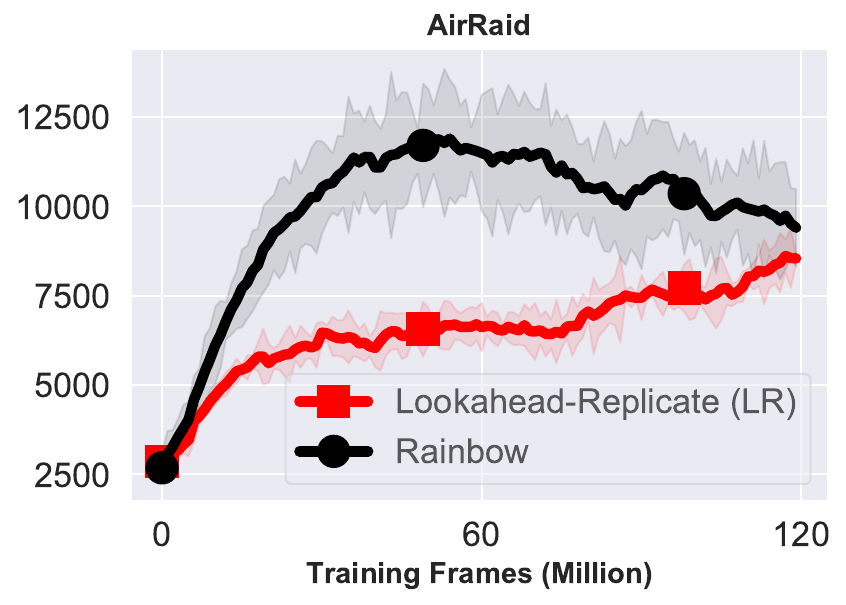} 
\end{subfigure}%
~ 
\begin{subfigure}[t]{ 0.24\textwidth} 
\centering 
\includegraphics[width=\textwidth]{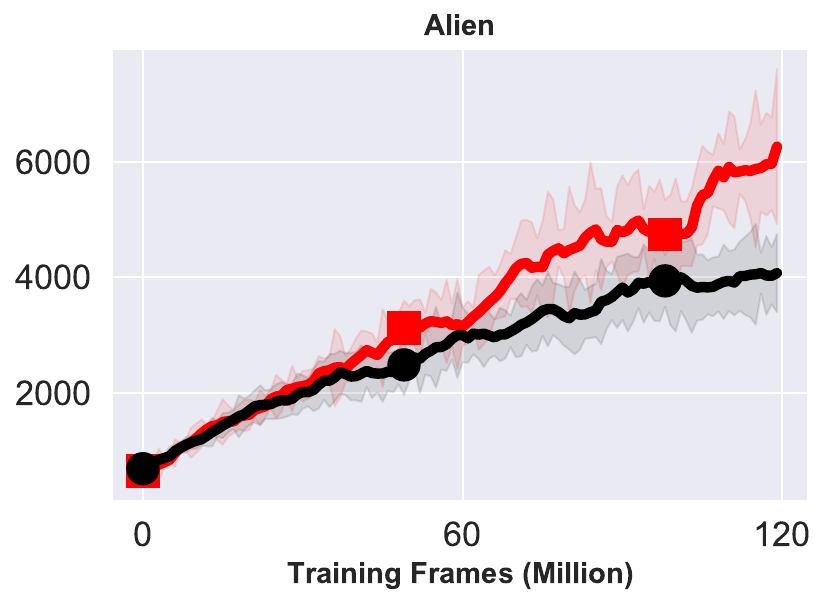} 
\end{subfigure}%
~ 
\begin{subfigure}[t]{ 0.24\textwidth} 
\centering 
\includegraphics[width=\textwidth]{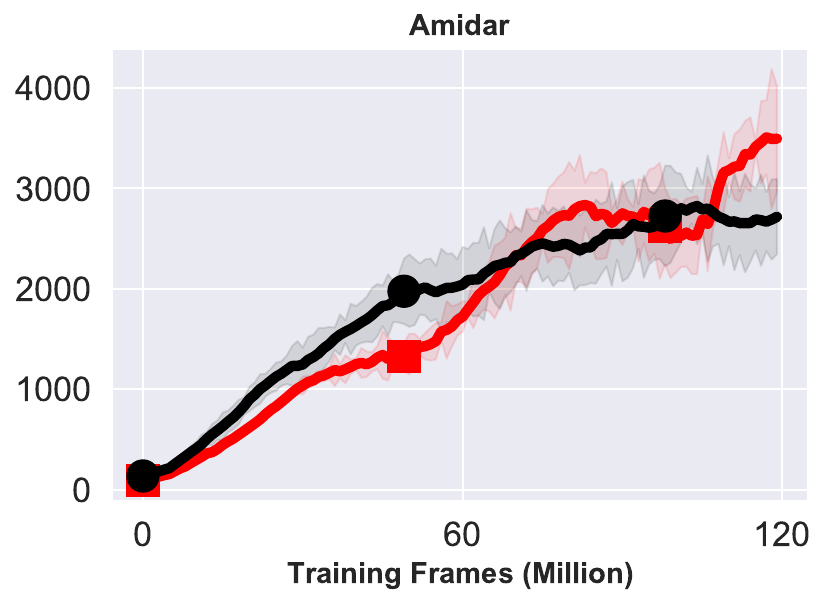}  
\end{subfigure}%
~ 
\begin{subfigure}[t]{ 0.24\textwidth} 
\centering 
\includegraphics[width=\textwidth]{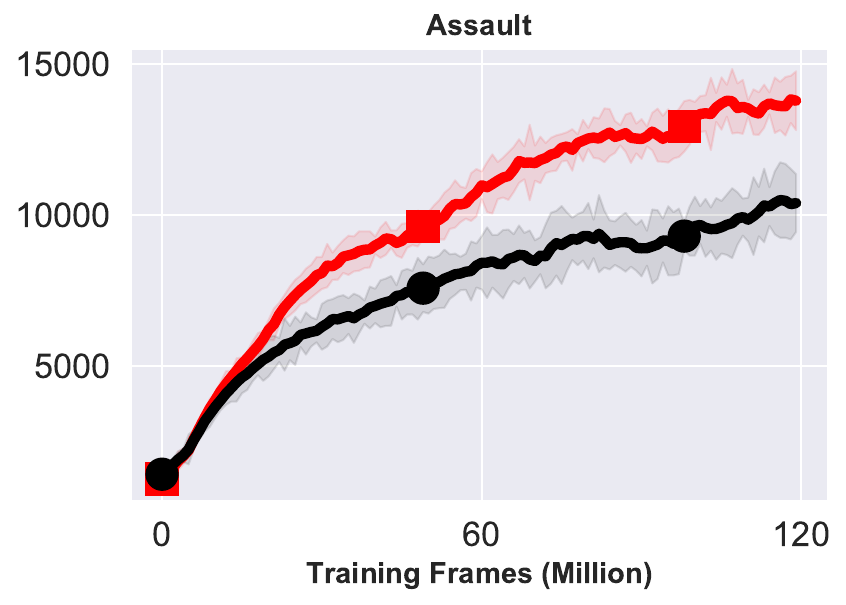} 
\end{subfigure}
\vspace{-.25cm}

\begin{subfigure}[t]{0.24\textwidth} 
\centering 
\includegraphics[width=\textwidth]{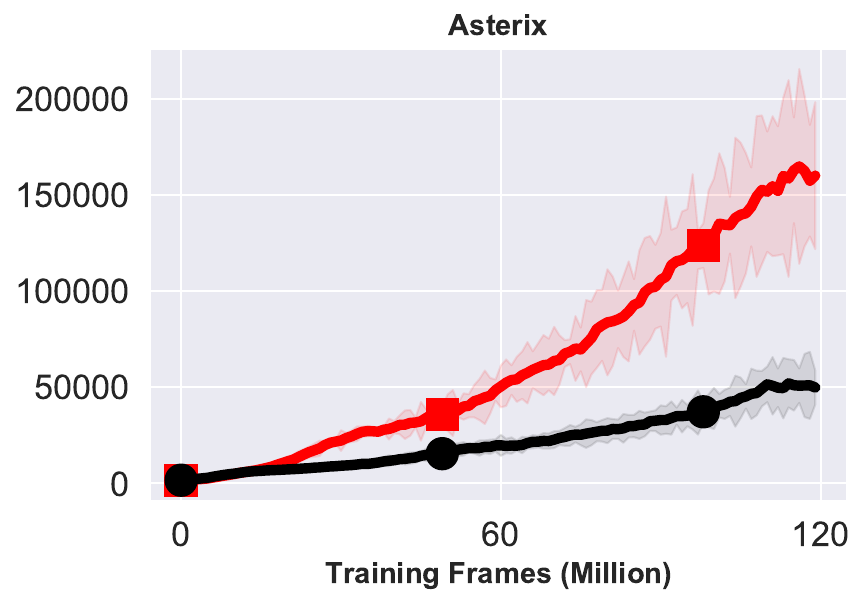}
\end{subfigure}%
~ 
\begin{subfigure}[t]{ 0.24\textwidth} 
\centering 
\includegraphics[width=\textwidth]{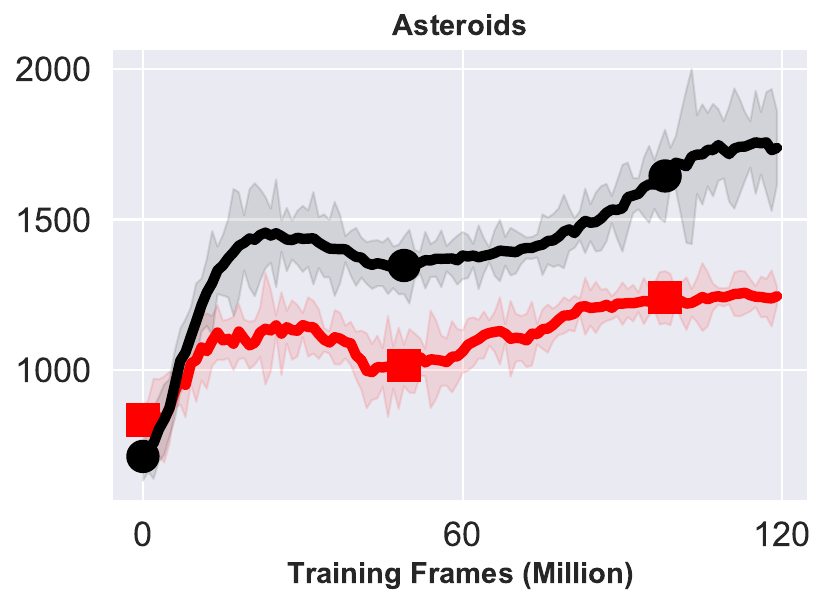} 
\end{subfigure}%
~ 
\begin{subfigure}[t]{ 0.24\textwidth} 
\centering 
\includegraphics[width=\textwidth]{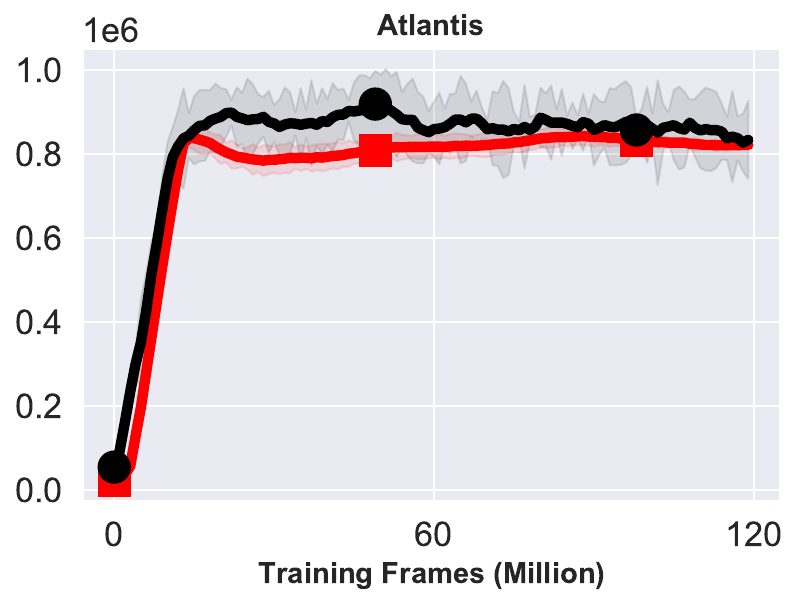} 
\end{subfigure}%
~ 
\begin{subfigure}[t]{ 0.24\textwidth} 
\centering 
\includegraphics[width=\textwidth]{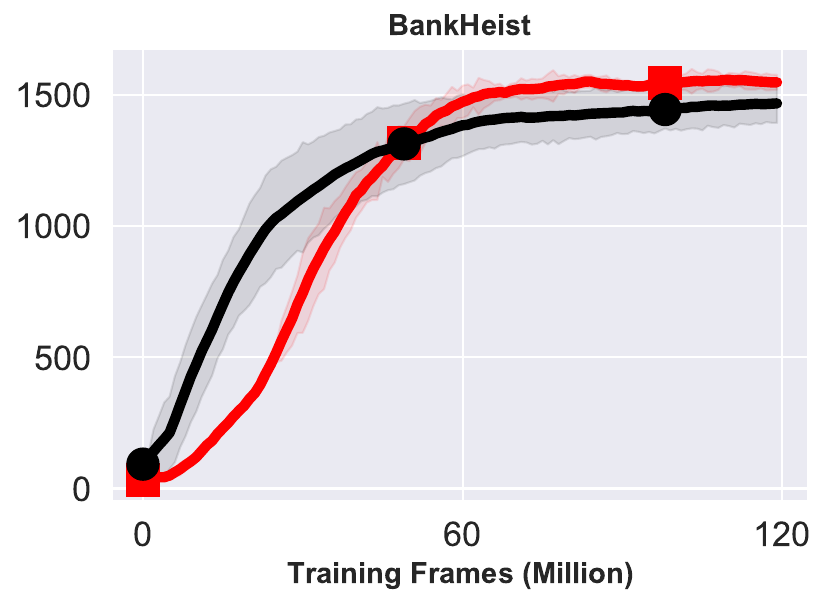} 
\end{subfigure}
\vspace{-.25cm}

\begin{subfigure}[t]{0.24\textwidth} 
\centering 
\includegraphics[width=\textwidth]{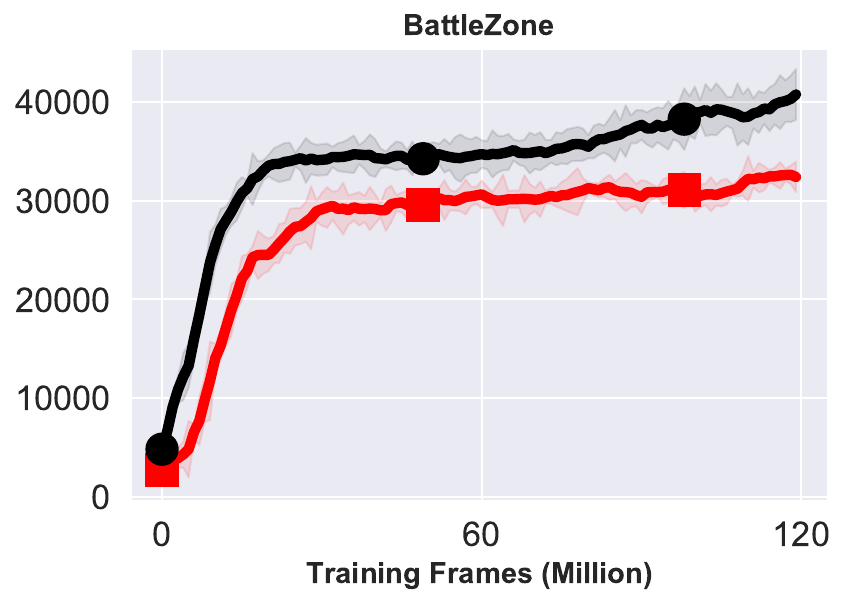}  
\end{subfigure}%
~ 
\begin{subfigure}[t]{ 0.24\textwidth} 
\centering 
\includegraphics[width=\textwidth]{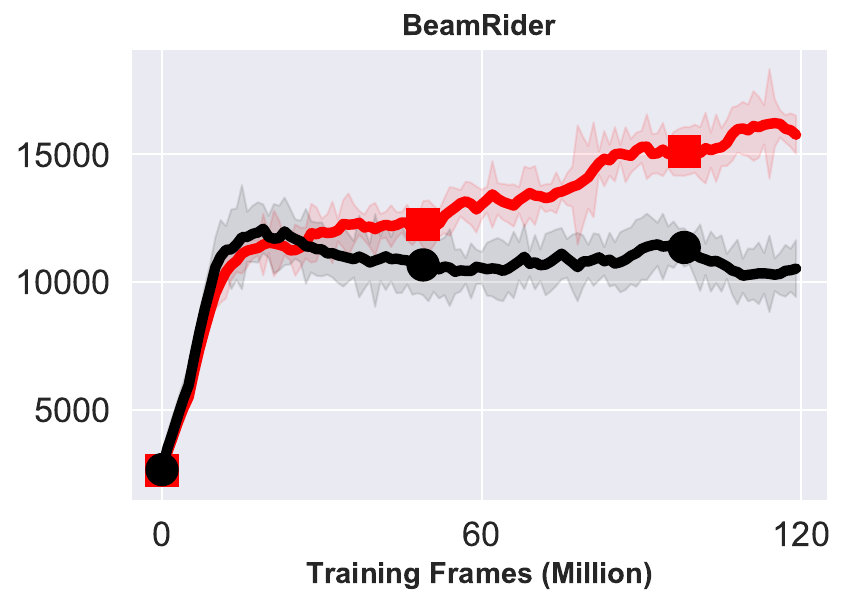}  
\end{subfigure}%
~ 
\begin{subfigure}[t]{ 0.24\textwidth} 
\centering 
\includegraphics[width=\textwidth]{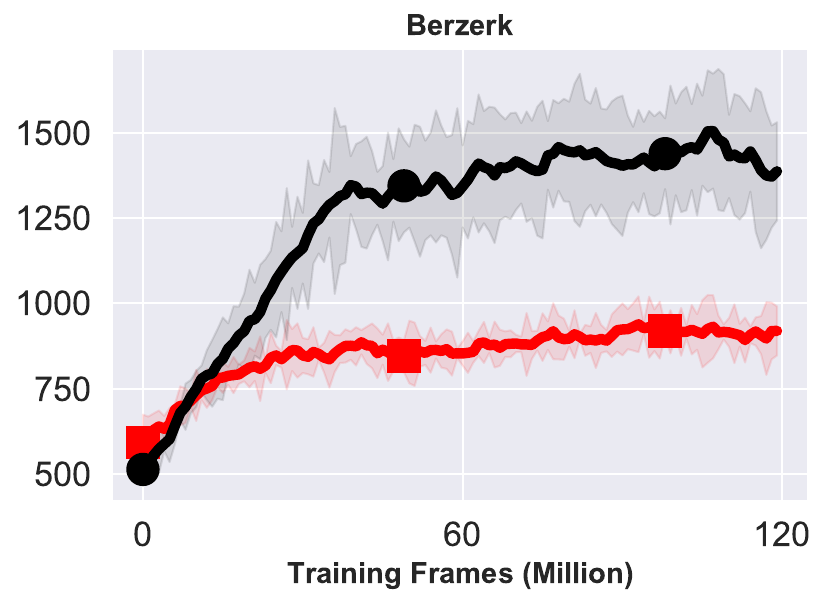} 
\end{subfigure}%
~ 
\begin{subfigure}[t]{ 0.24\textwidth} 
\centering 
\includegraphics[width=\textwidth]{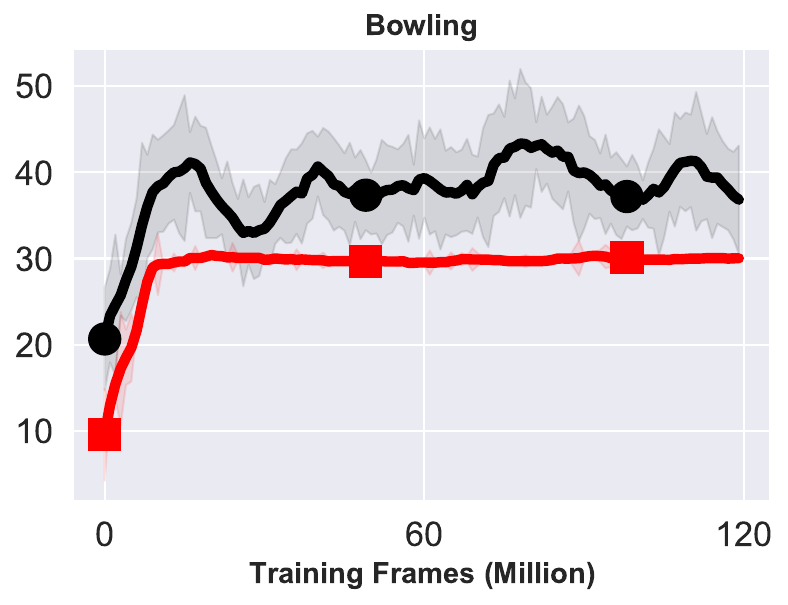} 
\end{subfigure}
\vspace{-.25cm}

\begin{subfigure}[t]{0.24\textwidth} 
\centering 
\includegraphics[width=\textwidth]{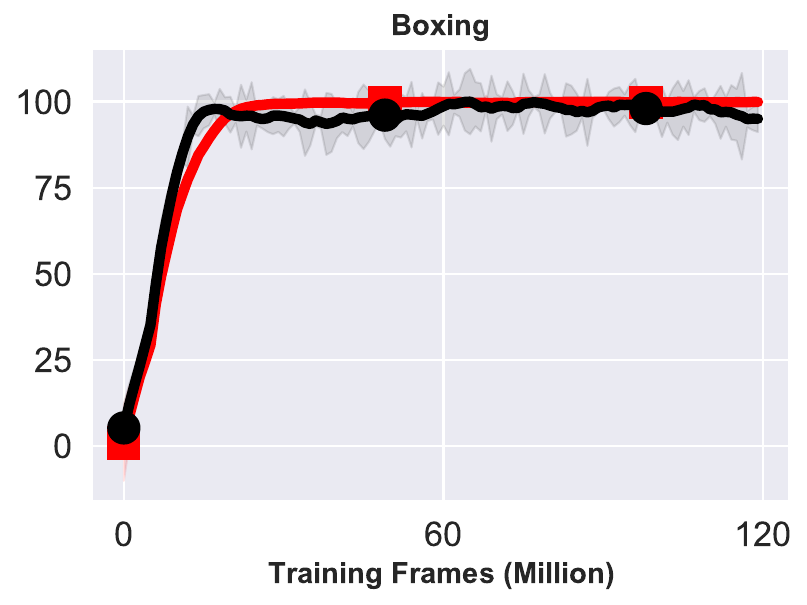}  
\end{subfigure}%
~ 
\begin{subfigure}[t]{ 0.24\textwidth} 
\centering 
\includegraphics[width=\textwidth]{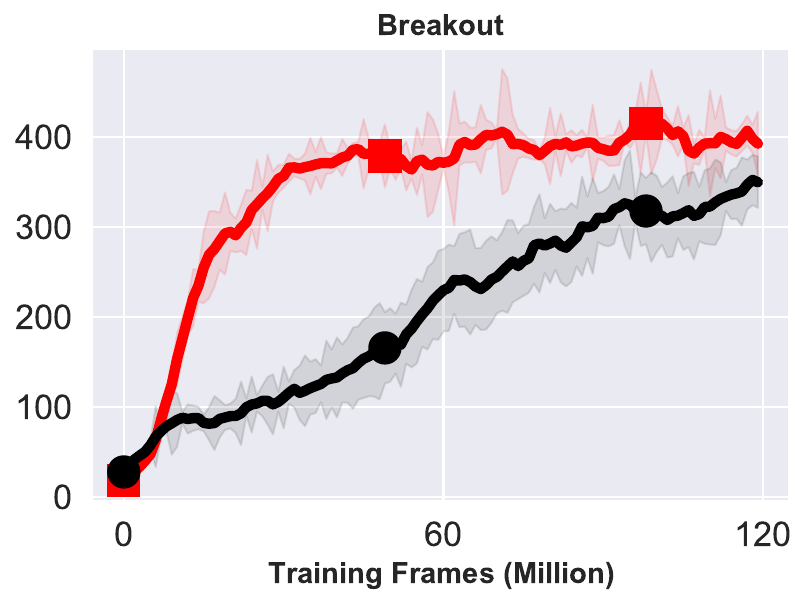}  
\end{subfigure}%
~ 
\begin{subfigure}[t]{ 0.24\textwidth} 
\centering 
\includegraphics[width=\textwidth]{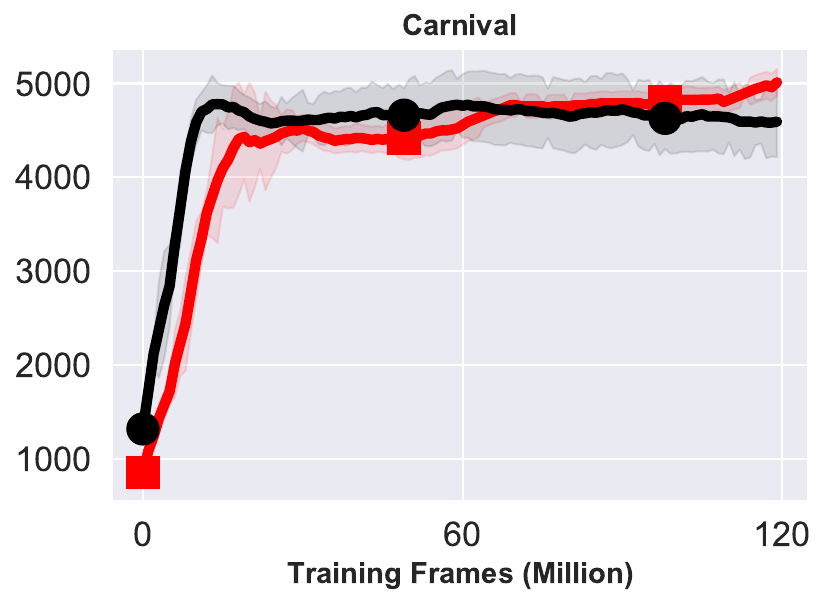} 
\end{subfigure}%
~ 
\begin{subfigure}[t]{ 0.24\textwidth} 
\centering 
\includegraphics[width=\textwidth]{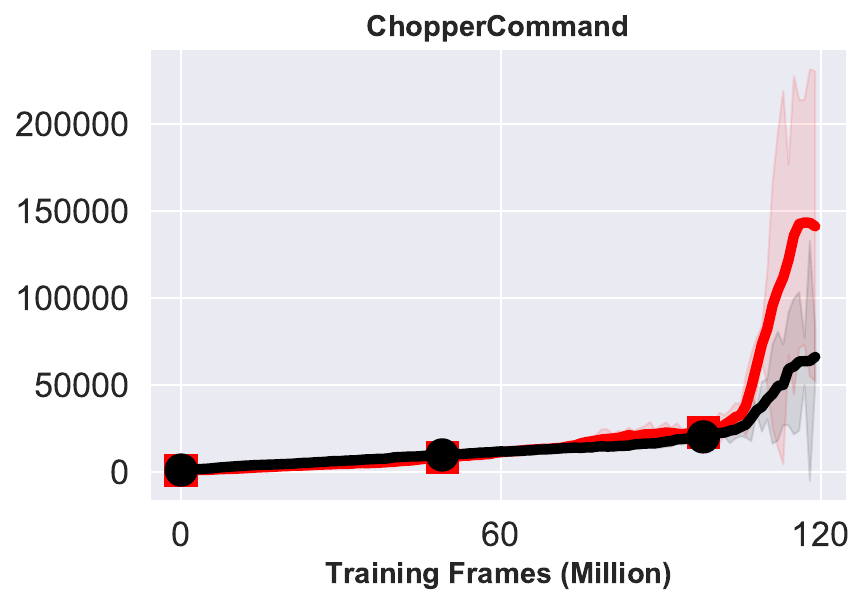} 
\end{subfigure}
\vspace{-.25cm}

\begin{subfigure}[t]{0.24\textwidth} 
\centering 
\includegraphics[width=\textwidth]{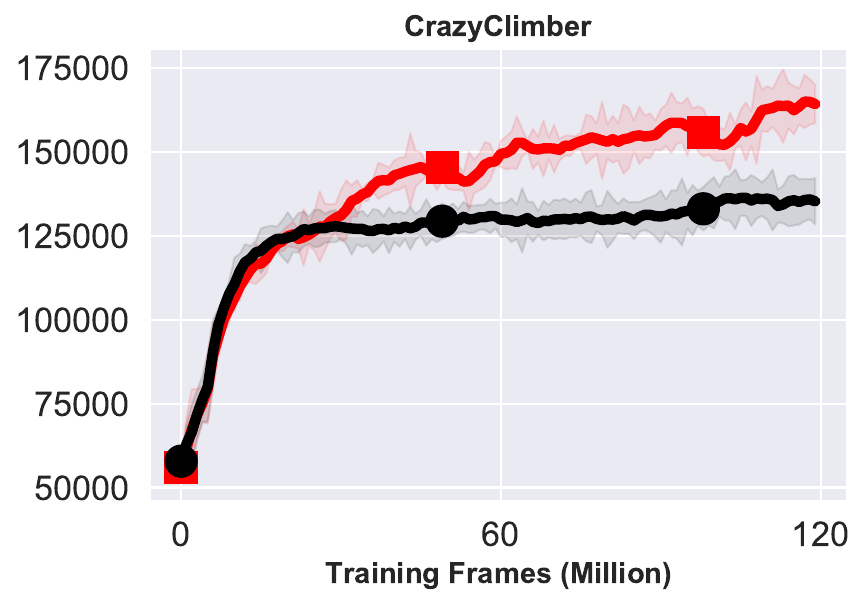}  
\end{subfigure}%
~ 
\begin{subfigure}[t]{ 0.24\textwidth} 
\centering 
\includegraphics[width=\textwidth]{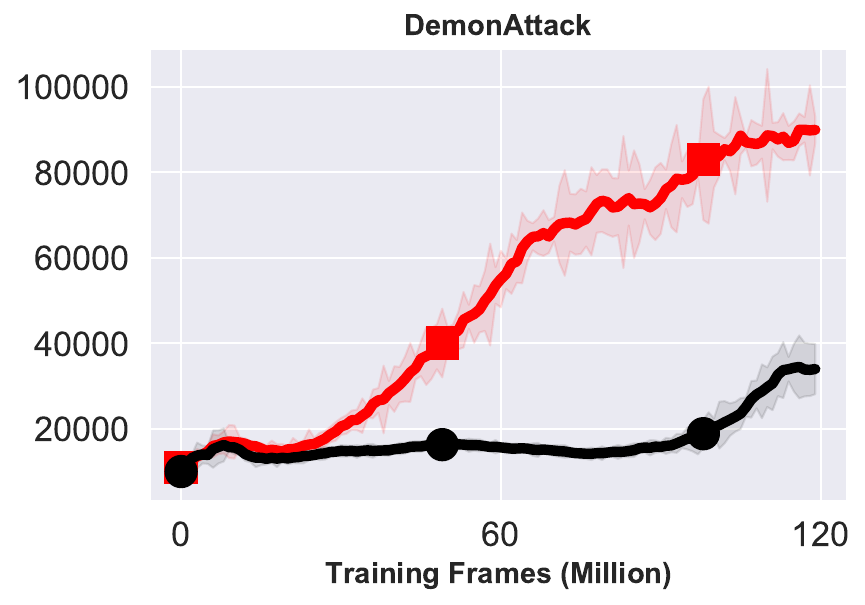}  
\end{subfigure}%
~ 
\begin{subfigure}[t]{ 0.24\textwidth} 
\centering 
\includegraphics[width=\textwidth]{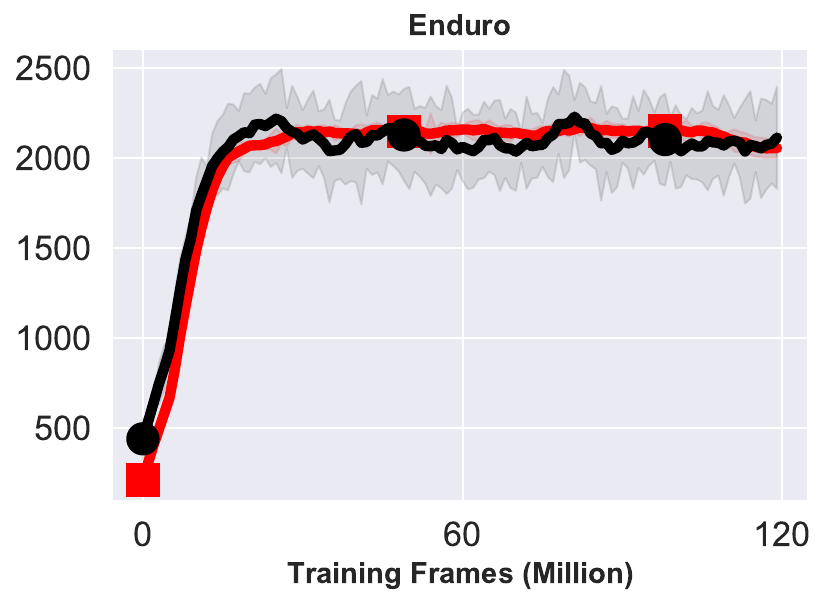} 
\end{subfigure}%
~ 
\begin{subfigure}[t]{ 0.24\textwidth} 
\centering 
\includegraphics[width=\textwidth]{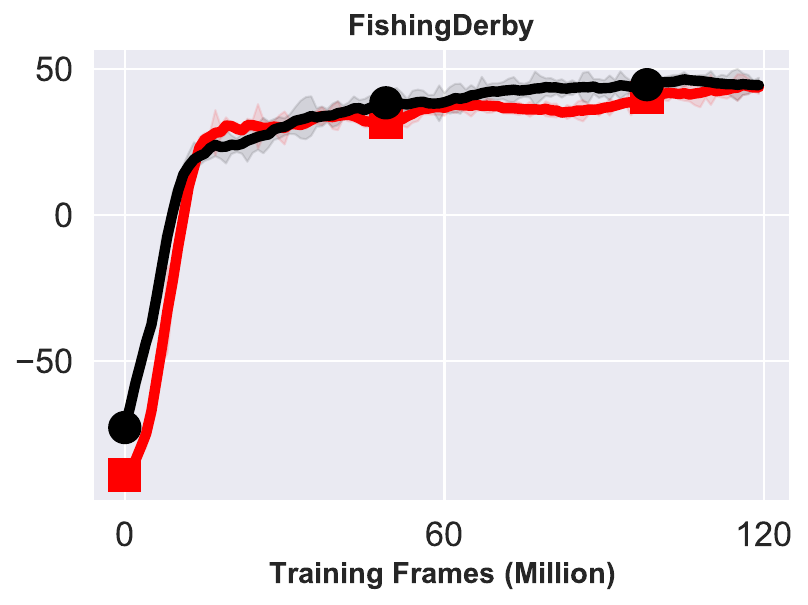} 
\end{subfigure}
\vspace{-.25cm}

\begin{subfigure}[t]{0.24\textwidth} 
\centering 
\includegraphics[width=\textwidth]{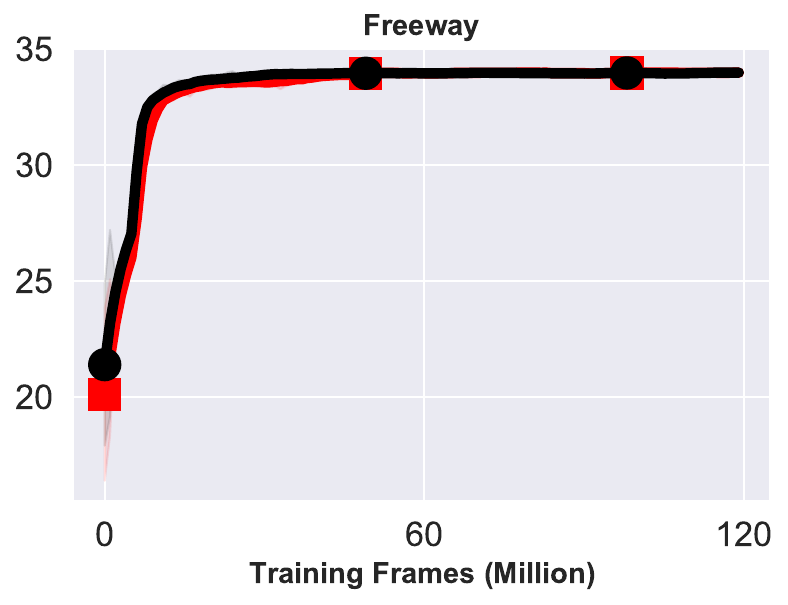}  
\end{subfigure}%
~ 
\begin{subfigure}[t]{ 0.24\textwidth} 
\centering 
\includegraphics[width=\textwidth]{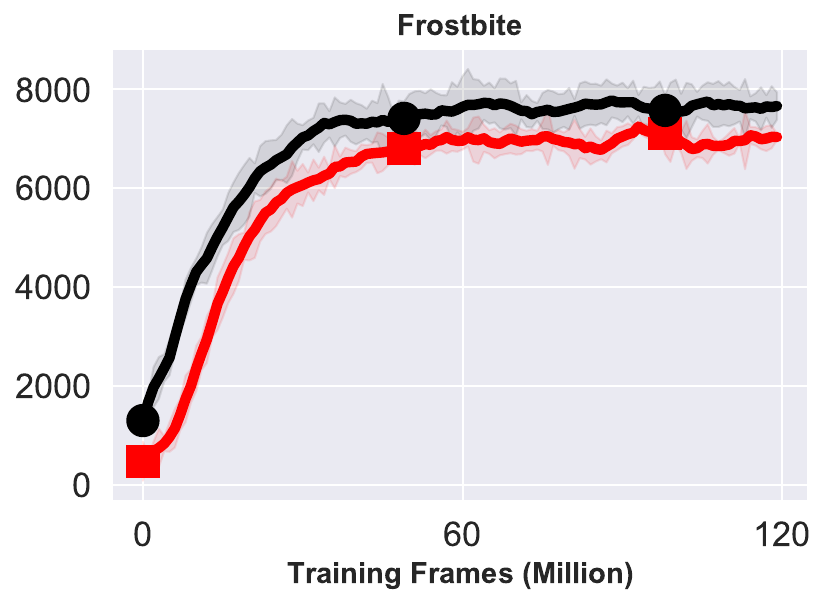}  
\end{subfigure}%
~ 
\begin{subfigure}[t]{ 0.24\textwidth} 
\centering 
\includegraphics[width=\textwidth]{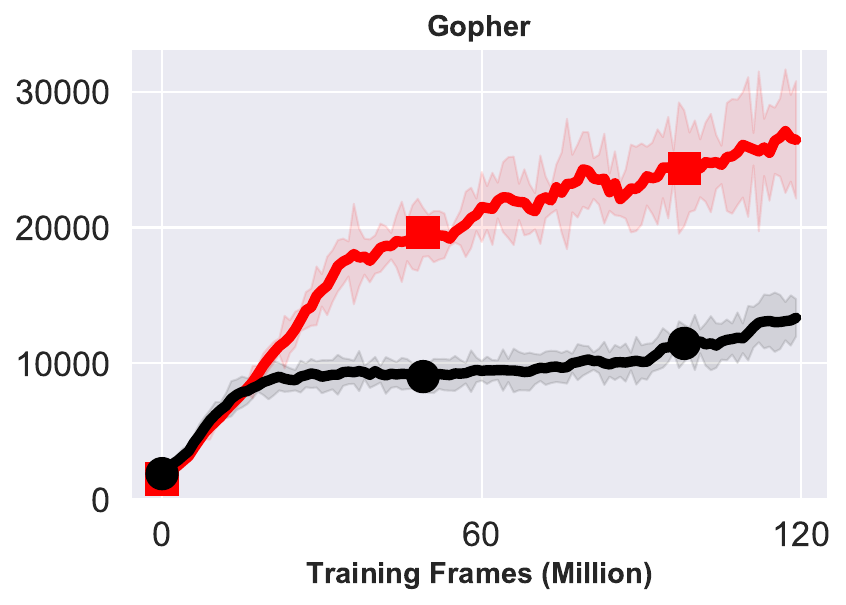} 
\end{subfigure}%
~ 
\begin{subfigure}[t]{ 0.24\textwidth} 
\centering 
\includegraphics[width=\textwidth]{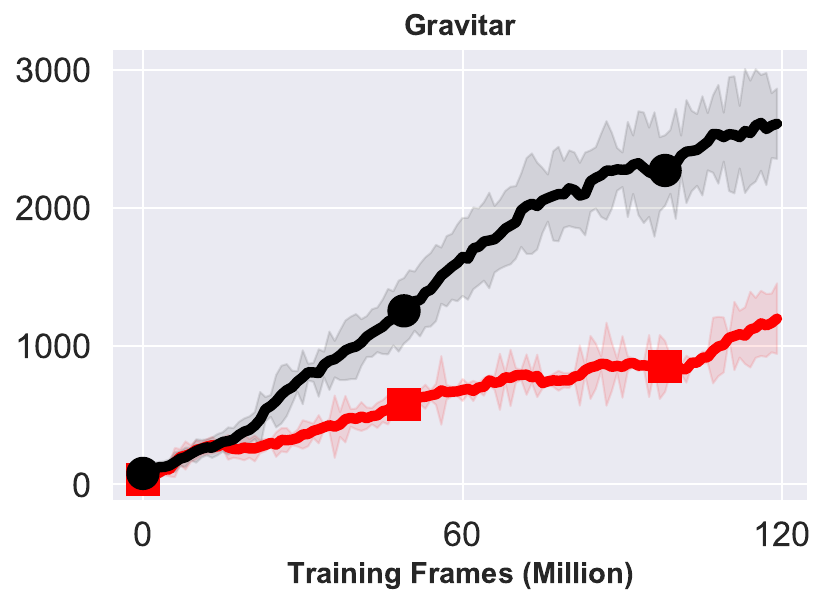} 
\end{subfigure}
\vspace{-.25cm}

\begin{subfigure}[t]{0.24\textwidth} 
\centering 
\includegraphics[width=\textwidth]{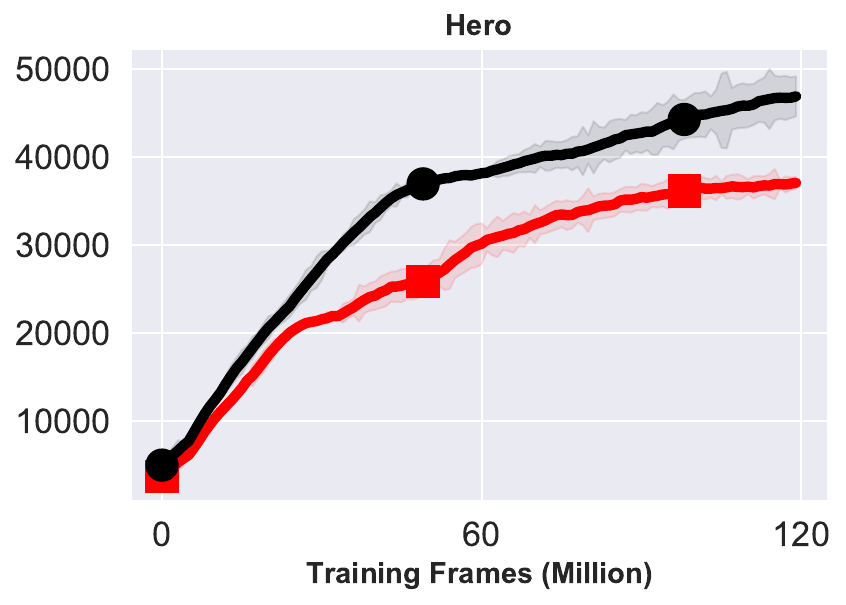}  
\end{subfigure}%
~ 
\begin{subfigure}[t]{ 0.24\textwidth} 
\centering 
\includegraphics[width=\textwidth]{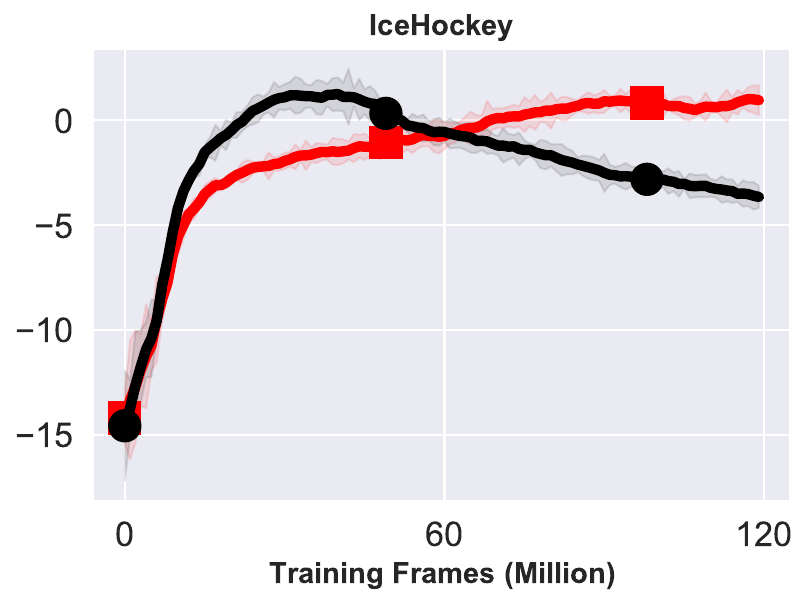}  
\end{subfigure}%
~ 
\begin{subfigure}[t]{ 0.24\textwidth} 
\centering 
\includegraphics[width=\textwidth]{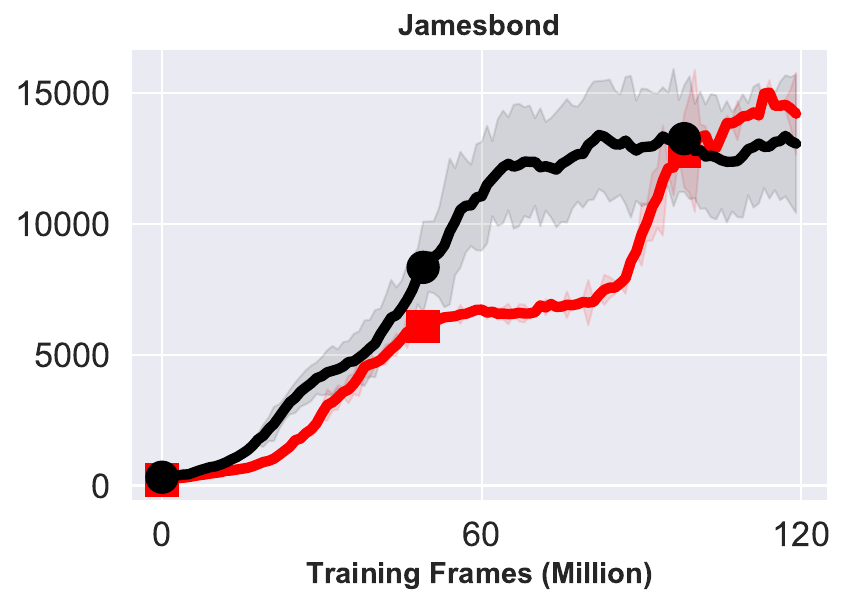} 
\end{subfigure}%
~ 
\begin{subfigure}[t]{ 0.24\textwidth} 
\centering 
\includegraphics[width=\textwidth]{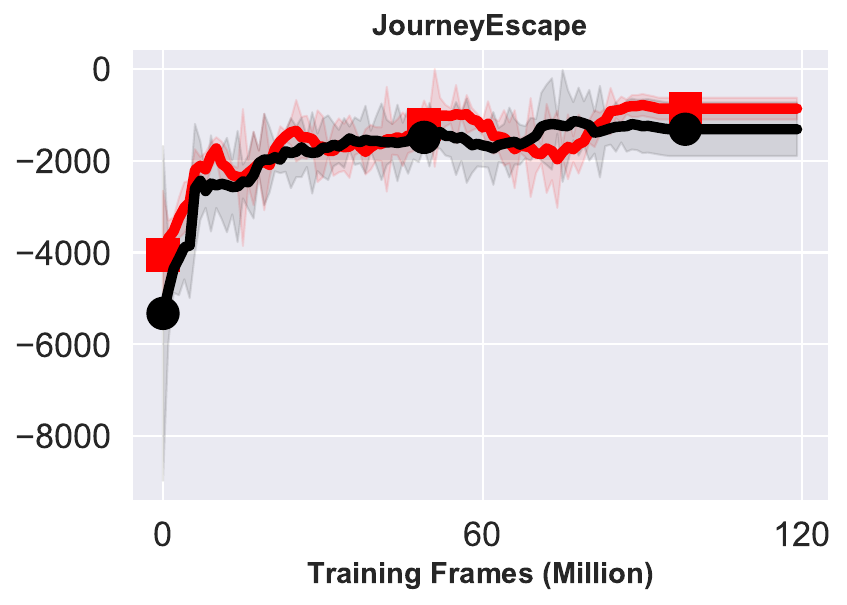} 
\end{subfigure}
\vspace{-.25cm}

\caption{ Learning curves for 55 games (Part I).}
\end{figure}

\begin{figure}[H]
\centering\captionsetup[subfigure]{justification=centering,skip=0pt}
\begin{subfigure}[t]{0.24\textwidth} 
\centering 
\includegraphics[width=\textwidth]{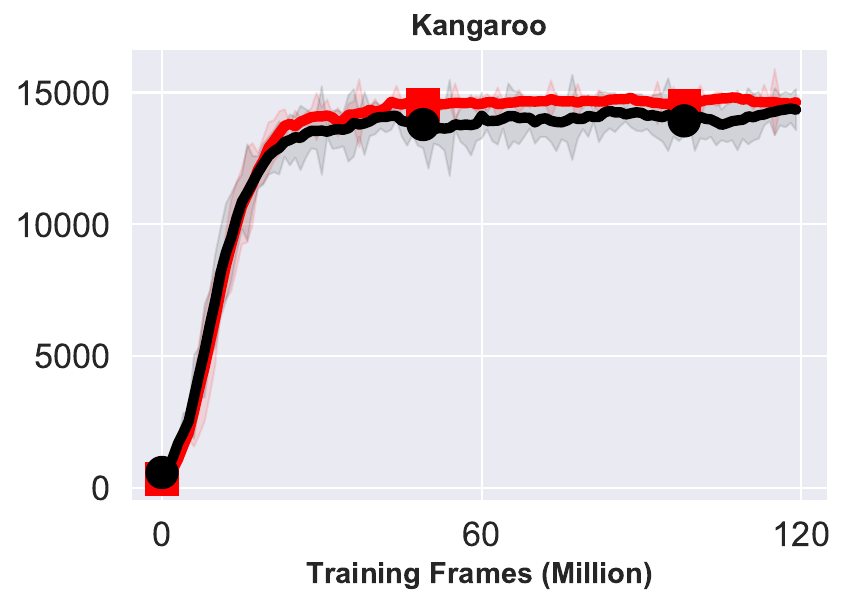} 
\end{subfigure}%
~ 
\begin{subfigure}[t]{ 0.24\textwidth} 
\centering 
\includegraphics[width=\textwidth]{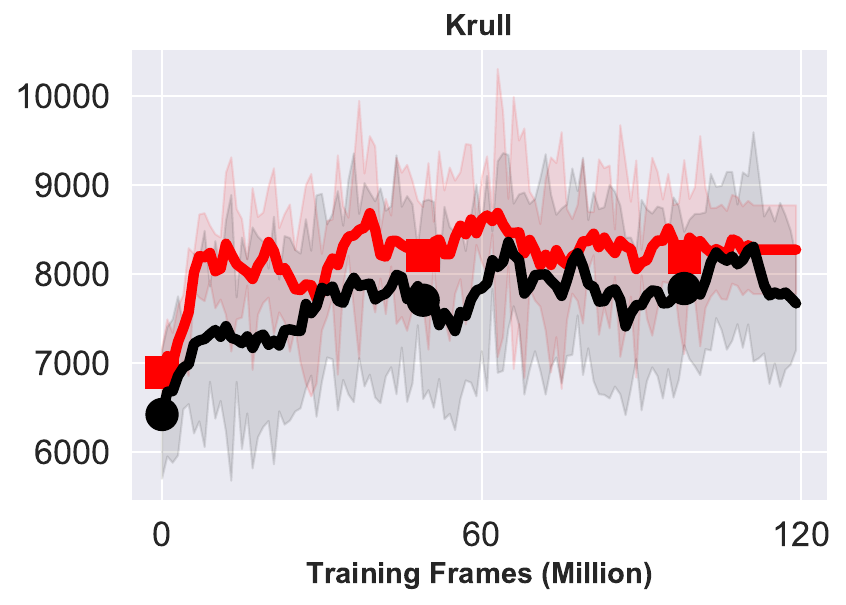} 
\end{subfigure}%
~ 
\begin{subfigure}[t]{ 0.24\textwidth} 
\centering 
\includegraphics[width=\textwidth]{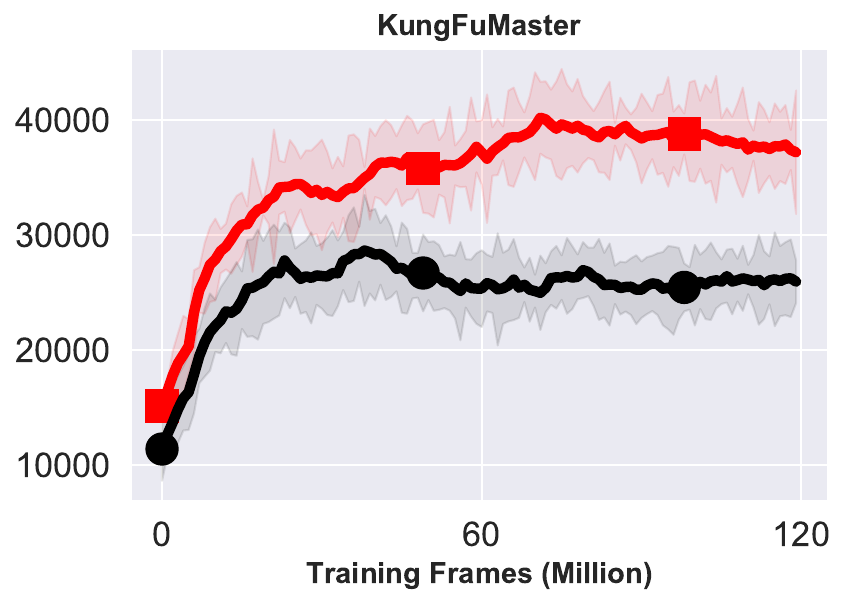}  
\end{subfigure}%
~ 
\begin{subfigure}[t]{ 0.24\textwidth} 
\centering 
\includegraphics[width=\textwidth]{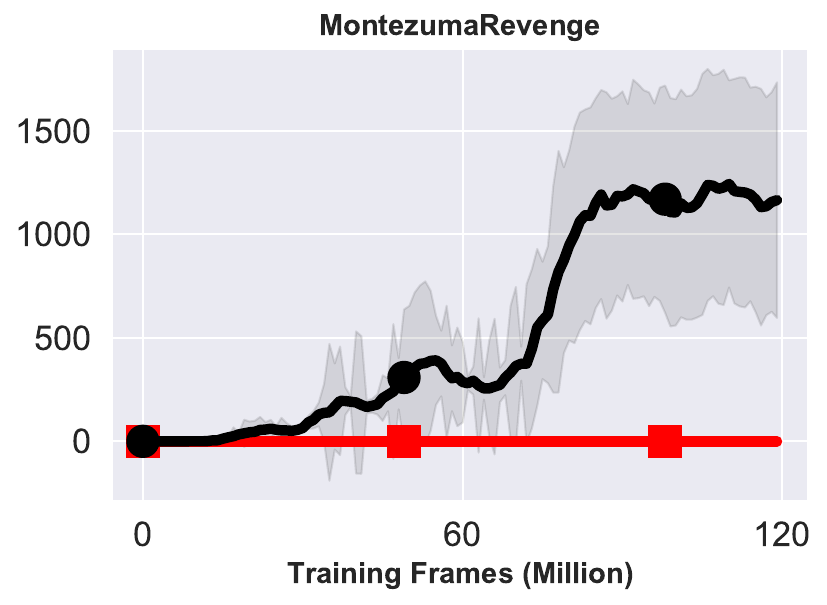} 
\end{subfigure}
\vspace{-.25cm}

\begin{subfigure}[t]{0.24\textwidth} 
\centering 
\includegraphics[width=\textwidth]{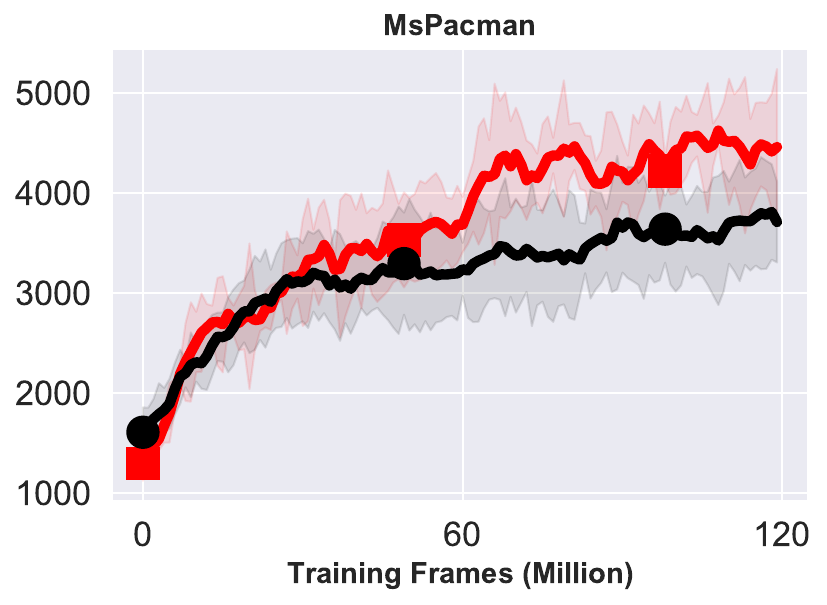} 
\end{subfigure}%
~ 
\begin{subfigure}[t]{ 0.24\textwidth} 
\centering 
\includegraphics[width=\textwidth]{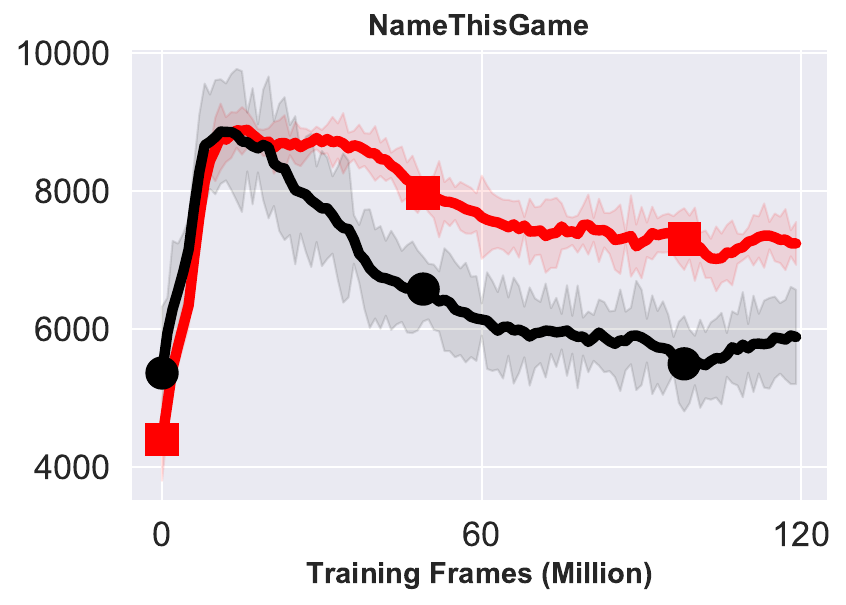} 
\end{subfigure}%
~ 
\begin{subfigure}[t]{ 0.24\textwidth} 
\centering 
\includegraphics[width=\textwidth]{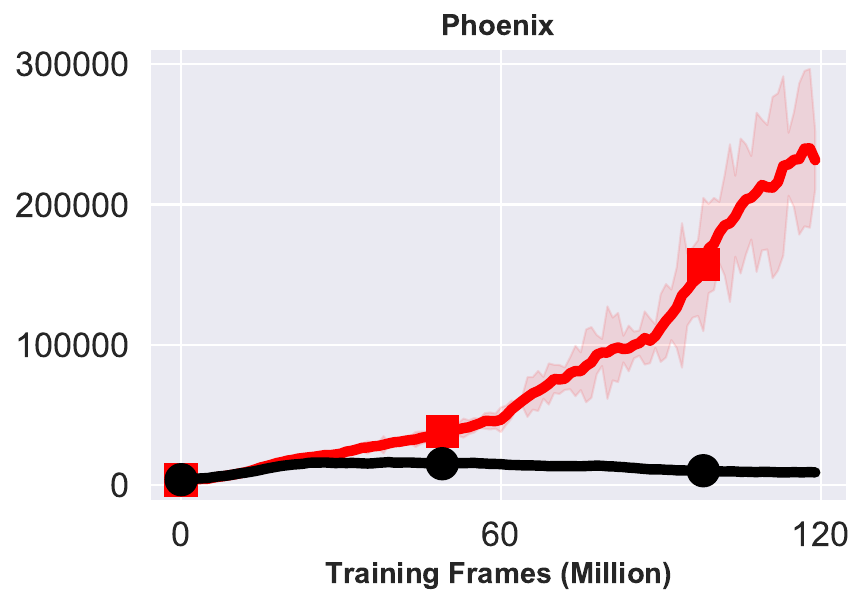}  
\end{subfigure}%
~ 
\begin{subfigure}[t]{ 0.24\textwidth} 
\centering 
\includegraphics[width=\textwidth]{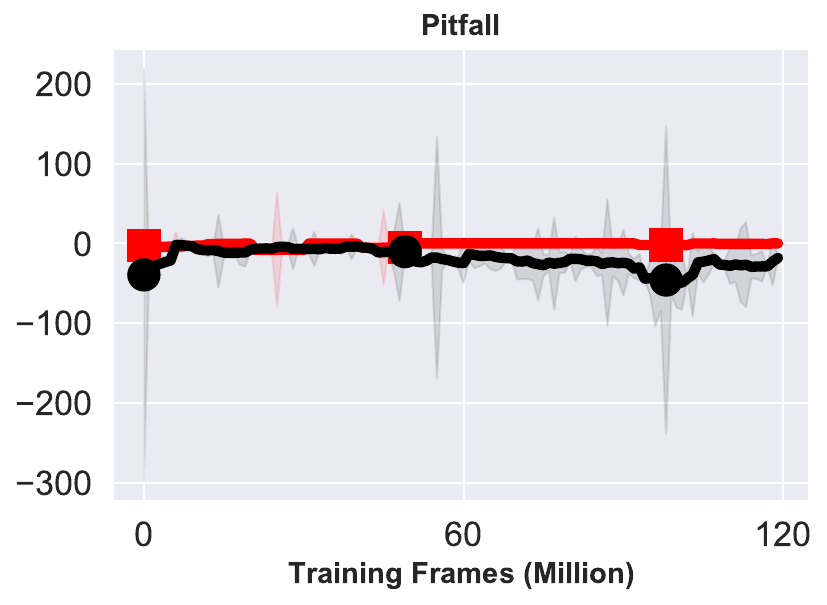} 
\end{subfigure}
\vspace{-.25cm}

\begin{subfigure}[t]{0.24\textwidth} 
\centering 
\includegraphics[width=\textwidth]{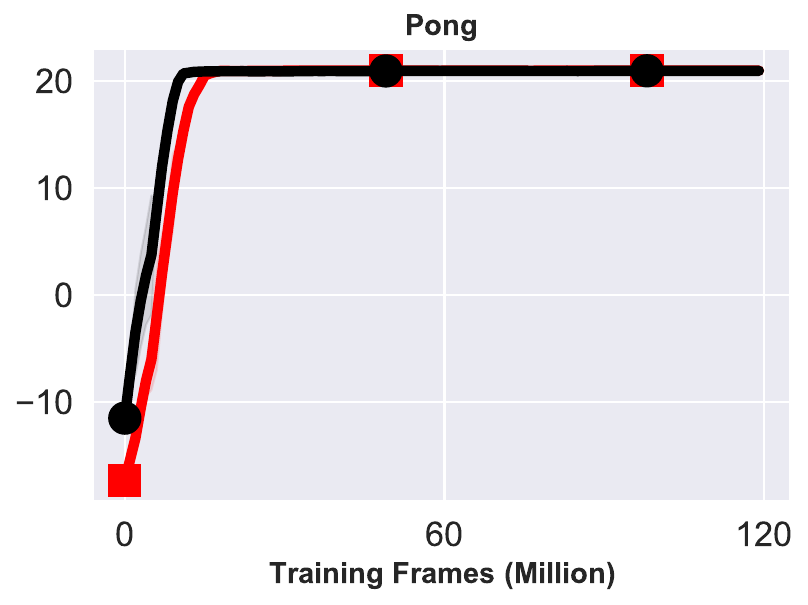} 
\end{subfigure}%
~ 
\begin{subfigure}[t]{ 0.24\textwidth} 
\centering 
\includegraphics[width=\textwidth]{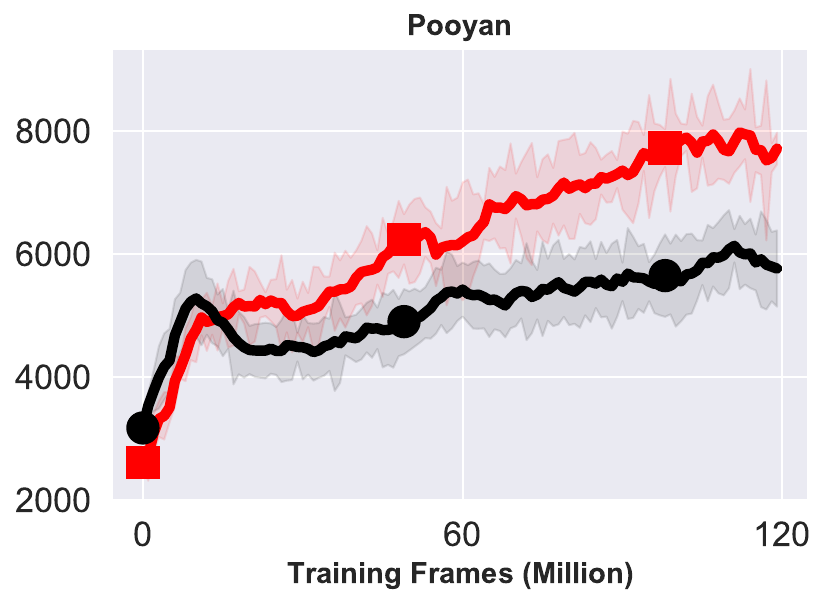} 
\end{subfigure}%
~ 
\begin{subfigure}[t]{ 0.24\textwidth} 
\centering 
\includegraphics[width=\textwidth]{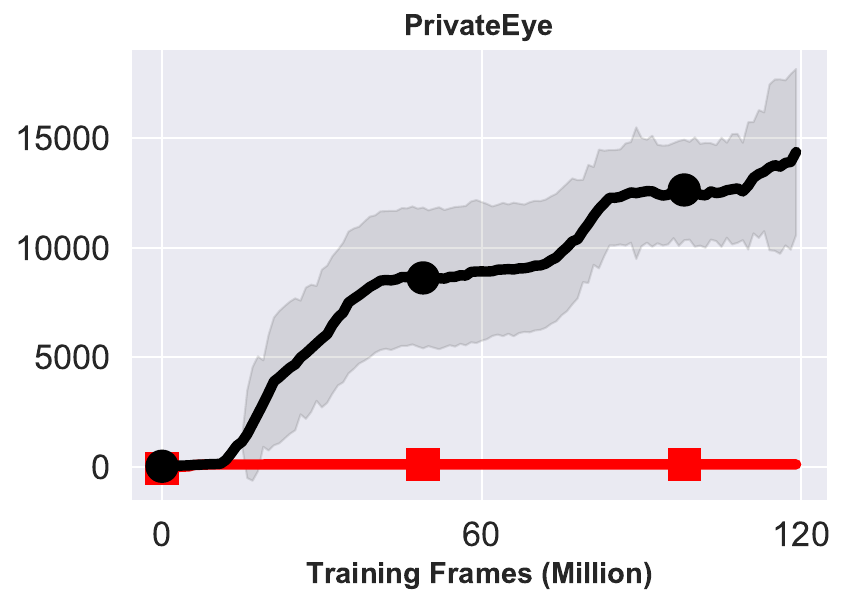}  
\end{subfigure}%
~ 
\begin{subfigure}[t]{ 0.24\textwidth} 
\centering 
\includegraphics[width=\textwidth]{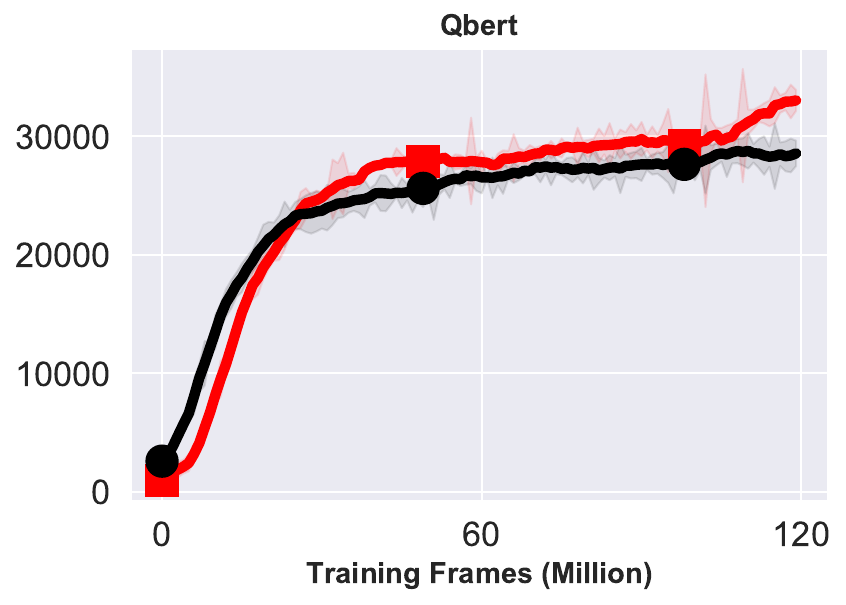} 
\end{subfigure}
\vspace{-.25cm}

\begin{subfigure}[t]{0.24\textwidth} 
\centering 
\includegraphics[width=\textwidth]{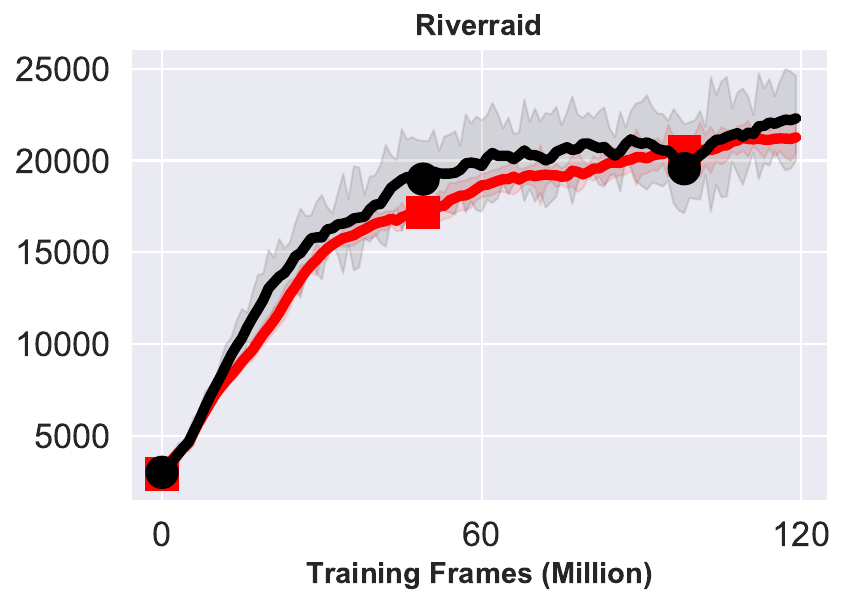} 
\end{subfigure}%
~ 
\begin{subfigure}[t]{ 0.24\textwidth} 
\centering 
\includegraphics[width=\textwidth]{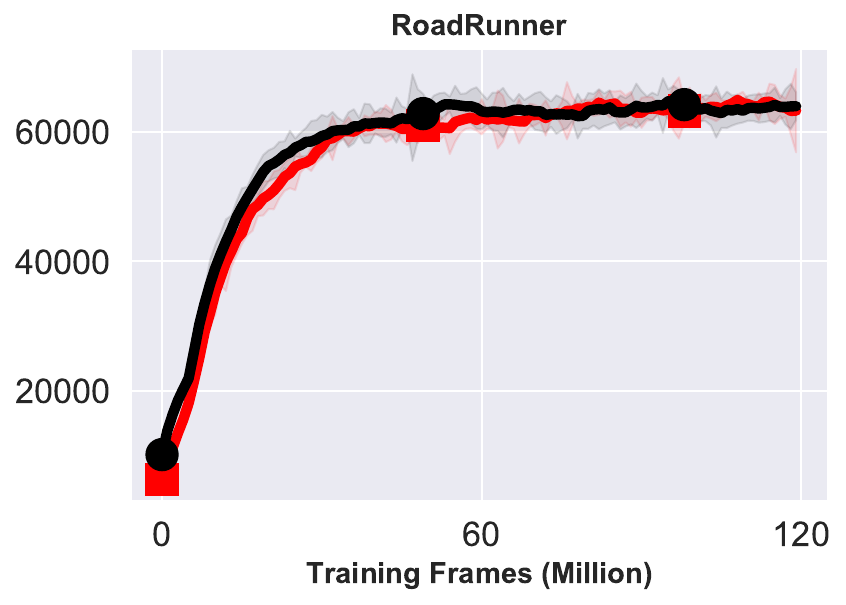} 
\end{subfigure}%
~ 
\begin{subfigure}[t]{ 0.24\textwidth} 
\centering 
\includegraphics[width=\textwidth]{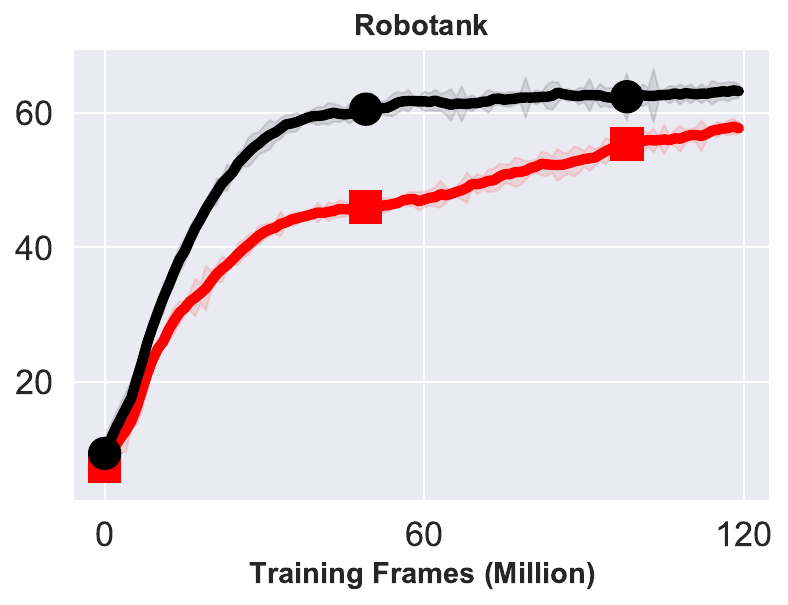}  
\end{subfigure}%
~ 
\begin{subfigure}[t]{ 0.24\textwidth} 
\centering 
\includegraphics[width=\textwidth]{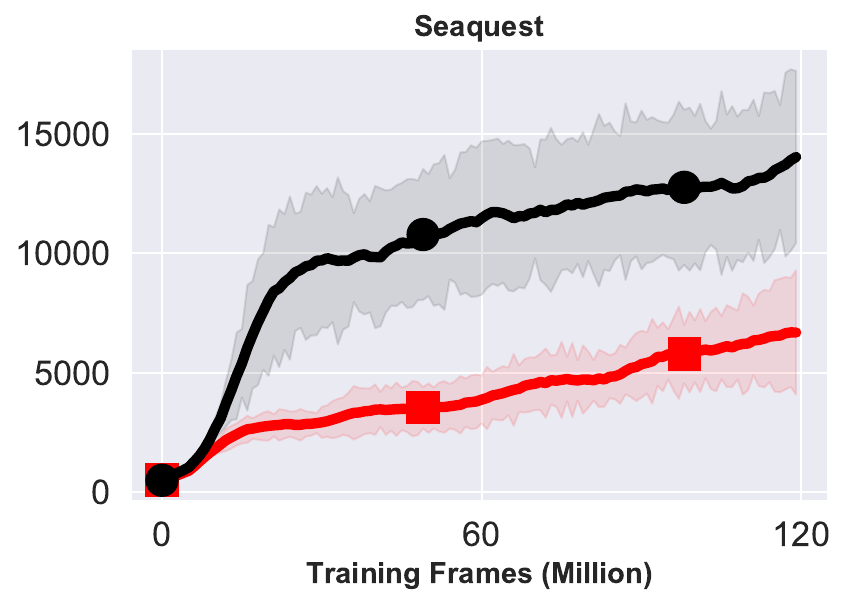} 
\end{subfigure}
\vspace{-.25cm}

\begin{subfigure}[t]{0.24\textwidth} 
\centering 
\includegraphics[width=\textwidth]{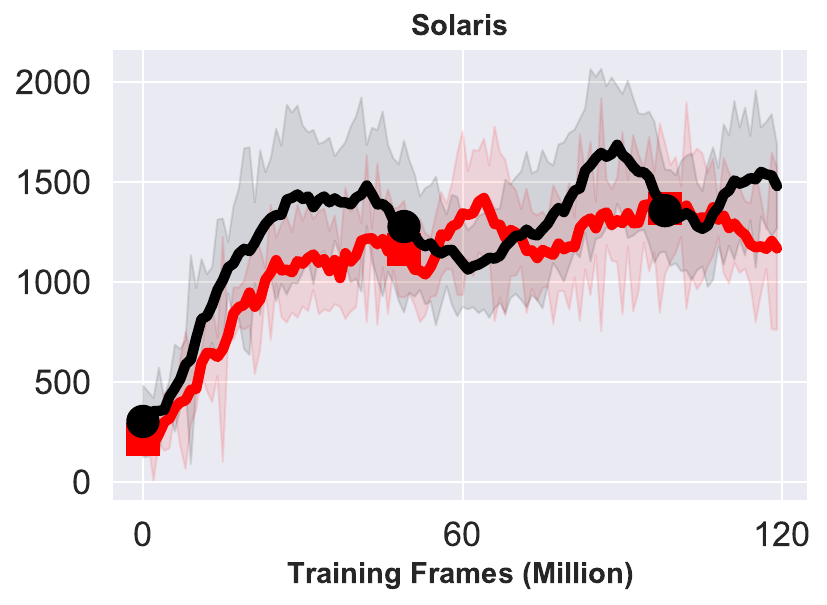} 
\end{subfigure}%
~ 
\begin{subfigure}[t]{ 0.24\textwidth} 
\centering 
\includegraphics[width=\textwidth]{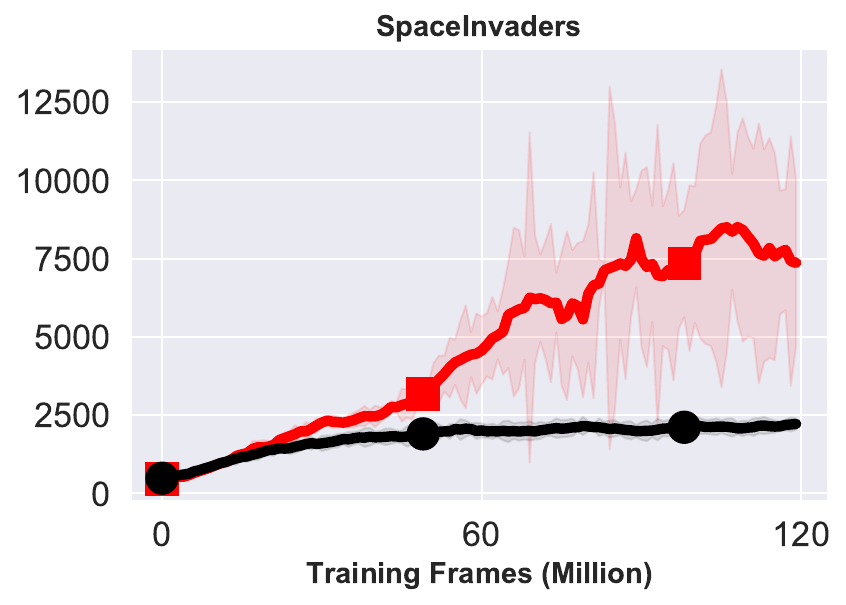} 
\end{subfigure}%
~ 
\begin{subfigure}[t]{ 0.24\textwidth} 
\centering 
\includegraphics[width=\textwidth]{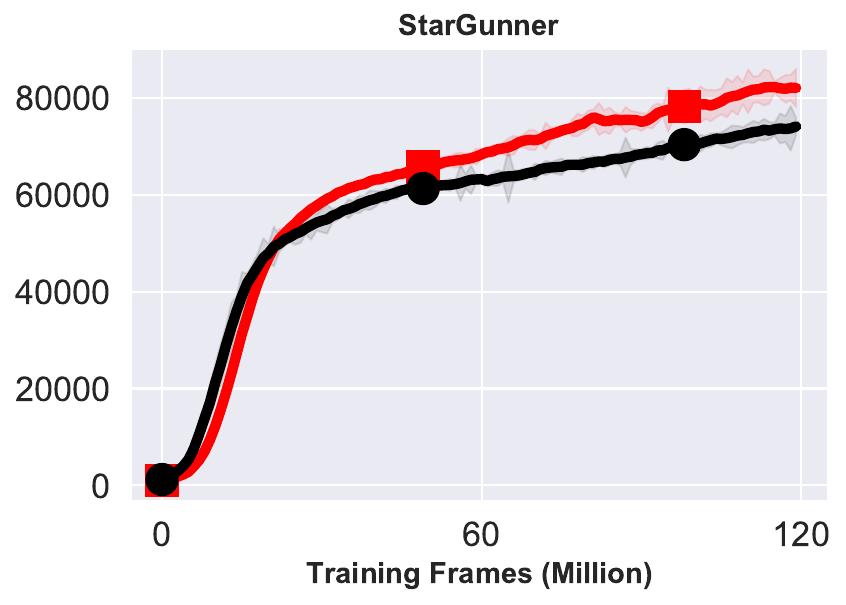}  
\end{subfigure}%
~ 
\begin{subfigure}[t]{ 0.24\textwidth} 
\centering 
\includegraphics[width=\textwidth]{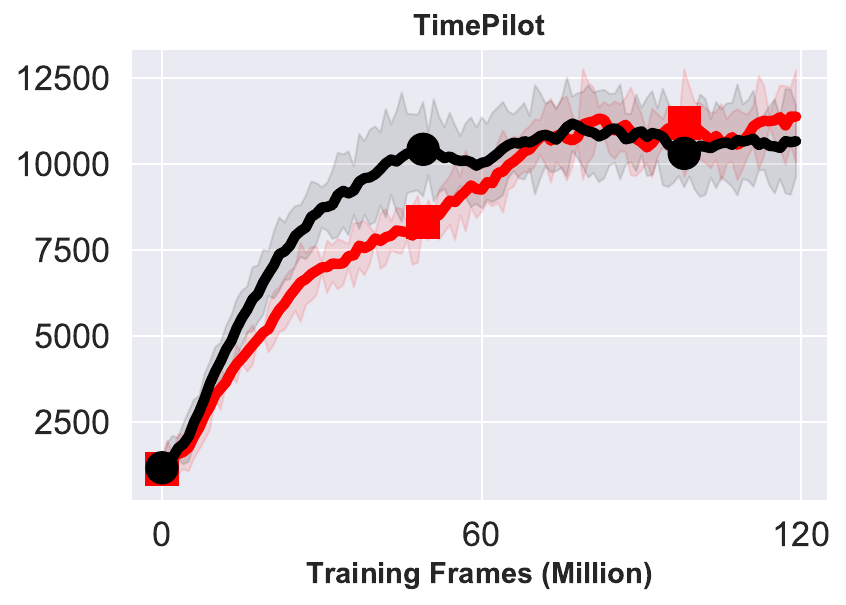} 
\end{subfigure}
\vspace{-.25cm}

\begin{subfigure}[t]{0.24\textwidth} 
\centering 
\includegraphics[width=\textwidth]{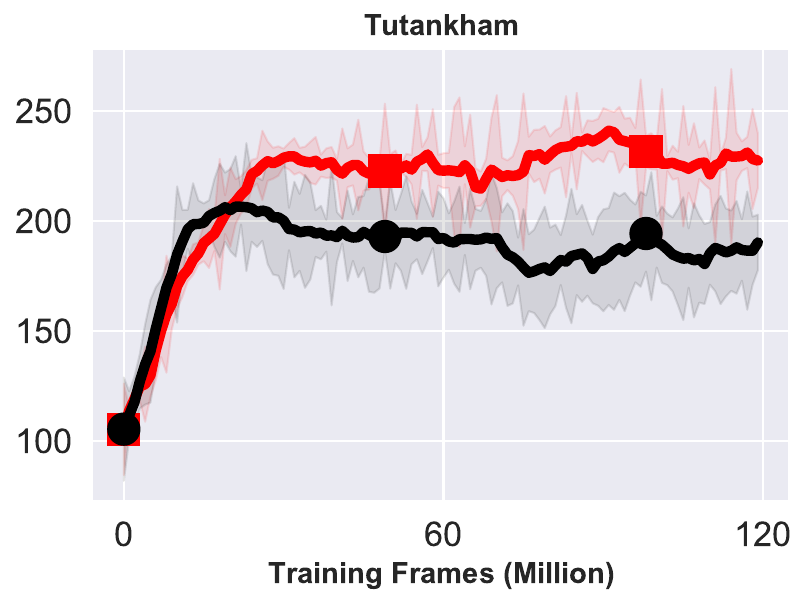} 
\end{subfigure}%
~ 
\begin{subfigure}[t]{ 0.24\textwidth} 
\centering 
\includegraphics[width=\textwidth]{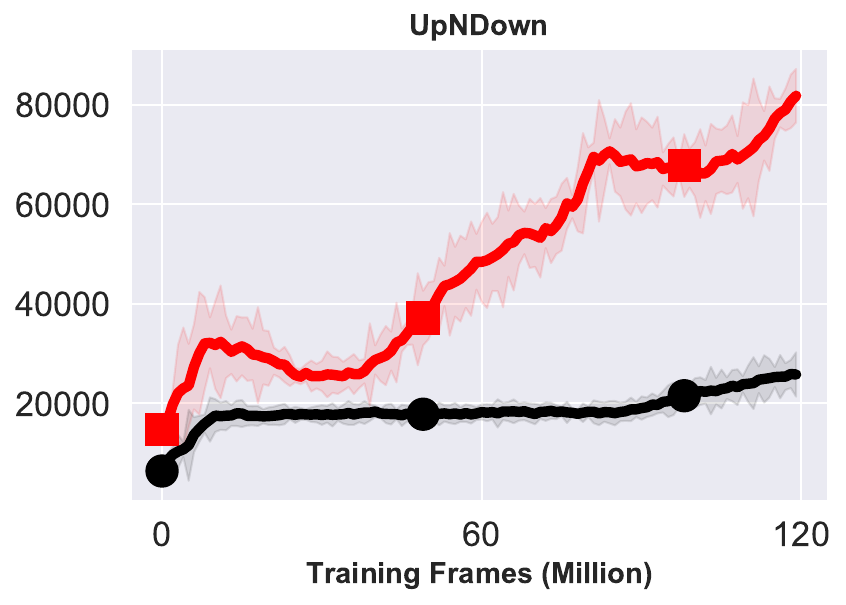} 
\end{subfigure}%
~ 
\begin{subfigure}[t]{ 0.24\textwidth} 
\centering 
\includegraphics[width=\textwidth]{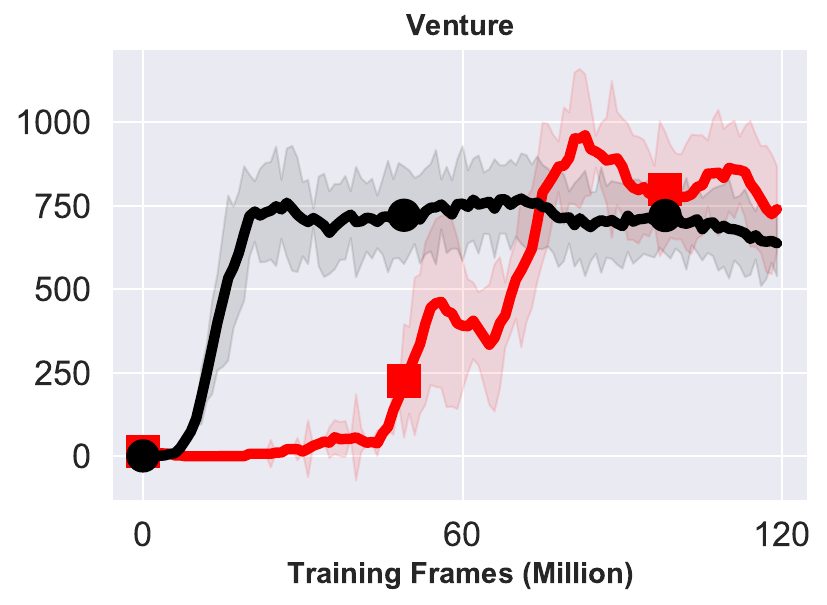}  
\end{subfigure}%
~ 
\begin{subfigure}[t]{ 0.24\textwidth} 
\centering 
\includegraphics[width=\textwidth]{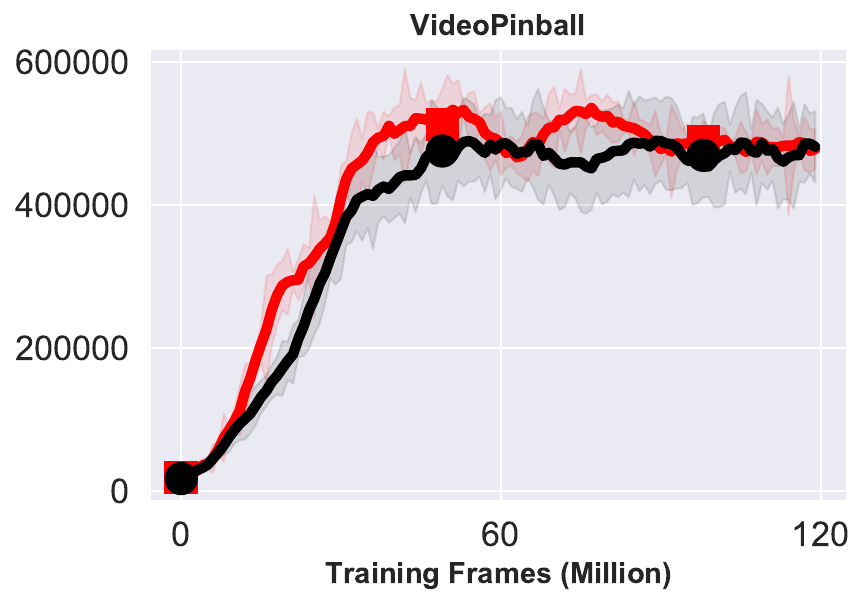} 
\end{subfigure}
\vspace{-.25cm}

\begin{subfigure}[t]{0.24\textwidth} 
\centering 
\includegraphics[width=\textwidth]{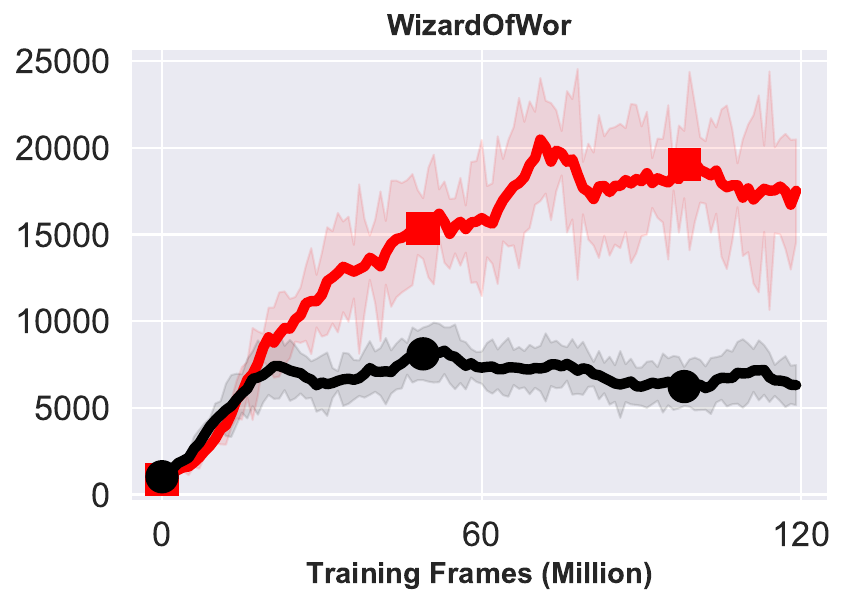} 
\end{subfigure}%
~ 
\begin{subfigure}[t]{ 0.24\textwidth} 
\centering 
\includegraphics[width=\textwidth]{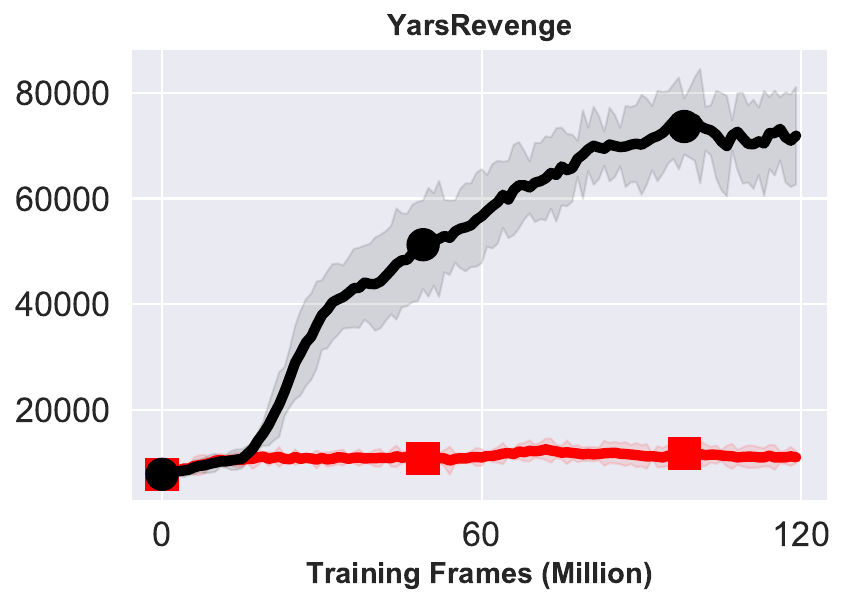} 
\end{subfigure}%
~ 
\begin{subfigure}[t]{ 0.24\textwidth} 
\centering 
\includegraphics[width=\textwidth]{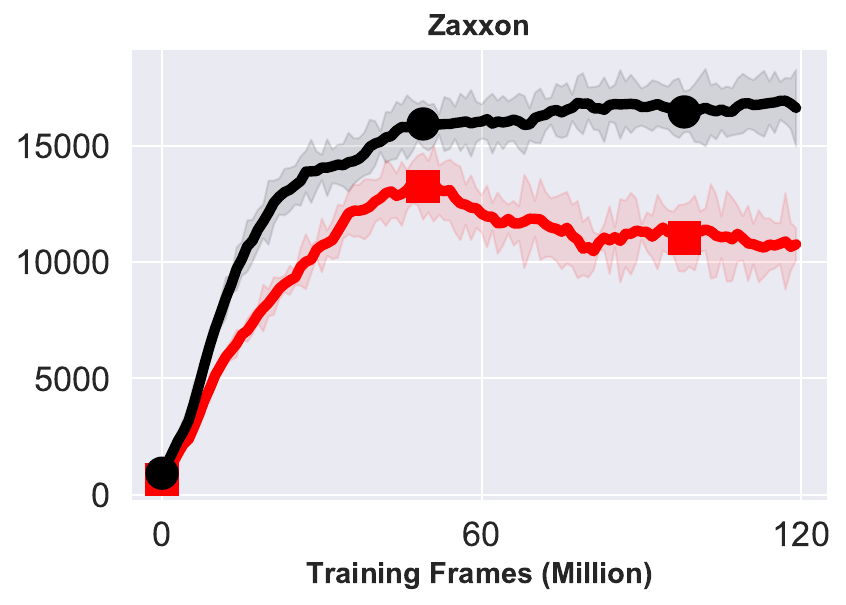}  
\end{subfigure}%
\caption{ Learning curves for 55 games (Part II).}
\end{figure}


\end{document}